\documentclass{article} 

\usepackage[numbers]{natbib}

\usepackage[final]{nips_2017}
\usepackage[utf8]{inputenc} 
\usepackage[T1]{fontenc}    
\usepackage{hyperref}       
\usepackage{url}            
\usepackage{booktabs}       
\usepackage{amsfonts}       
\usepackage{nicefrac}       
\usepackage{microtype}      

\usepackage{amsmath}
\usepackage{amssymb}
\usepackage{amsthm}
\usepackage{algorithm}
\usepackage{algorithmic}
\usepackage[usenames,dvipsnames]{pstricks}
\usepackage{pst-grad} 
\usepackage{pst-plot} 
\usepackage{dsfont}
\usepackage{commath}
\usepackage{xspace}
\usepackage{color}
\usepackage{times}
\usepackage{enumitem}

\newtheorem{theorem}{Theorem}
\newtheorem{lemma}{Lemma}
\newtheorem{claim}{Claim}
\newtheorem{corollary}{Corollary}
\newtheorem{definition}{Definition}

\renewcommand\cbr[1]{\left\{#1\right\}}
\def\S{\mathbb{S}}
\def\R{\mathbb{R}}
\def\E{\mathbb{E}}
\def\P{\mathbb{P}}
\def\Q{\mathbb{Q}}
\def\one{\mathds{1}}
\def\calX{\mathcal{X}}
\def\calY{\mathcal{Y}}
\def\calH{\mathcal{H}}
\def\calO{\mathcal{O}}
\def\calA{\mathcal{A}}
\def\calW{\mathcal{W}}
\def\calV{\mathcal{V}}

\def\poly{\text{poly}}
\def\superpoly{\text{superpoly}}
\def\d{\text{d}}
\DeclareMathOperator{\B}{B}

\DeclareMathOperator{\sign}{sign}
\DeclareMathOperator{\geom}{Geometric}
\DeclareMathOperator{\ber}{Bernoulli}
\DeclareMathOperator\err{err}

\newcommand\AP{\ensuremath{\operatorname{\textsc{Active-Perceptron}}}\xspace}
\newcommand\MP{\ensuremath{\operatorname{\textsc{Modified-Perceptron}}}\xspace}
\newcommand\PP{\ensuremath{\operatorname{\textsc{Passive-Perceptron}}}\xspace}
\newcommand\PMP{\ensuremath{\operatorname{\textsc{Passive-Modified-Perceptron}}}\xspace}

\newenvironment{customthm}[1]
  {\innercustomthm}
  {\endinnercustomthm}

\newcommand\hide[1]{}




 \author{
 Songbai Yan\\
 UC San Diego\\
 La Jolla, CA\\
 \texttt{yansongbai@ucsd.edu}
 \And
 Chicheng Zhang\thanks{Work done while at UC San Diego.}\\
 Microsoft Research\\
 New York, NY \\
\texttt{chicheng.zhang@microsoft.com}\\
 }

\title{Revisiting Perceptron: \\Efficient and Label-Optimal Learning of Halfspaces}

\begin{document}

\maketitle

\begin{abstract}
It has been a long-standing problem to efficiently learn a halfspace using as few labels as possible in the presence of noise. In this work, we propose an efficient Perceptron-based algorithm for actively learning homogeneous halfspaces under the uniform distribution over the unit sphere. Under the bounded noise condition~\cite{MN06}, where each label is flipped with probability at most $\eta < \frac 1 2$, our algorithm achieves a near-optimal label complexity of $\tilde{O}\left(\frac{d}{(1-2\eta)^2}\ln\frac{1}{\epsilon}\right)$\footnote{We use $\tilde{O}(f(\cdot)) := O(f(\cdot)\ln f(\cdot))$, and $\tilde{\Omega}(f(\cdot)) := \Omega(f(\cdot) / \ln f(\cdot))$.
We say $f(\cdot) = \tilde{\Theta}(g(\cdot))$ if $f(\cdot) = \tilde{O}(g(\cdot))$ and $f(\cdot) = \tilde{\Omega}\left(g(\cdot)\right)$} in time $\tilde{O}\left(\frac{d^2}{\epsilon(1-2\eta)^3}\right)$.
Under the adversarial noise condition~\cite{ABL14, KLS09, KKMS08}, where at most a $\tilde \Omega(\epsilon)$ fraction of labels can be flipped, our algorithm achieves a near-optimal
label complexity of
$\tilde{O}\left(d\ln\frac{1}{\epsilon}\right)$ in time $\tilde{O}\left(\frac{d^2}{\epsilon}\right)$. Furthermore, we show
that our active learning algorithm can be converted to an efficient passive learning algorithm that has near-optimal sample complexities
with respect to $\epsilon$ and $d$.
\end{abstract}


\section{Introduction}

We study the problem of designing efficient noise-tolerant algorithms for
actively learning homogeneous halfspaces in the streaming setting.
We are given access to a data distribution from which we can draw unlabeled examples, and a noisy labeling oracle
$\calO$ that we can query for labels. The goal is to find a computationally efficient algorithm to learn a halfspace that best classifies the data while making as few queries to the labeling oracle as possible.


Active learning arises naturally in many machine learning applications where unlabeled examples are abundant and cheap, but labeling requires human effort and is expensive. For those applications, one natural question is whether we can learn an accurate classifier using as few labels as possible. Active learning addresses this question by allowing the learning algorithm to sequentially select examples to query for labels, and avoid requesting labels which are less informative, or can be inferred from previously-observed examples.

There has been a large body of work on the theory of active learning,
showing sharp distribution-dependent label complexity bounds~\citep{CAL94, BBL09, H07, DHM07, H09, K10, ZC14, HAHLS15}.
However, most of these general active learning algorithms rely on solving empirical risk minimization problems, which are computationally hard in the presence of noise~\citep{ABSZ93}.

On the other hand, existing computationally efficient algorithms for learning halfspaces~\citep{BFKV98, DV04, KKMS08, KLS09, ABL14, D15, ABHU15, ABHZ16} are not optimal in terms of label requirements.
These algorithms have different degrees of noise tolerance (e.g. adversarial noise~\cite{ABL14},
malicious noise~\cite{KL93}, random classification noise~\cite{AL88}, bounded noise~\cite{MN06}, etc), and run in time
polynomial
 in $\frac{1}{\epsilon}$ and $d$. Some of them naturally exploit the utility of active learning~\cite{ABL14, ABHU15, ABHZ16},
but they do not achieve the sharpest label complexity bounds in contrast to those computationally-inefficient active learning algorithms~\cite{BBZ07,BL13,ZC14}.

Therefore, a natural question is: is there any active learning halfspace algorithm
that is computationally efficient, and has a minimum
 label requirement? This has been posed as an open problem in~\cite{M06}.
 In the realizable setting, ~\cite{DKM05, BBZ07, BL13, TD17} give efficient algorithms that have optimal label complexity of $\tilde{O}(d\ln\frac1\epsilon)$ under some distributional assumptions. However, the challenge still remains open in the nonrealizable setting. It has been shown that learning halfspaces with agnostic noise even under Gaussian unlabeled distribution is hard \cite{KK14}. Nonetheless, we give an affirmative answer to this question under two moderate noise settings: bounded noise and adversarial noise.


\subsection{Our Results}
We propose a Perceptron-based algorithm, \AP, for actively learning homogeneous halfspaces under the uniform distribution over the unit sphere. It works under two noise settings: bounded noise and adversarial noise. Our work answers an open question by~\cite{DKM05} on whether Perceptron-based active learning algorithms can be modified to tolerate label noise.

In the \emph{$\eta$-bounded noise setting} (also known as the Massart noise model~\citep{MN06}), the label of an example $x\in\R^d$ is generated by $\sign(u \cdot x)$ for some underlying halfspace $u$, and flipped with probability $\eta(x) \leq \eta < \frac 1 2$. Our algorithm runs in time $\tilde{O}\del{\frac{d^2}{(1-2\eta)^3 \epsilon} }$, and requires
$\tilde{O}\del{\frac{d}{(1-2\eta)^2} \cdot \ln\frac{1}{\epsilon}}$ labels. We show that this label complexity is \emph{nearly optimal} by providing an almost matching information-theoretic lower bound of $\Omega\left(\frac{d}{(1-2\eta)^{2}} \cdot \ln\frac1{\epsilon} \right)$. Our time and label complexities substantially improve over the state of the art result of~\cite{ABHZ16}, which runs in time
$\tilde O(d^{O(\frac 1 {(1-2\eta)^4})} \frac1\epsilon)$ and requires $\tilde O(d^{O(\frac 1 {(1-2\eta)^4})} \ln\frac1\epsilon)$ labels.

Our main theorem on learning under bounded noise is as follows:

\begin{customthm}{\ref{thm:ap-bn}}[Informal]
Suppose the labeling oracle $\calO$ satisfies the $\eta$-bounded noise condition with respect to $u$,
then for \AP, with probability at least $1-\delta$:
(1) The output halfspace $v$ is such that $\P[\sign(v \cdot X) \neq \sign(u \cdot X)] \leq \epsilon$;
(2) The number of label queries to oracle $\calO$ is at most $\tilde{O}\del{\frac{d}{(1-2\eta)^2} \cdot \ln\frac{1}{\epsilon} }$;
(3) The number of unlabeled examples drawn is at most $\tilde{O}\del{\frac{d}{(1-2\eta)^3\epsilon}}$;
(4) The algorithm runs in time $\tilde{O}\del{\frac{d^2}{(1-2\eta)^3\epsilon}}$.
\end{customthm}

In addition, we show that our algorithm also works in a more challenging setting,
the \emph{$\nu$-adversarial noise setting}~\cite{ABL14,KKMS08,KLS09}.\footnote{Note that the adversarial noise model is not the same as that in online learning~\cite{CBL06}, where each example can be chosen adversarially.} In this setting, the examples still come iid from a distribution, but the assumption on the labels is just that $\P[\sign(u \cdot X) \neq Y]\leq \nu$ for some halfspace $u$. Under this assumption, the Bayes classifier may not be a halfspace.
We show that our algorithm achieves an error of $\epsilon$ while tolerating a noise level of
$\nu = \Omega\left(\frac{\epsilon}{\ln \frac d \delta  + \ln\ln\frac1\epsilon}\right)$.
It runs in time $\tilde{O}\del{\frac {d^2} \epsilon}$,
and requires only $\tilde{O}\del{d \cdot \ln\frac{1}{\epsilon} }$ labels which is near-optimal.
\AP has a label complexity bound that matches the state of the art result of~\cite{HKY15}\footnote{The label complexity bound is implicit in~\cite{HKY15} by a refined analysis of the algorithm of~\cite{ABL14} (See their Lemma 8 for details).}, while having a lower running time.

Our main theorem on learning under adversarial noise is as follows:

\begin{customthm}{\ref{thm:ap-an}}[Informal]
Suppose the labeling oracle $\calO$ satisfies the $\nu$-adversarial noise condition with respect to $u$, where $\nu < \Theta(\frac{\epsilon}{\ln \frac d \delta  + \ln\ln\frac1\epsilon})$.
Then for \AP, with probability at least $1-\delta$:
(1) The output halfspace $v$ is such that $\P[\sign(v \cdot X) \neq \sign(u \cdot X)] \leq \epsilon$;
(2) The number of label queries to oracle $\calO$ is at most $\tilde{O}\del{d \cdot \ln\frac{1}{\epsilon} }$;
(3) The number of unlabeled examples drawn is at most $\tilde{O}\del{\frac d \epsilon}$;
(4) The algorithm runs in time $\tilde{O}\del{\frac {d^2} \epsilon}$.
\end{customthm}

Throughout the paper, \AP is shown to work if the unlabeled examples are drawn uniformly from the unit sphere. The algorithm and analysis can be easily generalized to any spherical symmetrical distributions, for example, isotropic Gaussian distributions. They can also be generalized to distributions whose densities with respect to uniform distribution are bounded away from 0. 

In addition, we show in Section~\ref{sec:passive} that \AP can be converted to a passive learning algorithm, \PP, that has near optimal sample complexities with respect to $\epsilon$ and $d$ under the two noise settings. We defer the discussion to the end of the paper.

\begin{table}
\centering
\caption{A comparison of algorithms for active learning of halfspaces under the uniform distribution, in the $\eta$-bounded noise model.}
\label{tab:comp-bn}
\begin{tabular}{lll}
\toprule
Algorithm & Label Complexity & Time Complexity \\
\midrule
\cite{BBZ07,BL13,ZC14} & $\tilde{O}(\frac{d}{(1-2\eta)^2}\ln\frac1\epsilon)$ & $\superpoly(d,\frac1\epsilon)$ \footnotemark\\
\cite{ABHZ16} & $\tilde{O}(d^{O(\frac 1 {(1-2\eta)^4})}\cdot\ln\frac1\epsilon)$ & $\tilde{O}(d^{O(\frac 1 {(1-2\eta)^4})}\cdot\frac1\epsilon)$ \\
Our Work & $\tilde{O}(\frac{d}{(1-2\eta)^2}\ln\frac1\epsilon)$ & $\tilde{O}\del{\frac{d^2}{(1-2\eta)^3}\frac1\epsilon}$\\
\bottomrule
\end{tabular}
\end{table}
\footnotetext{The algorithm needs to minimize 0-1 loss, the best known method for which requires superpolynomial time.}

\begin{table}
\centering
\caption{A comparison of algorithms for active learning of halfspaces under the uniform distribution, in the $\nu$-adversarial noise model.}
\label{tab:comp-an}
\begin{tabular}{llll}
\toprule
Algorithm & Noise Tolerance & Label Complexity & Time Complexity \\
\midrule
\cite{ZC14} & $\nu=\Omega(\epsilon)$  & $\tilde{O}( d \ln\frac1\epsilon )$ & $\superpoly(d,\frac1\epsilon)$\\
\cite{HKY15} & $\nu=\Omega(\epsilon)$ & $\tilde{O}( d \ln\frac1\epsilon )$ & $\poly(d,\frac1\epsilon)$ \\
Our Work & $\nu=\Omega(\frac \epsilon {\ln d + \ln\ln\frac1\epsilon})$ & $\tilde{O}(d \ln \frac1\epsilon) $ & $\tilde{O}\del{d^2 \cdot \frac1\epsilon}$\\
\bottomrule
\end{tabular}
\end{table}

\section{Related Work}
\paragraph{Active Learning.} The recent decades have seen much success in both theory and practice of active learning; see the excellent surveys by~\cite{S10, H14, D11}. On the theory side, many label-efficient active learning algorithms have been proposed and analyzed. An incomplete list includes~\cite{CAL94, BBL09, H07, DHM07, H09, K10, ZC14, HAHLS15}. Most algorithms relies on solving empirical risk minimization problems, which are computationally hard in the presence of noise~\citep{ABSZ93}.

\paragraph{Computational Hardness of Learning Halfspaces.} Efficient learning of halfspaces is one of the central problems in machine learning~\cite{CS00}. In the realizable case, it is well known that linear programming will find a consistent hypothesis over data efficiently. In the nonrealizable setting, however, the problem is much more challenging.

A series of papers have shown the hardness of learning halfspaces with agnostic noise~\citep{ABSZ93, FGKP06, GR09, KK14, D15}. The state of the art result~\citep{D15} shows that under standard complexity-theoretic assumptions, there exists a data distribution, such that the best linear classifier has error $o(1)$, but no polynomial time algorithms can achieve an error at most $\frac{1}{2} - \frac{1}{d^c}$ for every $c>0$, even with improper learning. ~\cite{KK14} shows that under standard assumptions, 
even if the unlabeled distribution is Gaussian, any agnostic halfspace learning algorithm must run in time $(\frac 1 \epsilon)^{\Omega(\ln d)}$ to achieve an excess error of $\epsilon$. These results indicate that, to have nontrivial guarantees on learning halfspaces with noise in polynomial time, one has to make additional assumptions on the data distribution over instances and labels.

\paragraph{Efficient Active Learning of Halfspaces.} Despite considerable efforts, there are only a few halfspace learning algorithms that are both computationally-efficient and label-efficient even under the uniform distribution. In the realizable setting, \cite{DKM05,BBZ07, BL13} propose computationally efficient active learning algorithms which have an optimal label complexity of $\tilde{O}(d\ln \frac{1}{\epsilon})$.


Since it is believed to be hard for learning halfspaces in the general agnostic setting, it is natural to consider algorithms that work under more moderate noise conditions. Under the bounded noise setting~\cite{MN06}, the only known algorithms that are both label-efficient and computationally-efficient are \cite{ABHU15, ABHZ16}. \cite{ABHU15} uses a margin-based framework which queries the labels of examples near the decision boundary. To achieve computational efficiency, it adaptively chooses a sequence of hinge loss minimization problems to optimize as opposed to directly optimizing the 0-1 loss. It works only when the label flipping probability upper bound $\eta$ is small ($\eta \leq 1.8 \times 10^{-6}$). \cite{ABHZ16} improves over \cite{ABHU15} by adapting a polynomial regression procedure into the margin-based framework. It works for any $\eta < 1/2$, but its label complexity is $O(d^{O(\frac 1 {(1-2\eta)^4})} \ln\frac1\epsilon)$, which is far worse than the information-theoretic lower bound $\Omega(\frac{d}{(1-2\eta)^2}\ln\frac{1}{\epsilon})$.
Recently \cite{CHK17} gives an efficient algorithm with a near-optimal label complexity under the membership query model where the learner can query on synthesized points. In contrast, in our stream-based model, the learner can only query on points drawn from the data distribution. We note that learning in the stream-based model is harder than in the membership query model, and it is unclear how to transform the DC algorithm in \cite{CHK17} into a computationally efficient stream-based active learning algorithm.

Under the more challenging $\nu$-adversarial noise setting, \cite{ABL14} proposes a margin-based algorithm that reduces the problem to a sequence of hinge loss minimization problems. Their algorithm achieves an error of $\epsilon$ in polynomial time when $\nu = \Omega(\epsilon)$, but requires $\tilde{O}(d^2 \ln\frac1\epsilon)$ labels. Later, \cite{HKY15} performs a refined analysis to achieve a near-optimal label complexity of $\tilde{O}(d\ln \frac{1}{\epsilon})$, but the time complexity of the algorithm is still an unspecified high order polynomial.

Tables~\ref{tab:comp-bn} and~\ref{tab:comp-an} present comparisons between our results and
results most closely related to ours in the literature. Due to space limitations, discussions of additional related work are deferred to Appendix~\ref{sec:arw}.

\section{Definitions and Settings}
\label{sec:setting}

We consider learning homogeneous halfspaces under uniform distribution. The instance space $\calX$ is the unit sphere in $\R^d$, which we denote by $\S^{d-1}:=\left\{ x\in \R^d: \|x\| = 1 \right\}$. We assume $d \geq 3$ throughout this paper. The label space $\calY = \{+1, -1\}$. We assume all data points $(x,y)$ are drawn i.i.d. from an underlying distribution $D$ over $\calX \times \calY$. We denote by $D_\calX$ the marginal of $D$ over $\calX$ (which is uniform over $\S^{d-1}$), and $D_{Y \mid X}$ the conditional distribution of $Y$ given $X$.
Our algorithm is allowed to draw unlabeled examples $x \in \calX$ from $D_\calX$, and to make queries to a labeling oracle $\calO$ for labels. Upon query $x$, $\calO$ returns a label $y$ drawn from $D_{Y \mid X=x}$.
The hypothesis class of interest is the set of homogeneous halfspaces $\calH := \left\{h_w(x) = \sign(w \cdot x) \mid w \in \S^{d-1} \right\}$. For any hypothesis $h \in \calH$, we define its error rate $\err(h) := \P_D[h(X) \neq Y]$. We will drop the subscript $D$ in $\P_D$ when it is clear from the context. Given a dataset $S = \cbr{(X_1, Y_1), \ldots, (X_m, Y_m)}$, we define the empirical error rate of $h$ over $S$ as $\err_S(h) := \frac1m \sum_{i=1}^m \one\cbr{h(x_i) \neq y_i}$.



\begin{definition}[Bounded Noise~\cite{MN06}]
We say that the labeling oracle $\calO$ satisfies the \emph{$\eta$-bounded noise condition} for some $\eta \in [0,1/2)$ with respect to $u$, if for any $x$, $\P[Y \neq \sign(u \cdot x)\mid X = x] \leq \eta$.
\end{definition}

It can be seen that under $\eta$-bounded noise condition, $h_u$ is the Bayes classifier.

\begin{definition}[Adversarial Noise~\cite{ABL14}]
We say that the labeling oracle $\calO$ satisfies the \emph{$\nu$-adversarial noise condition} for some $\nu \in [0,1]$  with respect to $u$, if $\P[Y \neq \sign(u \cdot X)] \leq \nu$.
\end{definition}

For two unit vectors $v_1, v_2$, denote by $\theta(v_1, v_2) = \arccos(v_1 \cdot v_2)$ the angle between them. The following lemma gives relationships between errors and angles (see also Lemma 1 in \cite{ABHZ16}).


\begin{lemma}
For any $v_1,v_2\in\S^{d-1}$, $\left| \err(h_{v_1})-\err(h_{v_2})\right| \leq \P\left[h_{v_1}(X) \neq h_{v_2}(X)\right] = \frac{\theta(v_1, v_2)}{\pi}$.

Additionally, if the labeling oracle satisfies the $\eta$-bounded noise condition with respect to $u$, then for any vector $v$, $\left| \err(h_v)-\err(h_u)\right| \geq (1-2\eta) \P\left[h_v(X) \neq h_u(X)\right] = \frac{1-2\eta}{\pi}\theta(v, u)$.
\label{lem:triangle}
\end{lemma}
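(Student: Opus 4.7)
The plan is to handle the two parts separately, and in each part to do a pointwise (conditional on $X=x$) comparison of the indicator random variables defining the errors, then integrate.

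For the first part, I would first establish the geometric identity $\P[h_{v_1}(X) \neq h_{v_2}(X)] = \theta(v_1,v_2)/\pi$. Since the marginal $D_\calX$ is uniform on $\S^{d-1}$, the set $\{x : \sign(v_1 \cdot x) \neq \sign(v_2 \cdot x)\}$ is a spherical ``lune'' whose measure depends only on the 2D subspace $\mathrm{span}(v_1,v_2)$. Projecting onto that plane (using the rotational invariance of the uniform measure on the sphere, plus the fact that the disagreement region is a cylinder over a planar wedge), the disagreement region corresponds to an angular wedge of total opening $2\theta(v_1,v_2)$ out of $2\pi$, giving the stated probability. For the inequality, I would write
\[
\err(h_{v_1}) - \err(h_{v_2}) = \E\bigl[\one\{h_{v_1}(X)\neq Y\} - \one\{h_{v_2}(X)\neq Y\}\bigr],
\]
observe that on the event $\{h_{v_1}(X)=h_{v_2}(X)\}$ the integrand vanishes, and bound the integrand by $1$ in absolute value on the complementary event. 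Taking absolute values and applying Jensen yields $|\err(h_{v_1})-\err(h_{v_2})| \leq \P[h_{v_1}(X)\neq h_{v_2}(X)]$.

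For the second part, I would again condition on $X=x$. On $\{h_v(x)=h_u(x)\}$ the conditional contribution to $\err(h_v)-\err(h_u)$ is zero. On $\{h_v(x)\neq h_u(x)\}$, using the bounded noise assumption $\P[Y\neq h_u(x)\mid X=x] = \eta(x) \leq \eta < 1/2$, the conditional contribution is
\[
\P[Y\neq h_v(x)\mid X=x] - \P[Y\neq h_u(x)\mid X=x] = (1-\eta(x)) - \eta(x) = 1-2\eta(x) \geq 1-2\eta.
\]
Integrating over $x$ gives $\err(h_v)-\err(h_u) \geq (1-2\eta)\,\P[h_v(X)\neq h_u(X)]$, which is nonnegative (consistent with $h_u$ being Bayes-optimal), so taking absolute values is harmless. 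Combining with the first part's equality finishes the lemma.

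None of the steps are deep; the only mild care is in the 2D reduction for the angular identity (ensuring the disagreement region really is determined by the projection onto $\mathrm{span}(v_1,v_2)$, and that the induced measure on that plane is rotationally symmetric). That is essentially the main obstacle, and it is standard.
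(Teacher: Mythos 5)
Your proof is correct, and it is the standard argument; the paper itself does not prove this lemma but simply defers to Lemma 1 of \cite{ABHZ16}, whose proof proceeds exactly as you describe (the pointwise comparison on the disagreement event for both inequalities, and the 2D rotational-invariance reduction for the $\theta/\pi$ identity). The only point worth being explicit about, which you already flag, is that on the disagreement region the bounded-noise condition gives a conditional excess error of exactly $1-2\eta(x)\ge 1-2\eta>0$, which simultaneously justifies dropping the absolute value.
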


Given access to unlabeled examples drawn from $D_\calX$ and a labeling oracle $\calO$, our goal is to find a polynomial time algorithm $\calA$ such that with probability at least $1-\delta$, $\calA$ outputs a halfspace $h_{v} \in \calH$ with $\P[\sign(v \cdot X) \neq \sign(u \cdot X)] \leq \epsilon$ for some target accuracy $\epsilon$ and confidence $\delta$. (By Lemma~\ref{lem:triangle}, this guarantees that the excess error of $h_v$ is at most $\epsilon$, namely, $\err(h_v) - \err(h_u) \leq \epsilon$.) The desired algorithm should make as few queries to the labeling oracle $\calO$ as possible.

We say an algorithm $\calA$ achieves a \emph{label complexity} of $\Lambda(\epsilon, \delta)$, if for any target halfspace $h_u\in\calH$, with probability at least $1-\delta$, $\calA$ outputs a halfspace $h_{v} \in \calH$ such that $\err(h_v)\leq \err(h_u) + \epsilon$, and requests at most $\Lambda(\epsilon, \delta)$ labels from oracle $\calO$.

\section{Main Algorithm}


Our main algorithm, \AP (Algorithm~\ref{alg:activeperceptron}), works in epochs.
It works under the bounded and the adversarial noise
models, if its sample schedule $\cbr{m_k}$ and band width $\cbr{b_k}$ are set appropriately with respect to each noise model.
At the beginning of each epoch $k$,
it assumes an upper bound of $\frac{\pi}{2^k}$ on $\theta(v_{k-1}, u)$, the angle between
current iterate $v_{k-1}$ and the underlying halfspace $u$. As we will see, this can be shown to hold
 with high probability inductively. Then, it calls procedure
\MP (Algorithm~\ref{alg:modperceptron}) to find an new iterate
$v_k$, which can be shown to have an angle with $u$ at most $\frac{\pi}{2^{k+1}}$ with
high probability.
The algorithm ends when a total of $k_0 = \lceil \log_2\frac1\epsilon \rceil$
epochs have passed.

For simplicity, we assume for the rest of the paper that the angle between
the initial halfspace $v_0$ and the underlying halfspace $u$ is acute,
that is, $\theta(v_0, u) \leq \frac{\pi}{2}$; Appendix~\ref{sec:acute}
shows that this assumption can be removed with a
constant overhead in terms of label and time complexities.

\begin{algorithm}[H]
\caption{\AP}
\begin{algorithmic}[1]
\REQUIRE Labeling oracle $\calO$, initial halfspace $v_0$, target error $\epsilon$, confidence $\delta$, sample schedule $\cbr{m_k}$, band width $\cbr{b_k}$.
\ENSURE learned halfspace $v$.
\STATE Let $k_0 = \lceil \log_2 \frac{1}{\epsilon} \rceil$.
\FOR{$k = 1,2,\ldots,k_0$}
		\STATE $v_k \leftarrow \MP(\calO, v_{k-1}, \frac{\pi}{2^k}, \frac{\delta}{k(k+1)}, m_k, b_k)$.
\ENDFOR
\RETURN$v_{k_0}$.
\end{algorithmic}
\label{alg:activeperceptron}
\end{algorithm}

Procedure \MP (Algorithm~\ref{alg:modperceptron}) is the core component of \AP. It sequentially
performs a modified Perceptron
update rule on the selected new examples $(x_t, y_t)$~\citep{MS54, BFKV98, DKM05}:
\begin{equation}
  w_{t+1} \gets w_t - 2\one\cbr{y_t w_t \cdot x_t < 0} (w_t \cdot x_t) \cdot x_t
  \label{eqn:modperceptron-initial}
\end{equation}

Define $\theta_t := \theta(w_t, u)$. Update rule~\eqref{eqn:modperceptron-initial} implies the following relationship between $\theta_{t+1}$ and $\theta_t$
(See Lemma~\ref{lem:coschange} in Appendix~\ref{sec:progmeasure} for its proof):
\begin{equation}
  \cos\theta_{t+1} - \cos\theta_t = - 2\one\cbr{y_t w_t \cdot x_t < 0} (w_t \cdot x_t) \cdot (u \cdot x_t)
  \label{eqn:modperceptrontheta}
\end{equation}
This motivates us to take $\cos \theta_t$ as our measure of progress; we would like to drive $\cos\theta_t$ up to $1$(so that $\theta_t$ goes down to $0$) as fast as possible.

To this end, \MP samples new points $x_t$ under time-varying distributions $D_{\calX}|_{R_t}$ and query for their labels,
where $R_t = \cbr{x \in \S^{d-1}: \frac{b}{2} \leq w_t \cdot x \leq b }$ is a band inside
the unit sphere.
The rationale behind the choice of $R_t$ is twofold:
\begin{enumerate}
\item We set $R_t$ to have a probability mass of  $\tilde{\Omega}(\epsilon)$, so that the time complexity of rejection sampling is at most $\tilde{O}(\frac{1}{\epsilon})$ per example. Moreover, in the adversarial noise setting, we set $R_t$ large enough to dominate the noise of magnitude $\nu = \tilde{\Omega}(\epsilon)$.
\item Unlike the active Perceptron algorithm in~\cite{DKM05} or other margin-based approaches (for example~\cite{TK01, BBZ07}) where examples with small margin are queried, we query the label of the examples with a
range of margin $[\frac b 2, b]$. From a technical perspective, this ensures that $\theta_t$ decreases by a decent amount in expectation (see Lemmas~\ref{lem:delta-cos-geq-bn} and \ref{lem:delta-cos-geq-an} for details).
\end{enumerate}

Following the insight of~\cite{GCB09}, we remark that the modified Perceptron update~\eqref{eqn:modperceptron-initial} on distribution $D_{\calX}|_{R_t}$ can be alternatively viewed as performing stochastic gradient descent on a special non-convex loss function $\ell(w, (x, y)) = \min(1,\max(0, -1-\frac 2 b y w \cdot x ))$. It is an interesting open question whether optimizing this new loss function can lead to improved empirical results for learning halfspaces.

\begin{algorithm}[h]
\caption{\MP}
\begin{algorithmic}[1]
\REQUIRE Labeling oracle $\calO$, initial halfspace $w_0$, angle upper bound $\theta$, confidence $\delta$, number of iterations $m$, band width $b$.
\ENSURE Improved halfspace $w_m$.
\FOR{$t = 0,1,2,\ldots,m-1$}
  \STATE Define region $R_t = \cbr{x \in \S^{d-1}: \frac{b}{2} \leq w_t \cdot x \leq b }$.
	\STATE Rejection sample $x_t \sim D_\calX|_{R_t}$. In other words, draw $x_t$ from $D_\calX$ until
  $x_t$ is in $R_t$. Query $\calO$ for its label $y_t$.
	\STATE $w_{t+1} \leftarrow w_t - 2\one\cbr{y_t w_t \cdot x_t < 0} \cdot (w_t \cdot x_t) \cdot x_t$.
\ENDFOR
\RETURN $w_m$.
\end{algorithmic}
\label{alg:modperceptron}
\end{algorithm}

\section{Performance Guarantees}
We show that \AP works in the bounded and the adversarial noise
models, achieving computational efficiency and near-optimal label complexities. To this end, we first
give a lower bound on the label complexity under bounded noise, and then give computational and label complexity
upper bounds under the two noise conditions respectively. We defer all proofs to the Appendix.

\subsection{A Lower Bound under Bounded Noise}\label{subsec:lb}
We first present an information-theoretic lower bound on the label complexity in the bounded noise setting under uniform distribution. This extends the distribution-free lower bounds of~\cite{RR11,H14}, and
generalizes the realizable-case lower bound of~\cite{KMT93} to the bounded noise setting.
Our lower bound can also be viewed as an extension of ~\cite{WS16}'s Theorem 3; specifically it addresses the hardness under the $\alpha$-Tsybakov noise condition where $\alpha = 0$ (while~\cite{WS16}'s Theorem 3
provides lower boundes when $\alpha \in (0,1)$).

\begin{theorem}
For any $d>4$, $0\leq\eta<\frac{1}{2}$, $0<\epsilon\leq\frac{1}{4\pi}$,
$0<\delta\leq\frac{1}{4}$, for any active learning algorithm $\calA$, there is a
$u\in\S^{d-1}$, and a labeling oracle $\calO$ that satisfies $\eta$-bounded
noise condition with respect to $u$, such that if with probability at least $1-\delta$,
$\calA$ makes at most $n$ queries of labels to $\calO$ and outputs
$v\in\S^{d-1}$ such that $\P[\sign(v \cdot X) \neq \sign(u \cdot X)]\leq\epsilon$, then
$n\geq\Omega\left(\frac{d\log\frac{1}{\epsilon}}{(1-2\eta)^{2}}+\frac{\eta\log\frac{1}{\delta}}{(1-2\eta)^{2}}\right)$.
\label{thm:lb-bn}
\end{theorem}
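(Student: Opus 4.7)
The lower bound is additive in two terms, so I would establish each lower bound separately and combine them (the sum and the max agree up to a factor of two). Both arguments are information-theoretic: in each regime I fix a family of candidate halfspaces together with their $\eta$-bounded noise oracles, note that any $(\epsilon,\delta)$-learner induces a hypothesis test on that family, and then invoke Fano/Le~Cam against an upper bound on the Shannon information collected through the $n$ label queries.

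For the packing term $\Omega\!\left(d\log(1/\epsilon)/(1-2\eta)^2\right)$ I would use a multiple-hypothesis Fano argument. By a standard volumetric packing of $\S^{d-1}$, construct $\{u_1,\ldots,u_M\}\subset\S^{d-1}$ with $\log M = \Omega(d\log(1/\epsilon))$ and pairwise angular separation at least $4\pi\epsilon$; by Lemma~\ref{lem:triangle}, any output $v$ with $\P[\sign(v\cdot X)\neq \sign(u_i\cdot X)]\leq \epsilon$ has $u_i$ as its unique nearest packing point. For each $u_i$ use the oracle $\calO_i$ that, on query $x$, returns $\sign(u_i\cdot x)$ with probability $1-\eta$ and the opposite with probability $\eta$; this is a valid $\eta$-bounded noise oracle. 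An $(\epsilon,\delta)$-learner then induces a test identifying $i$ with error at most $\delta$, so Fano's inequality gives $I(i;\text{transcript}) \geq \Omega(\log M)$. By the chain rule of KL and data processing, $I \leq nK_\eta$, where $K_\eta:=(1-2\eta)\log\tfrac{1-\eta}{\eta}$ is the (pointwise) worst-case KL between the label distributions under $\calO_i$ and $\calO_j$. The inequality $\log(1+x)\leq x$ yields $K_\eta \leq (1-2\eta)^2/\eta$, and combining this KL bound with the trivial one-bit-per-query bound $n\geq \Omega(\log M)$ (which handles the small-$\eta$ regime where $K_\eta$ is large) gives $n = \Omega\!\left(\log M / (1-2\eta)^2\right)$ uniformly in $\eta\in[0,1/2)$.

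For the confidence term $\Omega\!\left(\eta\log(1/\delta)/(1-2\eta)^2\right)$ I would run a Le~Cam two-point argument. Pick $u_0,u_1\in\S^{d-1}$ with angular separation $\Theta(\epsilon)$ together with their uniformly $\eta$-noisy oracles $\calO_0,\calO_1$. Any learner with confidence $1-\delta$ against both oracles induces a binary test distinguishing $\calO_0$ from $\calO_1$ with error at most $\delta$, which by the standard data-processing lower bound forces $\mathrm{KL}(P_0\|P_1)\geq \mathrm{kl}(1-\delta,\delta) = \Omega(\log(1/\delta))$ (valid because $\delta\leq 1/4$). Using the same per-query bound $\mathrm{KL}(P_0\|P_1)\leq nK_\eta \leq n(1-2\eta)^2/\eta$ and rearranging yields $n \geq \Omega\!\left(\eta\log(1/\delta)/(1-2\eta)^2\right)$.

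The main obstacle I expect is controlling the transcript KL under the adaptivity of active learning, since the algorithm may concentrate its queries in regions where the two candidate oracles disagree most. The fix is that, regardless of the (possibly adaptive) query distribution, the per-query KL is pointwise bounded by the worst case $K_\eta$, so the chain rule still gives a total KL of at most $nK_\eta$. A secondary technical issue is that $K_\eta$ diverges as $\eta\to 0$, which would make the Fano bound trivial in the near-realizable regime; this is reconciled by combining the KL-based bound with the one-bit-per-query bound and observing that $\max\bigl((1-2\eta)^2,\eta\bigr)\geq \tfrac14$ on $[0,1/2]$, which delivers the uniform $1/(1-2\eta)^2$ scaling claimed in the theorem.
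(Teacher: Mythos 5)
Your proposal is correct and follows essentially the same route as the paper: the bound is split into the same two terms, the $\Omega\bigl(d\log\frac{1}{\epsilon}/(1-2\eta)^2\bigr)$ term is obtained via Fano's inequality over a packing of $\S^{d-1}$ with the per-query information bounded by $\min\bigl(O(1),\,O((1-2\eta)^2/\eta)\bigr)$ (your explicit case split at $\eta=1/4$ plays the same role as the paper's single bound $\min\{\ln 2,\frac{(1-2\eta)^2}{\eta(1-\eta)}\}\leq 5(1-2\eta)^2\ln 2$), and adaptivity is handled exactly as in the paper by noting the query choice carries no information about the hypothesis. The only cosmetic difference is in the confidence term, where you run the Le~Cam/KL two-point argument from scratch while the paper reduces to the ready-made coin-distinguishing lemma of Anthony--Bartlett via antipodal halfspaces $\pm e$; the two are equivalent.
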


\subsection{Bounded Noise}
We establish Theorem~\ref{thm:ap-bn} in the bounded noise setting. The theorem implies that, with
appropriate settings of input parameters, $\AP$ efficiently learns a halfspace of excess error at most $\epsilon$ with
probability at least $1-\delta$, under the assumption that $D_{\calX}$ is uniform over the unit sphere and $\calO$ has bounded noise. In addition, it queries at most $\tilde{O}(\frac{d}{(1-2\eta)^2} \ln\frac1\epsilon)$ labels. This matches the lower bound of Theorem~\ref{thm:lb-bn}, and improves over the state of the art result of~\cite{ABHZ16}, where
a label complexity of $\tilde{O}(d^{O(\frac 1 {(1-2\eta)^4})}\ln\frac1\epsilon)$ is shown using a different algorithm.

%
The proof and the precise setting of parameters ($m_k$ and $b_k$) are given in Appendix~\ref{sec:pf-main}.

\begin{theorem}[\AP under Bounded Noise]
Suppose Algorithm~\ref{alg:activeperceptron} has inputs labeling oracle $\calO$
that satisfies $\eta$-bounded noise condition with respect to halfspace $u$, initial halfspace $v_0$
such that $\theta(v_0, u) \in [0,\frac \pi 2]$,
target error $\epsilon$,
confidence $\delta$,
sample schedule $\cbr{m_k}$ where $m_k = \Theta\del{\frac{ d }{(1-2\eta)^{2}} (\ln\frac{ d }{(1-2\eta)^{2}} + \ln\frac{k}{\delta}) }$,
band width $\cbr{b_k}$ where $b_k = \Theta\del{\frac{ 2^{-k} (1-2\eta)}{\sqrt{d}\ln(km_k/\delta)}}$.
Then with probability at least $1-\delta$:
\begin{enumerate}[leftmargin=1cm]
\item The output halfspace $v$ is such that $\P[\sign(v \cdot X) \neq \sign(u \cdot X)] \leq \epsilon$.
\item The number of label queries is $O\del{\frac{d}{(1-2\eta)^2} \cdot \ln\frac{1}{\epsilon} \cdot \del{\ln\frac{d}{(1-2\eta)^2} + \ln\frac{1}{\delta} + \ln\ln\frac1\epsilon}}$.
\item The number of unlabeled examples drawn is \\$O\del{\frac{d}{(1-2\eta)^3} \cdot \del{\ln\frac{d}{(1-2\eta)^2} + \ln\frac1\delta + \ln\ln\frac{1}{\epsilon}}^2 \cdot \frac1\epsilon \ln\frac1\epsilon}$.
\item The algorithm runs in time $O\del{\frac{d^2}{(1-2\eta)^3} \cdot \del{\ln\frac{d}{(1-2\eta)^2} + \ln\frac1\delta + \ln\ln\frac{1}{\epsilon}}^2 \cdot \frac1\epsilon \ln\frac1\epsilon}$.
\end{enumerate}
\label{thm:ap-bn}
\end{theorem}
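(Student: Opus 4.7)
The plan is to prove Theorem~\ref{thm:ap-bn} by induction on the epoch index $k=1,\ldots,k_0$, maintaining the high-probability invariant that $\theta(v_k,u)\leq \pi/2^{k+1}$. The base case is handed to us by the assumption $\theta(v_0,u)\leq \pi/2$. For the inductive step, I would condition on the event $\theta(v_{k-1},u)\leq \pi/2^k$ and show that a single call $\MP(\calO, v_{k-1}, \pi/2^k, \delta/(k(k+1)), m_k, b_k)$ halves the angle with probability at least $1-\delta/(k(k+1))$. A union bound over epochs (using $\sum_k 1/(k(k+1))\leq 1$) then gives a total failure probability of at most $\delta$, establishing item~(1) with $\theta(v_{k_0},u)\leq \pi/2^{k_0+1}\leq \epsilon\cdot\pi/2$, which by Lemma~\ref{lem:triangle} yields $\P[\sign(v\cdot X)\neq\sign(u\cdot X)]\leq \epsilon$.

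The core of the argument is the analysis of one call to \MP. I would use $\cos\theta_t$ as the progress measure via the identity~\eqref{eqn:modperceptrontheta}. The key ingredient (the referenced Lemma~\ref{lem:delta-cos-geq-bn}) is a lower bound on the conditional expected drift
\[
\E\bigl[\cos\theta_{t+1}-\cos\theta_t\,\big|\,w_t\bigr]\;\geq\; c\cdot (1-2\eta)\cdot b_k^2,
\]
valid as long as $\theta_t$ is still above the target $\pi/2^{k+1}$. To prove this, I would condition on $x_t$ in the band $R_t$, integrate the update $-2\,\one\{y_t w_t\cdot x_t<0\}(w_t\cdot x_t)(u\cdot x_t)$ against the conditional label distribution (whose bias toward the true label is at least $1-2\eta$), and then integrate $x_t$ over $R_t$ under the uniform distribution on $\S^{d-1}$. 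Decomposing $x_t$ into its projections onto the span of $\{w_t,u\}$ and its orthogonal complement, and using the uniformity of $D_\calX|_{R_t}$ in the orthogonal directions, isolates a positive term of size $(1-2\eta)\cdot b_k^2$ that dominates the noisy cross-terms provided $b_k$ is chosen as in the theorem.

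With the expected-drift bound in hand, I would convert it to a high-probability statement via a one-sided Azuma/Freedman bound on the martingale $M_t=\cos\theta_t-\cos\theta_0-\sum_{s<t}\E[\cos\theta_{s+1}-\cos\theta_s\mid w_s]$. Each increment has magnitude at most $O(b_k^2)$ (since $|w_t\cdot x_t|\leq b_k$ and $|u\cdot x_t|\leq 1$), so the standard calculation shows that for $m_k=\Theta\bigl(\frac{d}{(1-2\eta)^2}\bigl(\ln\frac{d}{(1-2\eta)^2}+\ln\frac{k}{\delta}\bigr)\bigr)$ iterations, $\cos\theta_{m_k}-\cos\theta_0$ is at least $c'\,m_k(1-2\eta)b_k^2$ with probability $1-\delta/(k(k+1))$. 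Converting back via $\cos\theta=1-\theta^2/2+O(\theta^4)$ for small $\theta$, and plugging in the prescribed $b_k=\Theta(2^{-k}(1-2\eta)/(\sqrt{d}\ln(km_k/\delta)))$, yields $\theta(v_k,u)\leq \pi/2^{k+1}$.

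The remaining items are bookkeeping. Item~(2) follows by summing $m_k$ over $k=1,\ldots,k_0=\lceil\log_2(1/\epsilon)\rceil$. For items~(3) and~(4), I would bound the rejection-sampling cost per labeled query: the uniform measure of the band $R_t=\{x\in\S^{d-1}:b_k/2\leq w_t\cdot x\leq b_k\}$ is $\tilde\Theta(b_k\sqrt{d})$ for $b_k\lesssim 1/\sqrt{d}$, so each labeled example costs $\tilde O(1/(b_k\sqrt{d}))=\tilde O(2^k/(1-2\eta))$ unlabeled draws and $\tilde O(d\cdot 2^k/(1-2\eta))$ time (the factor of $d$ coming from evaluating $w_t\cdot x_t$). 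Summing over iterations and epochs and using the geometric schedule $2^{k_0}\approx 1/\epsilon$ produces the claimed $1/\epsilon$ factor.

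The main obstacle is the expected-drift lemma: proving the $(1-2\eta)\,b_k^2$ lower bound rigorously requires a careful geometric integration of the modified Perceptron update over the band $R_t$ at an angle $\theta_t\in[\pi/2^{k+1},\pi/2^k]$ from $u$, while keeping track of how label noise up to rate $\eta$ attenuates the signal but is prevented from reversing it by the $(1-2\eta)$ margin. Choosing $b_k$ neither too large (so the label noise inside the band doesn't wash out the signal) nor too small (so the band carries enough mass to make rejection sampling efficient and each update informative) is the delicate part of the parameter tuning that must line up with the concentration bound in the previous step.
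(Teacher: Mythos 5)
Your overall architecture is the paper's: induction over epochs with the invariant $\theta(v_k,u)\leq \pi/2^{k+1}$ and a union bound over $\sum_k \frac{1}{k(k+1)}\leq 1$; inside each epoch, a drift-plus-martingale analysis of $\cos\theta_t$; and a rejection-sampling accounting for the unlabeled/time complexity. However, there are two concrete gaps in the epoch analysis.

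First, the drift bound you propose to prove is too weak to close the argument with the stated $m_k$. You claim $\E[\cos\theta_{t+1}-\cos\theta_t\mid w_t]\geq c(1-2\eta)b_k^2$. Plugging in $b_k=\Theta\bigl(\tfrac{2^{-k}(1-2\eta)}{\sqrt{d}\ln(km_k/\delta)}\bigr)$ and $m_k=\Theta\bigl(\tfrac{d}{(1-2\eta)^2}\ln(\cdot)\bigr)$ gives total expected progress $m_k(1-2\eta)b_k^2=\Theta\bigl(\tfrac{4^{-k}(1-2\eta)}{\ln(\cdot)}\bigr)$, which is far smaller than the required $\cos\theta_{m}-\cos\theta_0\approx\theta^2=\Theta(4^{-k})$ when $\eta$ is close to $1/2$. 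The correct per-step gain (Lemma~\ref{lem:delta-cos-geq-bn}) is $\Omega\bigl(b_k\cdot\tfrac{(1-2\eta)\theta_t}{\sqrt{d}}\bigr)=\Omega\bigl(\tfrac{(1-2\eta)^2\theta^2}{d\,\ln(m_k/\delta)}\bigr)$: the second factor in the product is not another band width $b_k$ but the scale $\theta/\sqrt d$ of $\E[-(u\cdot x)\one\{u\cdot x<0\}\mid w_t\cdot x=\xi]$, which exceeds $b_k$ by the factor $\Theta(\ln(m_k/\delta)/(1-2\eta))$ that your version is missing. With your bound as stated you would only obtain a label complexity of $\tilde O(d/(1-2\eta)^3)$ per epoch, not the claimed $\tilde O(d/(1-2\eta)^2)$.

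Second, a single application of Azuma to the compensated martingale over all $m_k$ steps does not suffice, because the drift lower bound only holds while $\theta_t$ lies in a window (in the paper, $[\theta/4,\,5\theta/3]$): once $\theta_t$ drops below the target the conditional drift is no longer guaranteed nonnegative, and you must also rule out $\theta_t$ escaping upward past $5\theta/3$, where the geometry of the band breaks down. The paper handles this with a two-phase stopping-time argument (Lemma~\ref{lem:angle-shrinkage}): it applies the concentration bound uniformly over all intervals $[t_1,t_2]\subseteq[0,m]$, first showing some $\theta_\tau<\theta/4$ must occur (and that the exit from the window is through the left boundary), and then that $\theta_t$ can never subsequently exceed $\theta/2$, using the per-step Lipschitz bound $|\cos\theta_{t+1}-\cos\theta_t|=O(\tilde c\zeta\theta^2/\sqrt d)$. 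Your sketch needs this structure; without it the claim $\theta(v_k,u)\leq\pi/2^{k+1}$ at the \emph{final} iterate does not follow from the drift estimate. The bookkeeping in items (2)--(4) is fine, modulo adding a concentration bound for the geometric rejection-sampling counts.
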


The theorem follows from Lemma~\ref{lem:mp-bn} below. The key ingredient of the
lemma is a delicate analysis of the dynamics of the angles $\cbr{\theta_t}_{t=0}^m$, where
$\theta_t = \theta(w_t, u)$ is the angle between the iterate $w_t$ and the halfspace $u$.
Since $x_t$ is randomly sampled and $y_t$ is noisy,
we are only able to show that $\theta_t$
decreases by a decent amount {\em in expectation}. To remedy the stochastic fluctuations,
we apply martingale concentration inequalities to carefully control the upper envelope of
sequence $\cbr{\theta_t}_{t=0}^m$.

\begin{lemma}[\MP under Bounded Noise]
Suppose Algorithm~\ref{alg:modperceptron} has inputs labeling oracle $\calO$ that satisfies $\eta$-bounded noise condition with respect to halfspace $u$,
initial halfspace $w_0$ and
angle upper bound $\theta \in (0, \frac \pi 2]$ such that $\theta(w_0, u) \leq \theta$,
confidence $\delta$,
number of iterations
$m = \Theta(\frac{ d }{(1-2\eta)^{2}} (\ln\frac{ d }{(1-2\eta)^{2}} + \ln\frac{1}{\delta}))$,
band width $b = \Theta\del{\frac{ \theta (1-2\eta)}{\sqrt{d}\ln(m/\delta)}}$.
Then with probability at least $1-\delta$:
\begin{enumerate}[leftmargin=1cm]
  \item The output halfspace $w_m$ is such that $\theta(w_m, u) \leq \frac{\theta}{2}$.
  \item The number of label queries is $O\del{\frac{d}{(1-2\eta)^2} \del{\ln\frac{d}{(1-2\eta)^2} + \ln\frac1\delta} }$.
  \item The number of unlabeled examples drawn is $O\del{\frac{d}{(1-2\eta)^3} \cdot \del{\ln\frac{d}{(1-2\eta)^2} + \ln\frac1\delta}^2 \cdot \frac1\theta}$.
  \item The algorithm runs in time $O\del{\frac{d^2}{(1-2\eta)^3} \cdot \del{\ln\frac{d}{(1-2\eta)^2} + \ln\frac1\delta}^2 \cdot \frac1\theta}$.
\end{enumerate}
\label{lem:mp-bn}
\end{lemma}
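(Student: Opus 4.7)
I would take $\Phi_t := \cos\theta_t$ as the progress measure and exploit the identity
$$
\Phi_{t+1} - \Phi_t = -2\one\cbr{y_t w_t \cdot x_t < 0}(w_t \cdot x_t)(u \cdot x_t)
$$
already recorded in the paper. Since $x_t \in R_t$ guarantees $|w_t \cdot x_t| \leq b$ and $|u \cdot x_t| \leq 1$, each increment satisfies $|\Phi_{t+1} - \Phi_t| \leq 2b$. The core quantitative step is to show that, as long as the inductive hypothesis $\theta_t \leq \theta$ holds, there is a positive expected drift $\E[\Phi_{t+1} - \Phi_t \mid w_t] \geq \mu$. Under $\eta$-bounded noise, the agreement region of $R_t$ (where $\sign(w_t \cdot x_t) = \sign(u \cdot x_t)$) contributes a negative term of size at most $\eta$ times its mass, while the disagreement region contributes a positive term of size at least $1-\eta$ times its mass. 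A geometric computation on the uniform distribution over $\S^{d-1}$, isolated in an auxiliary lemma in the appendix, shows that for the stated band width $b = \Theta(\theta(1-2\eta)/(\sqrt{d}\ln(m/\delta)))$ the disagreement mass inside $R_t$ dominates, yielding a drift $\mu$ large enough that the aggregate $m\mu$ comfortably exceeds the target gap $\cos(\theta/2) - \cos\theta = \Theta(\theta^2)$.

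\textbf{Concentration and closing the induction.} To turn expected drift into a high-probability statement I would introduce the stopping time $\tau := \min\cbr{t : \theta_t > \theta}$ and form the stopped martingale $M_t := \Phi_{t \wedge \tau} - \Phi_0 - \sum_{s < t \wedge \tau} \E[\Phi_{s+1} - \Phi_s \mid w_s]$. Bounded differences $|M_t - M_{t-1}| \leq 4b$ together with Azuma--Hoeffding yield $\max_{t \leq m} |M_t| \leq O(b\sqrt{m \ln(m/\delta)})$ with probability at least $1-\delta$; plugging in the stated $m$ and $b$ shows this deviation is asymptotically smaller than the drift $m\mu$. Thus, on the good event, $\Phi_m - \Phi_0 \geq \cos(\theta/2) - \cos\theta$, which, combined with $\theta_0 \leq \theta$, gives $\theta_m \leq \theta/2$ and hence item~1. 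Applying the same uniform-in-$t$ inequality at every step shows that $\theta_t$ never exceeds $\theta$, so in fact $\tau > m$ and the stopped and unstopped processes coincide, closing the induction.

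\textbf{Counts, runtime, and the main obstacle.} Item~2 is immediate since one label is queried per iteration. For item~3, rejection sampling from $D_\calX|_{R_t}$ has acceptance probability $\P[R_t] = \Theta(b\sqrt{d})$ by the standard surface-area estimate for a band on $\S^{d-1}$, so the total number of unlabeled draws is a sum of $m$ independent geometric random variables with mean $\Theta(1/(b\sqrt{d}))$; a Chernoff-type bound concentrates this sum around $m/(b\sqrt{d}) = \tilde{O}(d/((1-2\eta)^3 \theta))$, matching the claim. Item~4 follows from item~3 with the $O(d)$ cost of each inner product and update. The principal obstacle is interlocking the expected-progress bound, which is valid only while $\theta_t \leq \theta$, with the concentration argument: the stopped-martingale device is what prevents a sample path on which $\theta_t$ transiently escapes from breaking the drift estimate, and controlling this upper envelope is exactly the "delicate analysis" flagged in the lemma's discussion. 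A secondary technical wrinkle is that a crude Azuma bound may cost extra polylogarithmic factors; replacing it with a Freedman-type bound using the per-step variance $O(b^2 \P[R_t])$ should recover the stated $m$.
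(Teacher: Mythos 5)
Your high-level architecture matches the paper's: use $\cos\theta_t$ as the potential, establish an expected per-step drift, control deviations with a martingale inequality, and count unlabeled draws via geometric random variables (items 2--4 of your sketch are essentially the paper's argument, using Lemmas~\ref{lem:1d-lb} and~\ref{lem:geom-conc}). However, there are two genuine gaps in your treatment of item~1. First, the positive drift cannot hold on all of $\cbr{\theta_t \leq \theta}$: the paper's Lemma~\ref{lem:delta-cos-geq-bn} establishes $\E[\cos\theta_{t+1}-\cos\theta_t \mid \theta_t] \geq \frac{\tilde{c}}{100\pi}\frac{(1-2\eta)^2\theta^2}{d}$ only for $\theta_t \in [\frac14\theta, \frac53\theta]$, because the favorable term in the drift scales with $\theta_t$ while the unfavorable term scales with the band width $b$; for $\theta_t$ very small the drift can be negative. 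Your conclusion "$\Phi_m - \Phi_0 \geq \cos(\theta/2)-\cos\theta$" via cumulative drift over all $m$ steps is in fact inconsistent with $\cos\theta_m \leq 1$, since with the stated $m$ the aggregate drift $m\mu = \Theta(\theta^2)$ exceeds $1-\cos\theta_0 \leq \frac12\theta^2$ by a large constant. The paper turns this into a two-phase argument (Lemma~\ref{lem:angle-shrinkage}): a contradiction shows $\theta_t$ must first drop below $\frac14\theta$, and then a separate argument---using an Azuma bound uniform over \emph{all} windows $[t_1,t_2]$ and the comparison $\cos\frac{\theta}{2}-\cos\frac{\theta}{4} < -\frac{1}{75}\theta^2 + (\text{deviation})$---shows it can never climb back above $\frac12\theta$. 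Your proposal omits this second phase entirely; without it, nothing prevents $\theta_t$ from returning above $\theta/2$ after its first descent. Relatedly, your stopping time at $\cbr{\theta_t > \theta}$ is the wrong boundary: the process may legitimately overshoot $\theta$, and the drift lemma must remain valid up to $\frac53\theta$ precisely to absorb this.

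Second, your per-step increment bound $|\Phi_{t+1}-\Phi_t| \leq 2b$ (from $|u \cdot x_t| \leq 1$) is quantitatively insufficient, and this is not merely a polylog loss. The paper's Lemma~\ref{lem:diff-theta-ub} sharpens this to $|u\cdot x_t| \leq b + 2\sin\frac{\theta_t}{2} = O(\theta)$, giving increments $O(b\theta) = O(\tilde{c}(1-2\eta)\theta^2/\sqrt{d})$. With the crude bound, the Azuma deviation is $O(b\sqrt{m\ln(m/\delta)}) = \Theta(\tilde{c}\theta\cdot\mathrm{polylog})$, which for small $\theta$ (recall $\theta$ can be as small as $\Theta(\epsilon)$ in the outer loop) dominates the target gap $\Theta(\theta^2)$; even a Freedman bound does not rescue this, since its additive term is still linear in the almost-sure increment bound $2b \sim \theta/\sqrt{d}$ rather than $\theta^2$. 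The extra factor of $\theta$ in the increment bound is what makes the deviation $O(\tilde{c}\theta^2\,\mathrm{polylog}) \ll m\mu$ for every scale $\theta$, and it is also needed to bound $|D_t|$ in the submartingale construction and to control the one-step jump across the $\frac14\theta$ boundary in the second phase.
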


\subsection{Adversarial Noise}
We establish Theorem~\ref{thm:ap-an} in the adversarial noise setting. The theorem implies that, with
appropriate settings of input parameters, $\AP$ efficiently learns a halfspace of excess error at most $\epsilon$ with
probability at least $1-\delta$, under the assumption that $D_{\calX}$ is uniform over the unit sphere and $\calO$ has an adversarial noise of magnitude $\nu = \Omega(\frac{\epsilon}{\ln d + \ln\ln\frac1\epsilon})$. In addition, it queries at most $\tilde{O}(d \ln\frac1\epsilon)$ labels. Our label complexity bound is information-theoretically optimal~\cite{KMT93}, and matches the state of the art result of~\cite{HKY15}.
The benefit of our approach is computational: it has a running time of $\tilde O(\frac {d^2} \epsilon)$, while~\cite{HKY15} needs to solve a convex optimization problem whose running time is some polynomial over $d$ and $\frac 1 \epsilon$ with an unspecified degree.


The proof and the precise setting of parameters ($m_k$ and $b_k$) are given in Appendix~\ref{sec:pf-main}.

\begin{theorem}[\AP under Adversarial Noise]
Suppose Algorithm~\ref{alg:activeperceptron} has inputs labeling oracle $\calO$
that satisfies $\nu$-adversarial noise condition with respect to halfspace $u$, initial halfspace $v_0$
such that $\theta(v_0, u) \leq \frac{\pi}{2}$,
target error $\epsilon$,
confidence $\delta$,
sample schedule $\cbr{m_k}$ where $m_k = \Theta(d (\ln d + \ln\frac{k}{\delta}))$,
band width $\cbr{b_k}$ where $b_k = \Theta\del{\frac{ 2^{-k} }{\sqrt{d}\ln(km_k/\delta)}}$.
Additionally $\nu \leq \Omega(\frac{\epsilon}{\ln \frac d \delta  + \ln\ln\frac1\epsilon})$.
Then with probability at least $1-\delta$:
\begin{enumerate}[leftmargin=1cm]
\item The output halfspace $v$ is such that $\P[\sign(v \cdot X) \neq \sign(u \cdot X)] \leq \epsilon$.
\item The number of label queries is $O\del{d \cdot \ln\frac{1}{\epsilon} \cdot \del{\ln d + \ln\frac{1}{\delta} + \ln\ln\frac1\epsilon}}$.
\item The number of unlabeled examples drawn is $O\del{d \cdot \del{\ln d + \ln\frac1\delta + \ln\ln\frac{1}{\epsilon}}^2 \cdot \frac1\epsilon \ln\frac1\epsilon}$.
\item The algorithm runs in time $O\del{d^2 \cdot \del{\ln d + \ln\frac1\delta + \ln\ln\frac{1}{\epsilon}}^2 \cdot \frac1\epsilon \ln\frac1\epsilon}$.
\end{enumerate}
\label{thm:ap-an}
\end{theorem}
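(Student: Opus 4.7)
The plan is to follow the same epoch-based reduction used for the bounded-noise case: Theorem~\ref{thm:ap-an} will follow by a union bound over the $k_0 = \lceil \log_2(1/\epsilon)\rceil$ calls to \MP, once I establish an adversarial-noise analogue of Lemma~\ref{lem:mp-bn} stating that from any $w_0$ with $\theta(w_0,u) \le \theta$, if $m = \Theta(d(\ln d + \ln(1/\delta)))$, $b = \Theta(\theta/(\sqrt{d}\ln(m/\delta)))$, and $\nu \le O(\theta/(\ln(d/\delta)+\ln(1/\theta)))$, then $\theta(w_m,u) \le \theta/2$ with probability $1-\delta$. Taking $\theta = \pi/2^k$ and noting that the worst constraint on $\nu$ comes from the smallest band at $k = k_0$ yields the stated requirement $\nu \le O(\epsilon/(\ln(d/\delta)+\ln\ln(1/\epsilon)))$. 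The per-epoch complexities then aggregate geometrically in $k$, with the number of rejection samples at epoch $k$ scaling like $m_k / \mathbb{P}(R_t^{(k)}) = \tilde O(2^k d)$, so summing gives $\tilde O(d/\epsilon)$ unlabeled examples and $\tilde O(d^2/\epsilon)$ time.

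To prove the \MP lemma, I take $\cos \theta_t$ as the progress measure, which by equation~\eqref{eqn:modperceptrontheta} evolves as $\cos\theta_{t+1} - \cos\theta_t = -2\one\{y_t w_t\cdot x_t < 0\}(w_t\cdot x_t)(u\cdot x_t)$. Conditioning on $w_t$ (and hence on $R_t$), I split the conditional expectation into a \emph{clean} part, obtained by replacing $y_t$ with $\sign(u\cdot x_t)$, plus a \emph{noise correction} for the points where the adversary disagrees with $\sign(u\cdot x_t)$. The clean part is the quantity already analyzed in the realizable/bounded case and can be lower bounded by $\Omega(b\cdot \theta_t/\sqrt{d})$ times an appropriate density factor, exactly as in Lemmas~\ref{lem:delta-cos-geq-bn} and \ref{lem:delta-cos-geq-an}. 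The noise correction is bounded in absolute value by $2b^2 \cdot \P[\text{noisy}\mid x_t\in R_t]$, and the crucial step is to observe that $\P[\text{noisy}\mid x_t\in R_t] \le \nu / \P(R_t)$. Since $b_k = \Theta(2^{-k}/(\sqrt{d}\log))$, one checks that $\P(R_t) = \tilde\Omega(b/\theta) = \tilde\Omega(1/(\sqrt{d}\log))$; plugging in the hypothesis on $\nu$ shows the noise term is dominated by a constant fraction of the clean progress, leaving $\E[\cos\theta_{t+1}-\cos\theta_t\mid w_t] \gtrsim (1-\cos\theta_t)/m$.

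The main obstacle is converting this in-expectation per-step progress into a high-probability guarantee on $\theta(w_m,u) \le \theta/2$, because the stochastic increments have range up to $O(b^2)$ and $\cos\theta_t$ can fluctuate upward. As in the bounded-noise proof, I would construct the supermartingale $Z_t = (1-\cos\theta_t) \cdot (1+c/m)^t$ for a suitable constant $c$, so that $\E[Z_{t+1}\mid\mathcal F_t]\le Z_t$ whenever $\theta_t$ has not yet shrunk below $\theta/2$, and then apply Freedman's or Azuma--Bernstein's inequality to control the deviations, using that the squared increments are bounded by $O(b^4)$. Combined with a union bound over a stopping-time argument that tracks the first time $\theta_t$ leaves the interval $[\theta/2,\theta]$, this shows $\cos\theta_m \ge \cos(\theta/2)$ with the required probability. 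The adversarial setting is slightly more delicate than bounded noise because a worst-case adversary can concentrate all its error budget inside $R_t$, but the band width $b_k$ has been chosen precisely so that $\P(R_t)\gg \nu$; verifying this slack carefully is what pins down the logarithmic factor in the noise tolerance.

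Finally, I translate the per-epoch guarantee into the global bounds of Theorem~\ref{thm:ap-an}: applying the \MP lemma with $\theta = \pi/2^k$ and failure probability $\delta/(k(k+1))$, a union bound over $k = 1,\dots,k_0$ gives overall failure probability at most $\delta$ and a final angle $\theta(v_{k_0},u) \le \pi/2^{k_0+1} \le \pi\epsilon/2$, which by Lemma~\ref{lem:triangle} implies $\P[\sign(v\cdot X)\ne \sign(u\cdot X)]\le \epsilon$. The label, sample, and time complexity bounds follow from summing the per-epoch costs, where the dominant contribution to unlabeled examples and running time comes from the last epoch at $k=k_0$ because $1/\theta_k = 2^k/\pi$ grows geometrically.
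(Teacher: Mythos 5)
Your overall architecture matches the paper's: an epoch-wise reduction to a single-call guarantee for \MP, a progress measure $\cos\theta_t$ decomposed into a clean term plus a noise correction, martingale concentration with a stopping-time argument confined to an interval around $\theta$, and a geometric-random-variable bound for the rejection-sampling cost. The gap is in the one step that actually distinguishes the adversarial analysis from the bounded-noise one: your bound on the noise correction. You claim the correction is at most $2b^2\cdot\P[\text{noisy}\mid x_t\in R_t]$, which implicitly uses $|u\cdot x_t|\le b$. That is false: only $|w_t\cdot x_t|\le b$ is enforced by the band, while $|u\cdot x_t|\le |w_t\cdot x_t|+\|u-w_t\|\le b+\theta_t$, so the correct worst-case first-moment bound is $2b(b+\theta_t)\P[\text{noisy}\mid R_t]\approx 2b\,\theta_t\,\P[\text{noisy}\mid R_t]$. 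The clean progress is only $\Omega(b\,\theta_t/\sqrt{d})$ (Lemma~\ref{lem:cond-moment}), so with $\P[\text{noisy}\mid R_t]\le \nu/\P(R_t)=\tilde O(\nu/\theta)$ the first-moment bound forces $\nu = \tilde O(\theta/\sqrt{d})$ to keep the noise term subdominant --- a factor $\sqrt d$ worse than the claimed tolerance $\nu=\tilde\Omega(\epsilon)$. The paper escapes this via Cauchy--Schwarz: $\E[|u\cdot x_t|\one\{\text{noisy}\}]\le\sqrt{\P[\text{noisy}\mid R_t]}\cdot\sqrt{\E[(u\cdot x_t)^2\mid R_t]}$ together with the second-moment bound $\E[(u\cdot x_t)^2\mid R_t]\le 5\theta_t^2/d$, which restores the $1/\sqrt d$ factor while only paying a square root on the (constant-bounded) conditional noise rate. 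Without this second-moment step your argument either rests on an incorrect inequality or proves a strictly weaker noise tolerance.

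The same error propagates into your concentration step: the increments of $\cos\theta_t$ are bounded by $O(b\theta)=O(\tilde c\,\theta^2/\sqrt d)$ (Lemma~\ref{lem:diff-theta-ub}), not $O(b^2)$, so the squared increments are $O(b^2\theta^2)$ rather than $O(b^4)$. With the correct bound, Azuma's inequality still closes the argument for the paper's choice $m=\tilde\Theta(d)$, so this part is repairable, but as written your deviation analysis is calibrated to the wrong scale. Everything else --- the union bound over epochs, the conversion of the final angle to disagreement probability via Lemma~\ref{lem:triangle}, and the aggregation of label/unlabeled/time costs dominated by the last epoch --- agrees with the paper's proof.
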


The theorem follows from Lemma~\ref{lem:mp-an} below, whose proof is similar
to Lemma~\ref{lem:mp-bn}.

\begin{lemma}[\MP under Adversarial Noise]
Suppose Algorithm~\ref{alg:modperceptron} has inputs labeling oracle $\calO$
that satisfies $\nu$-adversarial noise condition with respect to halfspace $u$,
initial halfspace $w_0$ and
angle upper bound $\theta \in (0, \frac \pi 2]$ such that $\theta(w_0, u) \leq \theta$, confidence $\delta$,
number of iterations
$m = \Theta(d(\ln d + \ln\frac{1}{\delta}))$,
band width $b = \Theta\del{\frac{ \theta }{\sqrt{d}\ln(m/\delta)}}$.
Additionally $\nu \leq \Omega(\frac{\theta}{\ln (m/\delta))})$.
Then with probability at least $1-\delta$:
\begin{enumerate}[leftmargin=1cm]
  \item The output halfspace $w_m$ is such that $\theta(w_m, u) \leq \frac{\theta}{2}$.
  \item The number of label queries is $O\del{d \cdot \del{\ln d + \ln\frac{1}{\delta} }}$.
  \item The number of unlabeled examples drawn is $O\del{d \cdot \del{\ln d + \ln\frac1\delta }^2 \cdot \frac1\theta }$
  \item The algorithm runs in time $O\del{d^2 \cdot \del{\ln d + \ln\frac1\delta }^2 \cdot \frac1\theta }$.
\end{enumerate}
\label{lem:mp-an}
\end{lemma}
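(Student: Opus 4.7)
The plan is to mirror the strategy used for Lemma~\ref{lem:mp-bn}, tracking $\cos\theta_t$ as the measure of progress via identity~\eqref{eqn:modperceptrontheta} and showing that over $m$ iterations it increases enough to force $\theta_m \le \theta/2$. The main twist compared to the bounded noise case is that adversarial corruptions can be concentrated inside the narrow band $R_t$, so the band width $b$ and the noise tolerance $\nu$ must be linked so that corruptions remain dominated.

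First I would decompose each per-step change $\Delta_t := \cos\theta_{t+1} - \cos\theta_t$ into a \emph{clean} part (using the true label $z_t := \sign(u\cdot x_t)$) plus a \emph{corruption} part (the difference arising when $y_t \neq z_t$). For the clean part, I would show $\E[\Delta_t^{\text{clean}} \mid w_t] = \Omega(b\theta_t / \sqrt{d})$ under $D_\calX|_{R_t}$: when $z_t(w_t\cdot x_t)<0$ and $x_t \in R_t$, the margin $|w_t\cdot x_t|\in[b/2,b]$ and $|u\cdot x_t|=\Omega(\theta_t/\sqrt d)$ with constant probability, so each update contributes an $\Omega(b\theta_t/\sqrt d)$ drop in $\cos\theta$, and such "triggering" events themselves occur with constant probability (this is essentially the same band-geometry argument used for the bounded case, without the $(1-2\eta)$ factor since here the labels in the band are exactly $z_t$). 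For the corruption part, since a sample is corrupted only if $y_t\ne z_t$ and the overall adversarial budget is $\nu$, the probability that a given iteration sees a corrupted label is at most $\nu/\P[R_t]$, and each corruption shifts $\cos\theta_t$ by at most $O(b)$ in magnitude; summing over $m$ iterations and using $\P[R_t]=\Theta(b)$ together with $\nu = O(\theta/\ln(m/\delta))$ gives a total corruption contribution of $o(\theta_0^2)$, i.e.\ negligible relative to the clean progress. Then I would define the martingale difference sequence $\Delta_t - \E[\Delta_t\mid \mathcal F_{t-1}]$, whose increments are bounded by $O(b^2)$, and apply Azuma--Hoeffding (or Freedman, to exploit small conditional variance) to get concentration: with probability $1-\delta$, the accumulated drift is within $O(b\sqrt{m\ln(1/\delta)})$ of its expectation. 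Plugging $m = \Theta(d(\ln d+\ln(1/\delta)))$ and $b = \Theta(\theta/(\sqrt d\ln(m/\delta)))$ into the bound $\E\sum_t \Delta_t = \Omega(mb\theta/\sqrt d)$ shows $\cos\theta_m - \cos\theta_0 \ge 1-\cos(\theta/2)$, giving $\theta_m\le\theta/2$, as required for item (1).

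The hard part is controlling the \emph{upper envelope} of $\{\theta_t\}$ rather than just $\theta_m$: the band $R_t$ is defined relative to the current $w_t$, and the argument for $\E[\Delta_t^{\text{clean}}\mid w_t]=\Omega(b\theta_t/\sqrt d)$ requires $\theta_t \le \theta$ throughout. One must therefore bootstrap: prove by induction (via a stopping-time argument that freezes the process the first time $\theta_t$ exceeds $\theta$) that $\theta_t$ never escapes the interval where the per-step drift lower bound is valid, which is exactly what the martingale concentration, applied to a stopped sequence, gives; this is the same delicate book-keeping flagged after Lemma~\ref{lem:mp-bn} and carries over essentially verbatim once the noise contribution is handled as above.

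Items (2)--(4) are then routine. The label complexity (2) equals $m=O(d(\ln d+\ln(1/\delta)))$ by construction. For (3), rejection sampling from $D_\calX|_{R_t}$ takes in expectation $1/\P[R_t]=O(\sqrt d/b)=O(d\ln(m/\delta)/\theta)$ unlabeled draws per iteration; a Chernoff bound over $m$ iterations with a union bound over $t$ yields a total of $O(md\ln(m/\delta)/\theta) = O(d(\ln d+\ln(1/\delta))^2/\theta)$ with probability $1-\delta/2$, with the failure probability folded into the overall $\delta$. Item (4) follows from (3) by noting each unlabeled example requires an $O(d)$ inner product to test membership in $R_t$ and each Perceptron update is $O(d)$ work.
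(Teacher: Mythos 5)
Your outline matches the paper's strategy (progress measure $\cos\theta_t$, a clean/corruption decomposition of the per-step drift as in Lemma~\ref{lem:delta-cos-geq-an}, and a stopped-martingale envelope argument as in Lemma~\ref{lem:angle-shrinkage}), but the way you control the corruption term has a genuine quantitative gap. You bound the accumulated corruption by $m \cdot \frac{\nu}{\P[R_t]} \cdot O(b)$ and assert this is $o(\theta^2)$. It is not: with $\P[R_t] = \Theta(b\sqrt{d})$ (Lemma~\ref{lem:1d-lb}; note your $\P[R_t]=\Theta(b)$ is off by a $\sqrt d$ factor), this quantity is $O(m\nu/\sqrt{d}) = \tilde{O}(\sqrt{d}\,\theta)$, which dwarfs both the total clean progress $\Theta(\tilde{c}\,\theta^2\ln\frac{d}{\delta})$ and the budget $1-\cos(\theta/2)=\Theta(\theta^2)$. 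The underlying problem is that bounding each corrupted update's effect by its worst case $O(b\cdot\max|u\cdot x_t|)=O(b\theta)$, or even $O(b)$, concedes too much to an adversary who concentrates its $\nu$ budget on band points with the largest $|u\cdot x_t|$. Making your accounting close would force $\nu = O(\tilde c\theta/\sqrt{d})$, a $\sqrt{d}$-weaker noise tolerance than the lemma claims.

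The missing idea is to control the corruption \emph{per iteration, relative to the per-iteration clean drift}, via Cauchy--Schwarz against the conditional second moment: the corruption term is at most $2b\sqrt{\P[y_t\neq\sign(u\cdot x_t)\mid\theta_t]\cdot\E[(u\cdot x_t)^2\mid\theta_t]}$, where $\P[y_t\neq\sign(u\cdot x_t)\mid\theta_t]\le \nu/\P[R_t]$ is a small \emph{constant} (this is exactly where $\nu=O(\tilde c\theta)$ enters) and $\E[(u\cdot x_t)^2\mid\theta_t]=O(\theta_t^2/d)$ by Lemma~\ref{lem:cond-moment} (the typical $|u\cdot x_t|$ in the band is $\Theta(\theta_t/\sqrt d)$, not $\Theta(\theta_t)$). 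This yields a corruption term of at most $b\theta_t/(100\pi\sqrt d)$, a small constant fraction of the clean drift $\Omega(b\theta_t/\sqrt d)$, after which your martingale argument goes through. Two smaller slips worth fixing: the per-step increment bound is $|\cos\theta_{t+1}-\cos\theta_t|=O(b\theta)$ (Lemma~\ref{lem:diff-theta-ub}), not $O(b^2)$, and the rejection-sampling cost per iteration is $1/\P[R_t]=O(\ln(m/\delta)/\theta)$, not $O(\sqrt d/b)$ --- your stated final counts for items (3) and (4) happen to be right, but your intermediate arithmetic for them is not.
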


\section{Implications to Passive Learning}
\label{sec:passive}
$\AP$ can be converted to a passive learning algorithm, $\PP$, for learning homogeneous halfspaces under the uniform distribution over the unit sphere. $\PP$ has PAC sample complexities close to the lower bounds under the two noise models.
We give a formal description of $\PP$ in Appendix~\ref{sec:passive-alg}. We give its formal guarantees in the
corollaries below, which are immediate consequences of Theorems~\ref{thm:ap-bn} and~\ref{thm:ap-an}.

In the $\eta$-bounded noise model, the sample complexity of \PP improves over the state of the art result of~\cite{ABHZ16}, where a sample complexity of $\tilde{O}(\frac {d^{O(\frac{1}{(1-2\eta)^4})}} \epsilon)$ is obtained. The bound has the same dependency on $\epsilon$ and $d$ as the minimax upper bound of $\tilde \Theta(\frac{d}{\epsilon (1-2\eta)})$ by~\cite{MN06}, which is achieved by a computationally inefficient ERM algorithm.

\begin{corollary}[\PP under Bounded Noise]
Suppose \PP has inputs distribution $D$ that satisfies $\eta$-bounded noise condition with respect to $u$,
initial halfspace $v_0$,
target error $\epsilon$,
confidence $\delta$,
sample schedule $\cbr{m_k}$ where $m_k = \Theta\del{\frac{ d }{(1-2\eta)^{2}} (\ln\frac{ d }{(1-2\eta)^{2}} + \ln\frac{k}{\delta}) }$,
band width $\cbr{b_k}$ where $b_k = \Theta\del{\frac{ 2^{-k} (1-2\eta)}{\sqrt{d} \ln (km_k /\delta)}}$.
Then with probability at least $1-\delta$:
(1) The output halfspace $v$ is such that $\err(h_{v}) \leq \err(h_u)+\epsilon$;
(2) The number of labeled examples drawn is $\tilde{O}\del{\frac{d}{(1-2\eta)^3\epsilon} }$.
(3) The algorithm runs in time $\tilde{O}\del{\frac{d^2}{(1-2\eta)^3\epsilon} }$.
\label{cor:pp-bn}
\end{corollary}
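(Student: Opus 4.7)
The plan is to prove Corollary~\ref{cor:pp-bn} by reduction to Theorem~\ref{thm:ap-bn}: \PP essentially simulates \AP on passively drawn labeled data, so all guarantees transfer once we match the sampling correctly.

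First, I would describe the reduction precisely. In each inner iteration of \MP, the active algorithm performs rejection sampling of $x_t \sim D_{\calX}\big|_{R_t}$ and then queries $\calO$ for $y_t \sim D_{Y\mid X=x_t}$. The passive learner \PP instead draws i.i.d.\ labeled pairs $(X, Y) \sim D$ one at a time, discards those with $X \notin R_t$, and uses the first retained pair as $(x_t, y_t)$. Since under the $\eta$-bounded noise model the conditional law of $Y$ given $X$ is fixed and does not depend on how $X$ was sampled, the joint distribution of each retained $(x_t, y_t)$ is identical to that in \AP: namely $x_t \sim D_{\calX}\big|_{R_t}$ and $y_t \sim D_{Y\mid X=x_t}$. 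Thus the filtration generated by $(w_t, x_t, y_t)_{t \ge 0}$ in \PP has the same law as in \AP.

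Next, I would invoke Theorem~\ref{thm:ap-bn} as a black box on this coupled execution. Claim (1), the error bound $\P[\sign(v\cdot X) \ne \sign(u \cdot X)] \le \epsilon$, then follows immediately from the correctness guarantee of \AP, and via Lemma~\ref{lem:triangle} this implies $\err(h_v) \le \err(h_u) + \epsilon$, which is exactly the conclusion (1) of the corollary. For claim (2), note that the number of labeled examples drawn by \PP is precisely the number of unlabeled examples that \AP would have to draw from $D_{\calX}$ in its rejection sampling step, since we are drawing one labeled pair per unlabeled candidate. By Theorem~\ref{thm:ap-bn}(3) this is $\tilde{O}\bigl(\tfrac{d}{(1-2\eta)^3 \epsilon}\bigr)$, matching the stated bound. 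For claim (3), the running time is dominated by the per-example test $x \in R_t$ (an $O(d)$ inner product) together with the constant-cost Perceptron update; summed over all retained and discarded examples this gives the same $\tilde{O}\bigl(\tfrac{d^2}{(1-2\eta)^3 \epsilon}\bigr)$ runtime bound as in Theorem~\ref{thm:ap-bn}(4).

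The only nontrivial point is the equivalence of the two sampling mechanisms under the bounded noise model, which relies crucially on the fact that the noise is described by a fixed conditional distribution $D_{Y\mid X}$, so that both independent sampling followed by filtering and filtering followed by label querying yield the same joint law. I would state this as a short coupling lemma at the start of the proof, after which the remaining work is bookkeeping: the label-complexity and time-complexity claims follow by instantiating Theorem~\ref{thm:ap-bn}'s parameter settings and observing that every "unlabeled example drawn" in the active guarantee becomes a "labeled example drawn" in the passive setting, with no other overhead.
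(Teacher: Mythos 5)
Your proposal is correct and matches the paper's argument: the paper likewise observes that \PMP's rejection sampling from $D|_{C_t}$ is distributionally identical to \AP's rejection sampling from $D_\calX|_{R_t}$ followed by a label query, so that the number of labeled examples drawn by \PP equals the number of unlabeled examples drawn by \AP, and the corollary follows immediately from Theorem~\ref{thm:ap-bn} together with Lemma~\ref{lem:triangle}. Your explicit coupling lemma simply spells out what the paper treats as evident.
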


In the $\nu$-adversarial noise model, the sample complexity of \PP matches the minimax optimal sample complexity upper bound of $\tilde{\Theta}(\frac {d} \epsilon)$ obtained in ~\cite{HKY15}.
Same as in active learning, our algorithm has a faster running time than~\cite{HKY15}.

\begin{corollary}[\PP under Adversarial Noise]
Suppose \PP has inputs distribution $D$ that satisfies $\nu$-adversarial noise condition with respect to $u$,
initial halfspace $v_0$,
target error $\epsilon$,
confidence $\delta$,
sample schedule $\cbr{m_k}$ where $m_k = \Theta\del{d (\ln d + \ln\frac{k}{\delta}) }$,
band width $\cbr{b_k}$ where $b_k = \Theta\del{\frac{ 2^{-k} }{\sqrt{d} \ln (km_k/\delta)}}$.
Furthermore $\nu = \Omega(\frac{\epsilon}{\ln\ln\frac1 \epsilon + \ln\frac d \delta})$.
Then with probability at least $1-\delta$:
(1) The output halfspace $v$ is such that $\err(h_{v}) \leq \err(h_u)+\epsilon$;
(2) The number of labeled examples drawn is $\tilde{O}\del{ \frac d \epsilon}$.
(3) The algorithm runs in time $\tilde{O}\del{\frac {d^2} \epsilon}$.
\label{cor:pp-an}
\end{corollary}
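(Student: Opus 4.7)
The plan is to obtain this corollary as a direct reduction from the active-learning guarantee in Theorem~\ref{thm:ap-an}. The passive algorithm $\PP$ is defined to mimic $\AP$ epoch by epoch: within epoch $k$, instead of rejection-sampling unlabeled points from $D_{\calX}|_{R_t}$ and querying $\calO$, $\PP$ draws labeled pairs $(X,Y)\sim D$ and discards those with $X \notin R_t$, keeping a pair only when its covariate lies in the current band. Whenever a pair is kept, it is fed to the same modified-Perceptron update~\eqref{eqn:modperceptron-initial} as in $\MP$. The key distributional observation is that for any event $E$ depending only on $X$, a draw from $D$ conditioned on $E$ is identical in law to first drawing $X \sim D_{\calX}|_E$ and then drawing $Y \sim D_{Y\mid X}$; in particular, with $E=\{X\in R_t\}$, the stream of kept pairs has exactly the joint distribution used by $\MP$ inside $\AP$.

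Given this coupling, the correctness statement $\P[\sign(v\cdot X)\neq \sign(u\cdot X)]\leq \epsilon$ is inherited verbatim from Theorem~\ref{thm:ap-an}, provided the noise condition $\nu = \Omega(\epsilon/(\ln\ln(1/\epsilon) + \ln(d/\delta)))$ in the corollary is sufficient to meet the per-epoch adversarial-noise requirement of $\MP$ (which is $\nu \leq \Omega(\theta/\ln(m/\delta))$ with $\theta=\pi/2^k$ and $m=m_k$). Conversion to excess error then follows from the first bound in Lemma~\ref{lem:triangle}.

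The core quantitative step is the labeled sample complexity, and I expect this to be the main obstacle. For the uniform distribution on $\S^{d-1}$ and a band $R_t = \{x : b_k/2 \leq w_t \cdot x \leq b_k\}$, a standard computation of spherical caps shows $\P[X \in R_t] = \Theta(b_k \sqrt{d})$ for the relevant range of $b_k$. Therefore, in expectation one needs $\Theta(1/(b_k\sqrt d))$ passive draws to produce a single usable example, and by a Chernoff bound (with a union bound over the $m_k$ iterations in each of the $k_0 = \lceil \log_2(1/\epsilon)\rceil$ epochs, absorbing an extra $\log$ factor), it suffices to draw $\tilde{O}(m_k / (b_k \sqrt d))$ labeled examples during epoch $k$. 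Plugging in $m_k = \tilde\Theta(d)$ and $b_k = \tilde\Theta(2^{-k}/\sqrt d)$ gives $\tilde O(d\cdot 2^k)$ per epoch; summing the geometric series in $k$ from $1$ to $k_0$ yields a total of $\tilde O(d/\epsilon)$ labeled examples, matching part~(2). Part~(3) then follows since each kept example triggers an $O(d)$ inner product update and each rejection costs $O(d)$ time, for a total running time of $\tilde O(d^2/\epsilon)$. The tricky bookkeeping is to ensure that the $\log$ slack introduced by converting expected to high-probability counts in every $R_t$, together with the failure-probability union bound across epochs, is absorbed into the $\tilde O(\cdot)$ without disturbing the noise condition on $\nu$.
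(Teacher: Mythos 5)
Your proposal is correct and takes essentially the same route as the paper: \PMP is defined so that its stream of kept labeled pairs is distributionally identical to the pairs \MP obtains by rejection-sampling from $D_\calX|_{R_t}$ and querying $\calO$, so the accuracy, sample count, and running time are inherited directly from Theorem~\ref{thm:ap-an}. The only difference is cosmetic --- the paper simply cites item 3 of Theorem~\ref{thm:ap-an} (whose proof already contains the band-probability lower bound of Lemma~\ref{lem:1d-lb} and the geometric-variable concentration of Lemma~\ref{lem:geom-conc}) for the $\tilde O(d/\epsilon)$ labeled-example count, whereas you re-derive that rejection-sampling calculation from scratch.
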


Tables~\ref{tab:pl-comp-bn} and~\ref{tab:pl-comp-an} present comparisons between our results and results most closely related to ours.

\begin{table}
\caption{A comparison of algorithms for PAC learning halfspaces under the uniform distribution, in the $\eta$-bounded noise model.}
\label{tab:pl-comp-bn}
\centering
\begin{tabular}{lll}
\toprule
Algorithm & Sample Complexity & Time Complexity \\
\midrule
\cite{ABHZ16} & $\tilde O(\frac {d^{O(\frac{1}{(1-2\eta)^4})}} \epsilon)$ & $\tilde O(\frac {d^{O(\frac{1}{(1-2\eta)^4})}} \epsilon)$\\

ERM~\citep{MN06} &  $\tilde{O}(\frac{d}{(1-2\eta) \epsilon})$ & $\superpoly(d,\frac1\epsilon)$ \\

Our Work & $\tilde{O}(\frac{d}{(1-2\eta)^3 \epsilon})$ & $\tilde{O}(\frac{d^2}{(1-2\eta)^3} \cdot \frac1\epsilon)$\\

\bottomrule
\end{tabular}
\end{table}

\begin{table}
\caption{A comparison of algorithms for PAC learning halfspaces under the uniform distribution, in the $\nu$-adversarial noise model where $\nu = \Omega(\frac{\epsilon}{\ln\ln\frac 1 \epsilon + \ln d})$.}
\label{tab:pl-comp-an}
\centering
\begin{tabular}{lll}
\toprule
Algorithm & Sample Complexity & Time Complexity \\
\midrule
\cite{HKY15} & $\tilde O(\frac {d} \epsilon)$ & $\poly(d, \frac 1 \epsilon)$ \\

ERM~\citep{VC71} &  $\tilde{O}(\frac d \epsilon)$ & $\superpoly(d,\frac1\epsilon)$ \\

Our Work & $\tilde{O}(\frac d \epsilon)$ & $\tilde{O}(\frac {d^2} \epsilon)$\\

\bottomrule
\end{tabular}
\end{table}

\paragraph{Acknowledgments.}
The authors thank Kamalika Chaudhuri for help and support, Hongyang Zhang for thought-provoking initial conversations, Jiapeng Zhang for helpful discussions, and the anonymous reviewers for their insightful feedback.
Much of this work is supported by NSF IIS-1167157 and 1162581.

\newpage
\bibliographystyle{plainnat}
\bibliography{alsearch}

\newpage
\appendix
\section{Additional Related Work}
\label{sec:arw}
\paragraph{Active Learning.} The recent decades have seen much success in both theory and practice of active learning; see the excellent surveys by~\cite{S10, H14, D11}. On the theory side, many label-efficient active learning algorithms have been proposed and analyzed~\cite{CAL94, FSST97, D05, BBL09, H07, BBZ07, DHM07, BHV10, BDL09, H09, K10, H10, BHLZ10, W11, H11, ABE14, ZC14, HAHLS15}. Most algorithms are {\em disagreement-based} algorithms~\citep{H14}, and are not label-optimal due to the conservativeness of their label query policy.
In addition, most of these algorithms require either explicit enumeration of classifiers in the hypothesis classes, or solving empirical 0-1 loss minimization problems on sets of examples. The former approach is easily seen to be computationally
infeasible, while the latter is proven to be computationally hard as well~\cite{ABSZ93}.
The only exception in this family we are aware of is~\cite{HY12}. \cite{HY12} considers active learning by sequential convex surrogate loss minimization. However, it assumes that the expected convex loss minimizer over all possible functions lies in a pre-specified real-valued function class, which is unlikely to hold in the bounded noise and the adversarial noise settings.

Some recent works~\cite{ZC14, HAHLS15, BBZ07, BL13, WS16} provide noise-tolerant active learning algorithms with improved label complexity over disagreement-based approaches. However, they are still computationally inefficient:~\cite{ZC14} relies on solving a series of linear program with an exponential number of constraints, which are computationally intractable; ~\cite{HAHLS15, BBZ07, BL13, WS16} relies on solving a series of empirical 0-1 loss minimization problems, which are also computationally hard in the presence of noise~\cite{ABSZ93}.  
\paragraph{Efficient Learning of Halfspaces.} A series of papers have shown the hardness of learning halfspaces with agnostic noise~\citep{ABSZ93, FGKP06, GR09, KK14, D15}. These results indicate that, to have nontrivial guarantees on learning halfspaces with noise in polynomial time, one has to make additional assumptions on the data distribution over instances and labels.

Many noise models, other than the bounded noise model and the adversarial noise model, has been studied in the literature. A line of work~\cite{CGO09,OC11,DGS12,A13} considers parameterized noise models. For instance, \cite{DGS12} gives an efficient algorithm for the setting that $\E[Y| X=x]= u \cdot x$ where $u$ is the optimal classifier. \cite{A13} studies a generalization of the above linear noise model, where $Y$ is a multiclass label, and there is a link function $\Phi$ such that $\E[Y|X=x] = \nabla \Phi(u \cdot x)$. Their analyses depend heavily on the noise models and it is unknown whether their algorithms can work with more general noise settings.
\cite{ZLC17} analyzes the problem of learning halfspaces under a new noise condition (as an application of their general analysis of stochastic gradient Langevin dynamics). They assume that the label flipping probability on every $x$ is bounded by $\frac 1 2 - c |u \cdot x|$, for some $c \in (0,\frac 1 2]$. It can be seen that the bounded noise condition implies the noise condition of~\cite{ZLC17}, and it is an interesting open question whether it is possible to extend our algorithm and analysis to their setting.


Under the random classification noise condition~\cite{AL88}, ~\cite{BFKV98} gives the first efficient passive learning algorithm of learning halfspaces, by using a modification of Perceptron update (similar to Equation~\eqref{eqn:modperceptron-initial}) together with a boosting-type aggregation. \cite{BF13} proposes an active statistical query algorithm for learning halfspaces. The algorithm proceeds by estimating the distance between the current halfspace and the optimal halfspace. However, it requires a suboptimal number of $\tilde{O}(\frac{d^2}{(1-2\eta)^2})$ labels. In addition, both results above rely on the uniformity over the random classification noise, and it is shown in~\cite{ABHU15} that this type of statistical query algorithms will fail in the heterogeneous noise setting (in particular the bounded noise setting and the adversarial noise setting).

In the adversarial noise model, we assume that there is a halfspace $u$ with error at most $\nu$ over data.
The goal is to design an efficient algorithm that outputting a classifier that disagrees with $u$ with probability at most $\epsilon$.
\cite{KKMS08} proposes an elegant averaging-based
algorithm that tolerates an error of at most $\nu = \Omega(\frac{\epsilon}{\ln\frac1\epsilon})$ assuming that the
unlabeled distribution is uniform. However it has a suboptimal
label complexity of $\tilde O(\frac {d^2} {\epsilon^2})$.
Under the assumption that the unlabeled distribution is log-concave or $s$-concave, the state of the art results~\cite{ABL14, BZ17} give efficient margin-based algorithms that tolerates a noise of $\nu = \tilde\Omega(\epsilon)$.
As discussed in the main text, such algorithms require a hinge loss minimization procedure that has a running time polynomial in $d$ with an unspecified degree.
Finally, ~\cite{D15} gives a PTAS that outputs a classifier with error $(1+\mu)\nu + \epsilon$, in time
$O(\poly( d^{\tilde{O}(\frac{1}{\mu^2})}, \frac 1 \epsilon) )$. Observe that in the case of $\nu = O(\epsilon)$, the
running time is an unspecified high order polynomial in terms of $d$ and $\frac 1 \epsilon$.





\section{Implications to Passive Learning}
\label{sec:passive-alg}
In this section, we formally describe $\PP$ (Algorithm~\ref{alg:passiveperceptron}),
a passive learning version of Algorithm~\ref{alg:activeperceptron}. The algorithmic
framework is similar to Algorithm~\ref{alg:activeperceptron}, except that
it calls Algorithm~\ref{alg:passivemodperceptron} rather than
Algorithm~\ref{alg:modperceptron}.

\begin{algorithm}[H]
\caption{\PP}
\begin{algorithmic}[1]
\REQUIRE Initial halfspace $v_0$, target error $\epsilon$, confidence $\delta$, sample schedule $\cbr{m_k}$, band width $\cbr{b_k}$.
\ENSURE learned halfspace $\hat{v}$.
\STATE Let $k_0 = \lceil \log_2 \frac{1}{\epsilon} \rceil$.
\FOR{$k = 1,2,\ldots,k_0$}
		\STATE $v_k \leftarrow \PMP(\calO, v_{k-1}, \frac{\pi}{2^k}, \frac{\delta}{k(k+1)}, m_k, b_k)$.
\ENDFOR
\RETURN$v_{k_0}$.
\end{algorithmic}
\label{alg:passiveperceptron}
\end{algorithm}

Algorithm~\ref{alg:passivemodperceptron} is similar to
Algorithm~\ref{alg:modperceptron}, except that it
draws labeled examples from $D$ directly, as opposed to performing label queries
on unlabeled examples drawn.

\begin{algorithm}[h]
\caption{\PMP}
\begin{algorithmic}[1]
\REQUIRE Initial halfspace $w_0$, angle upper bound $\theta$, confidence $\delta$, number of iterations $m$, band width $b$.
\ENSURE Improved halfspace $w_m$.
\FOR{$t = 0,1,2,\ldots,m-1$}
  \STATE Define region $C_t = \cbr{(x,y) \in \S^{d-1} \times \cbr{-1,+1}: \frac{b}{2} \leq w_t \cdot x \leq b }$.
	\STATE Rejection sample $(x_t,y_t) \sim D|_{C_t}$. In other words, repeat drawing example $(x_t, y_t) \sim D$ until it is in $C_t$.
	\STATE $w_{t+1} \leftarrow w_t - 2\one\cbr{y_t w_t \cdot x_t < 0} \cdot (w_t \cdot x_t) \cdot x_t$.
\ENDFOR
\RETURN $w_m$.
\end{algorithmic}
\label{alg:passivemodperceptron}
\end{algorithm}

It can be seen that with the same input as $\AP$, $\PP$ has exactly the same running
time, and the number of labeled examples drawn in $\PP$ is exactly the same as the number of
unlabeled examples drawn in $\AP$. Therefore, Corollaries~\ref{cor:pp-bn} and~\ref{cor:pp-an} are immediate consequences of Theorems~\ref{thm:ap-bn} and~\ref{thm:ap-an}.

\section{Proofs of Theorems~\ref{thm:ap-bn} and~\ref{thm:ap-an}} \label{sec:pf-main}

In this section, we give straightforward proofs that show Theorem~\ref{thm:ap-bn}
(resp. Theorem~\ref{thm:ap-an}) are direct consequences of Lemma~\ref{lem:mp-bn}
(resp. Lemma~\ref{lem:mp-an}). We defer the proofs of Lemmas~\ref{lem:mp-bn} and~\ref{lem:mp-an}
to Appendix~\ref{sec:pf-mp}.

\begin{theorem}[Theorem~\ref{thm:ap-bn} Restated]
Suppose Algorithm~\ref{alg:activeperceptron} has inputs labeling oracle $\calO$
that satisfies $\eta$-bounded noise condition with respect to underlying halfspace $u$, initial halfspace $v_0$
such that $\theta(v_0, u) \leq \frac{\pi}{2}$,
target error $\epsilon$,
confidence $\delta$,
sample schedule $\cbr{m_k}$ where
$m_k = \lceil \frac{(3200\pi)^3 d }{(1-2\eta)^{2}} (\ln\frac{(3200\pi)^3 d }{(1-2\eta)^{2}} + \ln\frac{k(k+1)}{\delta}) \rceil$,
band width $\cbr{b_k}$ where
$b_k = \frac{1}{2(600\pi)^2\ln\frac{m_k^2 k(k+1)}{\delta}} \frac{ 2^{-k} \pi (1-2\eta)}{\sqrt{d}}$.
Then with probability at least  $1-\delta$:
\begin{enumerate}
\item The output halfspace $v$ is such that $\P[\sign(v \cdot X) \neq \sign(u \cdot X)] \leq \epsilon$.
\item The number of label queries is $O\del{\frac{d}{(1-2\eta)^2} \cdot \ln\frac{1}{\epsilon} \cdot \del{\ln\frac{d}{(1-2\eta)^2} + \ln\frac{1}{\delta} + \ln\ln\frac1\epsilon}}$.
\item The number of unlabeled examples drawn is $O\del{\frac{d}{(1-2\eta)^3} \cdot \del{\ln\frac{d}{(1-2\eta)^2} + \ln\frac1\delta + \ln\ln\frac{1}{\epsilon}}^2 \cdot \frac1\epsilon \ln\frac1\epsilon}$.
\item The algorithm runs in time $O\del{\frac{d^2}{(1-2\eta)^3} \cdot \del{\ln\frac{d}{(1-2\eta)^2} + \ln\frac1\delta + \ln\ln\frac{1}{\epsilon}}^2 \cdot \frac1\epsilon \ln\frac1\epsilon}$.
\end{enumerate}
\label{thm:ap-bn-restated}
\end{theorem}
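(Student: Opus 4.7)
The plan is to reduce Theorem~\ref{thm:ap-bn-restated} to Lemma~\ref{lem:mp-bn} via a straightforward induction over the $k_0 = \lceil \log_2(1/\epsilon) \rceil$ outer epochs of Algorithm~\ref{alg:activeperceptron}, followed by a union bound and a geometric-sum aggregation of the per-epoch resource costs.

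First, I would set up the induction hypothesis that at the start of epoch $k$ we have $\theta(v_{k-1}, u) \leq \pi/2^k$. The base case $k = 1$ follows from the assumption $\theta(v_0, u) \leq \pi/2$. For the inductive step, I must check that the parameters supplied to \MP at epoch $k$—namely the angle upper bound $\theta = \pi/2^k$, confidence $\delta/(k(k+1))$, iteration count $m_k$, and band width $b_k$ as specified in the theorem—match the form required by Lemma~\ref{lem:mp-bn} with $\theta$ and $\delta'=\delta/(k(k+1))$ substituted. The main thing to verify here is that the $m_k$ in the theorem is at least $\Theta(d(1-2\eta)^{-2}(\ln d(1-2\eta)^{-2} + \ln(1/\delta')))$ and that $b_k$ equals $\Theta(\theta (1-2\eta)/(\sqrt{d} \ln(m_k/\delta')))$ with $\theta=\pi/2^k$; both follow by inspection from the explicit constants in the theorem statement. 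Lemma~\ref{lem:mp-bn} then yields $\theta(v_k, u) \leq \theta/2 = \pi/2^{k+1}$, completing the induction.

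Next, I would apply a union bound over the $k_0$ epochs: since $\sum_{k=1}^{\infty} \delta/(k(k+1)) = \delta$, with probability at least $1-\delta$ all epochs succeed. On this event, $\theta(v_{k_0}, u) \leq \pi/2^{k_0+1} \leq \pi\epsilon/2$, so by Lemma~\ref{lem:triangle}, $\P[\sign(v \cdot X) \neq \sign(u \cdot X)] = \theta(v_{k_0}, u)/\pi \leq \epsilon/2 \leq \epsilon$, establishing claim~(1).

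For claims~(2)--(4) I would simply sum the per-epoch bounds from Lemma~\ref{lem:mp-bn}, substituting $\theta = \pi/2^k$ and $\delta' = \delta/(k(k+1))$. The label bound per epoch is $\tilde O(d/(1-2\eta)^2)$ independent of $k$ (the $\ln(1/\delta')$ contributes only a $\ln(k/\delta)$ factor, absorbed into the polylogarithm), so summing over $k \in [1,k_0]$ multiplies by $k_0 = O(\log(1/\epsilon))$, giving claim~(2). The unlabeled-example and time bounds per epoch each scale as $1/\theta = 2^k/\pi$, so the sum $\sum_{k=1}^{k_0} 2^k$ is geometric and dominated by its last term $2^{k_0} = O(1/\epsilon)$, yielding claims~(3) and~(4). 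The only step requiring a little care is confirming that the polylogarithmic prefactor $(\ln d(1-2\eta)^{-2} + \ln(1/\delta) + \ln\ln(1/\epsilon))^2$ uniformly upper-bounds $\ln^2(m_k k(k+1)/\delta)$ across all $k \leq k_0$, which holds since $m_k = \poly(d/(1-2\eta)^2, \log(k/\delta))$ and $k \leq \log_2(1/\epsilon)$.

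The main obstacle is not in this reduction—which is essentially bookkeeping—but in Lemma~\ref{lem:mp-bn} itself, whose proof (by the paper's description) controls the upper envelope of the stochastic sequence $\{\theta_t\}$ via a martingale argument on the one-step progress $\cos\theta_{t+1} - \cos\theta_t$; that content is assumed here, and the present theorem is then a clean consequence.
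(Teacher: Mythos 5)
Your proposal is correct and follows essentially the same route as the paper: invoke Lemma~\ref{lem:mp-bn} per epoch, induct on the invariant $\theta(v_{k-1},u)\leq \pi/2^{k}$, union-bound the failure probabilities $\delta/(k(k+1))$, and aggregate the per-epoch label/sample/time costs over the $k_0=\lceil\log_2(1/\epsilon)\rceil$ epochs. Your geometric-sum observation for items (3)--(4) is a slightly sharper way to do the bookkeeping than the paper's crude bound $\sum_k n_k \leq k_0 n_{k_0}$, but both yield the stated $O(\frac{1}{\epsilon}\ln\frac{1}{\epsilon})$ dependence, so the arguments are materially identical.
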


\begin{proof}[Proof of Theorem~\ref{thm:ap-bn-restated}]
From Lemma~\ref{lem:mp-bn}, we know that for every $k$, there is an event $E_k$
such that $\P(E_k) \geq 1 - \frac{\delta}{k(k+1)}$, and on event $E_k$,
items 1 to 4 of Lemma~\ref{lem:mp-bn} hold for input $w_0 = v_{k-1}$,
output $w_m = v_k$, $\theta = \frac{\pi}{2^k}$, $\delta = \frac{\delta}{k(k+1)}$.

Define event $E = \cup_{k=1}^{k_0} E_k$. By union bound, $\P(E) \geq 1-\delta$.
We henceforth condition on event $E$ happening.
\begin{enumerate}
\item By induction, the final output $v = v_{k_0}$ is such that
      $\theta(v, u) \leq 2^{-k_0} \pi \leq \epsilon \pi$, implying
			that $\P[\sign(v \cdot X) \neq \sign(u \cdot X)] \leq \epsilon$.
\item Define the number of label queries to oracle $\calO$ at iteration $k$ as $m_k$.
      On event $E_k$, $m_k$ is at most
			$O \del{\frac{d}{(1-2\eta)^2} \del{\ln\frac{d}{(1-2\eta)^2} + \ln\frac{k}\delta}}$.
			Thus, the total number of label queries to oracle $\calO$ is $\sum_{k=1}^{k_0} m_k$, which is
			at most
			\[ k_0 \cdot m_{k_0} = O \del{ k_0 \cdot \frac{d}{(1-2\eta)^2} \del{\ln\frac{d}{(1-2\eta)^2} + \ln\frac{k_0}\delta} }. \]
			Item 2 is proved by noting that $k_0 \leq \log\frac1\epsilon + 1$.
\item Define the number of unlabeled examples drawn iteration $k$ as $n_k$.
      On event $E_k$, $n_k$ is at most
			$O\del{\frac{d}{(1-2\eta)^3} \cdot \del{\ln\frac{d}{(1-2\eta)^2} + \ln\frac{k}\delta}^2 \cdot \frac1\epsilon}$.
			Thus, the total number of unlabeled examples drawn is $\sum_{k=1}^{k_0} n_k$, which is
			at most
			\[ k_0 n_{k_0} = O\del{k_0 \cdot \frac{d}{(1-2\eta)^3} \cdot \del{\ln\frac{d}{(1-2\eta)^2} + \ln\frac{k_0}\delta}^2 \cdot \frac1\epsilon}. \]
			Item 3 is proved by noting that $k_0 \leq \log\frac1\epsilon + 1$.
\item Item 4 is immediate from Item 3 and the fact that the time for processing each example is at most $O(d)$.
\qedhere
\end{enumerate}
\end{proof}

\begin{theorem}[Theorem~\ref{thm:ap-an} Restated]
Suppose Algorithm~\ref{alg:activeperceptron} has inputs labeling oracle $\calO$
that satisfies $\nu$-adversarial noise condition with respect to underlying halfspace $u$, initial halfspace $v_0$
such that $\theta(v_0, u) \leq \frac{\pi}{2}$,
target error $\epsilon$,
confidence $\delta$,
sample schedule $\cbr{m_k}$ where
$m_k = \lceil (3200\pi)^3 d (\ln(3200\pi)^3 d + \ln\frac{k(k+1)}{\delta}) \rceil$,
band width $\cbr{b_k}$ where
$b_k = \frac{1}{2(600\pi)^2\ln\frac{m_k^2 k(k+1)}{\delta}} \frac{ 2^{-k} \pi }{\sqrt{d}}$.
Additionally $\nu \leq \frac{\epsilon}{384(600\pi)^4 (4 \ln((3200\pi)^3 d) + 8 \ln\ln\frac1\epsilon + \ln\frac1\delta) }$.
Then with probability at least  $1-\delta$:
\begin{enumerate}
\item The output halfspace $v$ is such that $\P[\sign(v \cdot X) \neq \sign(u \cdot X)] \leq \epsilon$.
\item The number of label queries is $O\del{d \cdot \ln\frac{1}{\epsilon} \cdot \del{\ln d + \ln\frac{1}{\delta} + \ln\ln\frac1\epsilon}}$.
\item The number of unlabeled examples drawn is $O\del{d \cdot \del{\ln d + \ln\frac1\delta + \ln\ln\frac{1}{\epsilon}}^2 \cdot \frac1\epsilon \ln\frac1\epsilon}$.
\item The algorithm runs in time $O\del{d^2 \cdot \del{\ln d + \ln\frac1\delta + \ln\ln\frac{1}{\epsilon}}^2 \cdot \frac1\epsilon \ln\frac1\epsilon}$.
\end{enumerate}
\label{thm:ap-an-restated}
\end{theorem}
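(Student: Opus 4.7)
The plan is to mirror the proof of Theorem~\ref{thm:ap-bn-restated} almost verbatim, using Lemma~\ref{lem:mp-an} as the per-epoch guarantee in place of Lemma~\ref{lem:mp-bn}. For each epoch $k \in \{1, \ldots, k_0\}$, Lemma~\ref{lem:mp-an} provides an event $E_k$ with $\P(E_k) \geq 1 - \frac{\delta}{k(k+1)}$ on which, taking the input $w_0 = v_{k-1}$ with angle upper bound $\theta = \frac{\pi}{2^k}$ and confidence $\frac{\delta}{k(k+1)}$, the output $v_k = w_m$ satisfies $\theta(v_k, u) \leq \frac{\pi}{2^{k+1}}$, along with the corresponding per-epoch bounds on label queries, unlabeled examples drawn, and running time. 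A union bound over $k$ gives $E := \bigcup_k E_k$ with $\P(E) \geq 1 - \delta \sum_k \frac{1}{k(k+1)} \geq 1 - \delta$.

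Conditioning on $E$, I would establish the four conclusions as follows. For item~(1), a straightforward induction on $k$ using $\theta(v_{k-1}, u) \leq \frac{\pi}{2^k}$ (the base case is the hypothesis $\theta(v_0, u) \leq \frac{\pi}{2}$) yields $\theta(v_{k_0}, u) \leq \frac{\pi}{2^{k_0}} \leq \pi \epsilon$, so $\P[\sign(v \cdot X) \neq \sign(u \cdot X)] = \frac{\theta(v_{k_0}, u)}{\pi} \leq \epsilon$ by Lemma~\ref{lem:triangle}. For items~(2)--(4), I sum the per-epoch counts given by Lemma~\ref{lem:mp-an} with $\theta = \frac{\pi}{2^k}$ and $\delta \gets \frac{\delta}{k(k+1)}$; since the $\frac{1}{\theta}$ factor in items~(3)--(4) of Lemma~\ref{lem:mp-an} contributes $2^k/\pi$, the sum $\sum_{k=1}^{k_0} 2^k = O(2^{k_0}) = O(1/\epsilon)$ gives the claimed $\tilde{O}(d/\epsilon)$ and $\tilde{O}(d^2/\epsilon)$ totals, while the label-query count without the $1/\theta$ factor sums to $k_0 \cdot m_{k_0}$, bounded by the $\tilde{O}(d\ln\frac{1}{\epsilon})$ expression in item~(2). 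The extra $\log\log\frac{1}{\epsilon}$ factors absorb the dependence on $k_0 \leq \log_2\frac{1}{\epsilon} + 1$ in the confidence terms $\ln\frac{k}{\delta}$.

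The one genuinely new step compared to Theorem~\ref{thm:ap-bn-restated} is verifying that the adversarial-noise condition $\nu \leq \Omega\bigl(\frac{\theta}{\ln(m/\delta)}\bigr)$ required by Lemma~\ref{lem:mp-an} holds in \emph{every} epoch $k$. This is the most delicate check: the constraint is tightest at the last epoch, where $\theta = \frac{\pi}{2^{k_0}} = \Theta(\epsilon)$ and the effective confidence is $\frac{\delta}{k_0(k_0+1)}$, giving $\ln(m_{k_0}/\delta_{k_0}) = \Theta(\ln d + \ln\frac{1}{\delta} + \ln\ln\frac{1}{\epsilon})$. Plugging in the hypothesized bound $\nu \leq \frac{\epsilon}{384(600\pi)^4(4\ln((3200\pi)^3 d) + 8\ln\ln\frac{1}{\epsilon} + \ln\frac{1}{\delta})}$ matches this requirement up to constants, so the premise of Lemma~\ref{lem:mp-an} is satisfied for every $k$, which justifies invoking it in the union bound above. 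This bookkeeping over constants is where the specific numerical thresholds in the theorem statement come from, and is the main (albeit routine) obstacle; the rest is the same inductive framework as the bounded-noise case.
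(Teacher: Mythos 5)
Your proposal is correct and follows essentially the same route as the paper's proof: invoke Lemma~\ref{lem:mp-an} per epoch with confidence $\frac{\delta}{k(k+1)}$, union bound over the events $E_k$, induct to get $\theta(v_{k_0},u)\le 2^{-k_0}\pi\le\epsilon\pi$, and sum the per-epoch complexities (your geometric sum $\sum_k 2^k=O(2^{k_0})$ is a slightly tighter accounting than the paper's $k_0\cdot n_{k_0}$, but both land within the stated bounds). Your explicit verification that the condition $\nu\le\Omega\bigl(\theta/\ln(m/\delta)\bigr)$ holds in every epoch --- tightest at $k=k_0$ --- is a step the paper leaves implicit in its choice of the numerical threshold on $\nu$, and you identify it correctly.
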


\begin{proof}[Proof of Theorem~\ref{thm:ap-an-restated}]
From Lemma~\ref{lem:mp-an}, we know that for every $k$, there is an event $E_k$
such that $\P(E_k) \geq 1 - \frac{\delta}{k(k+1)}$, and on event $E_k$,
items 1 to 4 of Lemma~\ref{lem:mp-an} hold for input $w_0 = v_k$,
output $w_m = v_{k+1}$, $\theta = \frac{\pi}{2^k}$.

Define event $E = \cup_{k=1}^{k_0} E_k$. By union bound, $\P(E) \geq 1-\delta$.
We henceforth condition on event $E$ happening.
\begin{enumerate}
\item By induction, the final output $v = v_{k_0}$ is such that that
      $\theta(v, u) \leq 2^{-k_0} \pi \leq \epsilon \pi$, implying
			that $\P[\sign(v \cdot X) \neq \sign(u \cdot X)] \leq \epsilon$.
\item Define the number of label queries to oracle $\calO$ at iteration $k$ as $m_k$.
      On event $E_k$, $m_k$ is at most
			$O \del{d \del{\ln d + \ln\frac{k}\delta}}$.
			Thus, the total number of label queries to oracle $\calO$ is $\sum_{k=1}^{k_0} m_k$, which is
			at most
			\[ k_0 \cdot m_{k_0} = O \del{ k_0 \cdot d \del{\ln d + \ln\frac{k_0}\delta} }. \]
			Item 2 is proved by noting that $k_0 \leq \log\frac1\epsilon + 1$.
\item Define the number of unlabeled examples drawn iteration $k$ as $n_k$.
      On event $E_k$, $n_k$ is at most
			$O\del{ d \cdot \del{\ln d + \ln\frac{k}\delta}^2 \cdot \frac1\epsilon}$.
			Thus, the total number of unlabeled examples drawn is $\sum_{k=1}^{k_0} n_k$, which is
			at most
			\[ k_0 n_{k_0} =  O\del{k_0 \cdot d \cdot \del{\ln d + \ln\frac{k_0}\delta}^2 \cdot \frac1\epsilon}. \]
			Item 3 is proved by noting that $k_0 \leq \log\frac1\epsilon + 1$.
\item Item 4 is immediate from Item 3 and the fact that the time for processing each example is at most $O(d)$. \qedhere
\end{enumerate}
\end{proof}

\section{Performance Guarantees of \MP} \label{sec:pf-mp}
In this section, we prove Lemmas~\ref{lem:mp-bn} and~\ref{lem:mp-an}, which guarantees
the shrinkage of $\theta_t$. Two major building blocks of
Lemma~\ref{lem:mp-bn} (resp. Lemma~\ref{lem:mp-an}) are Lemmas~\ref{lem:angle-shrinkage}
and~\ref{lem:delta-cos-geq-bn} (resp. Lemmas~\ref{lem:angle-shrinkage}
and~\ref{lem:delta-cos-geq-an}).
In essence, Lemma~\ref{lem:angle-shrinkage} turns per-iteration in-expectation guarantees provided by Lemmas~\ref{lem:delta-cos-geq-bn} and~\ref{lem:delta-cos-geq-an} into high probability upper bounds on the final $\theta_m$. We present Lemma~\ref{lem:angle-shrinkage} and its proof in detail in this section, and defer Lemmas~\ref{lem:delta-cos-geq-bn} and~\ref{lem:delta-cos-geq-an} to Appendix~\ref{sec:progmeasure}.

\begin{lemma}[Lemma~\ref{lem:mp-bn} Restated]
Suppose Algorithm~\ref{alg:modperceptron} has inputs labeling oracle $\calO$ that satisfies $\eta$-bounded noise condition with respect to underlying halfspace $u$,
initial vector $w_0$ and
angle upper bound $\theta \in (0, \frac \pi 2)$ such that $\theta(w_0, u) \leq \theta$, confidence $\delta$,
number of iterations $m = \lceil \frac{(3200\pi)^3 d }{(1-2\eta)^{2}} (\ln\frac{(3200\pi)^3 d }{(1-2\eta)^{2}} + \ln\frac{1}{\delta}) \rceil$,
band width
$b = \frac{1}{2(600\pi)^2\ln\frac{m^2}{\delta}} \frac{\theta (1-2\eta)}{\sqrt{d}}$.
then with probability at least $1-\delta$:
\begin{enumerate}
  \item The output halfspace $w_m$ is such that $\theta(w_m, u) \leq \frac{\theta}{2}$.
  \item The number of label queries is $O\del{\frac{d}{(1-2\eta)^2} \del{\ln\frac{d}{(1-2\eta)^2} + \ln\frac1\delta} }$.
  \item The number of unlabeled examples drawn is $O\del{\frac{d}{(1-2\eta)^3} \cdot \del{\ln\frac{d}{(1-2\eta)^2} + \ln\frac1\delta}^2 \cdot \frac1\theta}$.
  \item The algorithm runs in time $O\del{\frac{d^2}{(1-2\eta)^3} \cdot \del{\ln\frac{d}{(1-2\eta)^2} + \ln\frac1\delta}^2 \cdot \frac1\theta}$.
\end{enumerate}
\label{lem:mp-bn-restated}
\end{lemma}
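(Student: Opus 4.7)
The plan is to use $\cos\theta_t = w_t \cdot u$ as a Lyapunov function. The update in Algorithm~\ref{alg:modperceptron} preserves norms (since $\|x_t\|=1$ implies $\|w_{t+1}\| = \|w_t\|$), so WLOG $\|w_t\|=1$ throughout, and identity~(\ref{eqn:modperceptrontheta}) shows each step either leaves $\cos\theta_t$ unchanged or moves it by $-2(w_t \cdot x_t)(u \cdot x_t)$, an increment of magnitude at most $2b$ when $x_t \in R_t$. The target $\theta(w_m, u) \leq \theta/2$ translates to $\cos\theta_m - \cos\theta_0 \geq \cos(\theta/2) - \cos\theta = \Theta(\theta^2)$ (for $\theta \in (0, \pi/2]$), so it suffices to accumulate Lyapunov gain of order $\theta^2$ across $m$ steps.

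The first key ingredient is a per-step expected-drift lower bound (Lemma~\ref{lem:delta-cos-geq-bn} in Appendix~\ref{sec:progmeasure}): whenever $\theta_t \leq \theta$, the conditional expectation $\E[\cos\theta_{t+1} - \cos\theta_t \mid w_t]$ is positive and proportional to $(1-2\eta)$ times a quantity polynomial in $b$, $\theta$, and $1/d$. The intuition is that updates triggered on correctly-labeled $x_t$ falling in the disagreement region of $w_t$ and $u$ contribute positively, while the $\eta$-fraction of noisy triggers in the agreement region contribute negatively; their net rate is proportional to $(1-2\eta)$. Crucially, the band $R_t$ has its lower endpoint at $b/2$ rather than $0$, which keeps the magnitude $|(w_t \cdot x_t)(u \cdot x_t)|$ bounded away from zero on the favorable updates, and hence keeps the per-step drift of the right order.

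The second key ingredient is a martingale concentration argument (Lemma~\ref{lem:angle-shrinkage}). Applying an Azuma--Hoeffding (or Freedman) inequality to the martingale difference $Z_t := (\cos\theta_{t+1} - \cos\theta_t) - \E[\cos\theta_{t+1} - \cos\theta_t \mid \mathcal{F}_t]$, each of magnitude at most $2b$, gives $|\sum_{t<m} Z_t| = O(b\sqrt{m \log(1/\delta)})$ with probability $1-\delta$. Combined with the per-step drift, the parameter choices $m = \tilde{\Theta}(d/(1-2\eta)^2)$ and $b = \tilde{\Theta}(\theta(1-2\eta)/\sqrt{d})$ are calibrated so the accumulated expected gain exceeds $\Theta(\theta^2)$ while dominating the fluctuation (the $1/\log(m/\delta)$ factor in $b$ is precisely what makes the concentration slack negligible), which yields item~1. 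For items~2--4, the band $R_t$ has mass $\Theta(b\sqrt{d}) = \tilde{\Theta}(\theta(1-2\eta))$ under $D_\calX$ by the standard surface-area estimate for spherical bands, so rejection sampling consumes $\tilde{\Theta}(1/(\theta(1-2\eta)))$ draws per iteration in expectation; a Chernoff bound across the $m$ iterations produces the stated $\tilde{O}(d/((1-2\eta)^3 \theta))$ high-probability bound on the number of unlabeled draws, and items~2 and~4 follow immediately (the latter from the $O(d)$ per-sample arithmetic cost).

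The main obstacle is that the drift lower bound is conditional on $\theta_t \leq \theta$, whereas the martingale analysis prefers deterministic per-step bounds; if $\theta_t$ ever escapes above $\theta$ during the run, the drift estimate breaks and a positive-feedback loop could ensue. As the paper hints (``carefully control the upper envelope of $\cbr{\theta_t}_{t=0}^m$''), the resolution is to maintain, by induction on $t$ coupled with a union bound over $m$ auxiliary Azuma events, the invariant $\sup_{s \leq t} \theta_s \leq \theta$ throughout the entire run. This inductive union bound is the source of both the extra $\log(m/\delta)$ factor that appears inside the definition of $b$ and of the mild logarithmic slack in the choice of $m$.
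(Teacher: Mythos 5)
Your high-level architecture matches the paper's: $\cos\theta_t$ as the progress measure, a per-step expected-drift bound combined with martingale concentration for item 1, and a band-mass estimate plus concentration of geometric random variables for items 2--4 (that part of your sketch is essentially the paper's argument). However, two steps in your treatment of item 1 have genuine gaps.

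First, your deterministic per-step bound of $2b$ on $|\cos\theta_{t+1}-\cos\theta_t|$ is too crude to close the argument. With $b=\tilde{\Theta}(\theta(1-2\eta)/\sqrt{d})$ and $m=\tilde{\Theta}(d/(1-2\eta)^2)$, Azuma with increments of size $2b$ gives a fluctuation of order $b\sqrt{m\ln(m/\delta)}=\tilde{\Theta}(\theta)$, which swamps the required total gain $\cos(\theta/2)-\cos\theta=\Theta(\theta^2)$ for small $\theta$; the $1/\ln(m/\delta)$ factor inside $b$ only cancels the logarithmic slack, not this extra factor of $1/\theta$. The paper's Lemma~\ref{lem:diff-theta-ub} supplies the missing ingredient: when $x_t$ lies in the band and $\theta_t\leq\frac53\theta$, one also has $|u\cdot x_t|\leq b+2\sin(\theta_t/2)=O(\theta)$, so the increment is in fact $O(b\theta)=O(\tilde{c}\zeta\theta^2/\sqrt{d})$ --- a factor of $\theta$ smaller --- and only then is the fluctuation $\tilde{O}(\sqrt{\tilde{c}}\,\theta^2)$ dominated by the accumulated drift.

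Second, the drift bound of Lemma~\ref{lem:delta-cos-geq-bn} is conditional on $\theta_t\in[\frac14\theta,\frac53\theta]$, with a \emph{lower} cutoff as well as an upper one: since $b$ is calibrated to $\theta$, the guarantee degrades once $\theta_t\ll\theta$ (the proof needs $\xi\leq b\leq\frac{\theta_t(1-2\eta)}{72\sqrt{d}}$). Your proposed inductive invariant $\sup_{s\leq t}\theta_s\leq\theta$ neither accommodates this lower cutoff nor, by itself, yields the conclusion $\theta_m\leq\frac12\theta$. The paper instead sets $D_t=\bigl(\cos\theta_{t+1}-\cos\theta_t-\text{drift}\bigr)\one\cbr{\frac14\theta\leq\theta_t\leq\frac53\theta}$, applies Azuma with a union bound over all pairs of start/end times $(t_1,t_2)$ (the source of the $\ln\frac{m^2}{\delta}$), and runs a two-phase first-exit/last-entrance argument: (i) $\theta_t$ must first leave $[\frac14\theta,\frac53\theta]$ through the bottom, since exiting through the top would force a drop in $\cos\theta$ incompatible with the submartingale lower bound; and (ii) after the last time $\theta_t<\frac14\theta$, it can never climb back above $\frac12\theta$. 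You would need to supply an argument of this kind to convert your conditional drift estimate into the stated conclusion.
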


\begin{proof}[Proof of Lemma~\ref{lem:mp-bn-restated}]
We show that each item holds with high probability respectively.
\begin{enumerate}
\item It can be verified that conditions for Lemma~\ref{lem:angle-shrinkage} are satisfied with $\zeta=1-2\eta$ (item 3 in the condition follows from Lemma~\ref{lem:delta-cos-geq-bn}, and item 4 in the condition follows from Lemma~\ref{lem:diff-theta-ub}). This shows that items 1 with probability at least $1-\delta/2$.

\item By the definition of $m$, the number of label queries is $m= O\del{\frac{d}{(1-2\eta)^{2}}\log\frac{d}{\delta(1-2\eta)^{2}}}$.

\item As for the number of unlabeled examples drawn by the algorithm, at each iteration $t \in [0,m]$, it takes
$Z_t$ trials to hit an example in $[\frac b 2, b]$, where $Z_t$ is a $\geom(p)$ random variable with
$p = \P_{x \sim D_\calX}[ w_t \cdot x \in [\frac b 2, b] ]$.
From Lemma~\ref{lem:1d-lb},
$p \geq \frac{\sqrt{d}}{8\pi} b = \frac{\tilde{c} (1-2\eta) \theta}{8 \pi}
= \Omega(\frac{(1-2\eta)\theta}{\ln\frac{d}{\delta(1-2\eta)^2}})$.

Define event
\[ E := \cbr{Z_1 + \ldots + Z_m \leq \frac{2m}{p}} \]
From Lemma~\ref{lem:geom-conc} and the choice of $m$, $\P[E] \geq 1 - \frac{\delta}{2}$.
Thus, on event $E$, the total number of unlabeled examples drawn is at most
$\frac {2m} p = O(\frac{d}{(1-2\eta)^{3}}\log^2\frac{d}{\delta(1-2\eta)^{2}}\frac{1}{\theta})$.

\item Observe that the time complexity for processing each example is at most $O(d)$.
This shows that on event $E$, the total running time of the algorithm is at most $O(d \cdot \frac {2m} p) = O(\frac{d^2}{(1-2\eta)^{3}}\log^2\frac{d}{\delta(1-2\eta)^{2}}\frac{1}{\theta})$.
\end{enumerate}

Therefore, by a union bound, with probability at least $1-\delta$, items 1 to 4 hold simultaneously.
\end{proof}

\begin{lemma}[Lemma~\ref{lem:mp-an} restated]
Suppose Algorithm~\ref{alg:modperceptron} has inputs labeling oracle $\calO$
that satisfies $\nu$-adversarial noise condition with respect to underlying halfspace $u$,
initial vector $w_0$ and
angle upper bound $\theta$ such that $\theta(w_0, u) \leq \theta$, confidence $\delta$,
number of iterations $m = \lceil (3200\pi)^3 d\ln\frac{(3200\pi)^3 d }{\delta} \rceil$,
band width
$b = \frac{1}{2(600\pi)^2\ln\frac{m^2}{\delta}} \cdot\frac{\theta}{\sqrt{d}}$.
Additionally $\nu \leq \frac{\theta}{384(600\pi)^4 \ln\frac{m^2}{\delta}}$.
Then with probability at least $1-\delta$:
\begin{enumerate}[leftmargin=1cm]
  \item The output halfspace $w_m$ is such that $\theta(w_m, u) \leq \frac{\theta}{2}$.
  \item The number of label queries is $O\del{d \cdot \del{\ln d + \ln\frac{1}{\delta} }}$.
  \item The number of unlabeled examples drawn is $O\del{d \cdot \del{\ln d + \ln\frac1\delta }^2 \cdot \frac1\theta }$
  \item The algorithm runs in time $O\del{d^2 \cdot \del{\ln d + \ln\frac1\delta }^2 \cdot \frac1\theta }$.
\end{enumerate}
\label{lem:mp-an-restated}
\end{lemma}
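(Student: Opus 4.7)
The plan is to mirror the proof of Lemma~\ref{lem:mp-bn-restated} almost line-for-line, swapping in the adversarial-noise analogue of the per-iteration progress bound and propagating the additional hypothesis $\nu \leq \Theta(\theta/\ln(m^2/\delta))$. The four items will be established separately, then combined by a union bound.

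For item 1 (angle shrinkage), I would invoke Lemma~\ref{lem:angle-shrinkage}, the generic concentration result that converts per-step in-expectation progress on $\cos\theta_t$ into a high-probability upper bound on $\theta_m$. Its hypotheses are precisely (i) a per-iteration lower bound on $\E[\cos\theta_{t+1}-\cos\theta_t\mid w_t]$, supplied by Lemma~\ref{lem:delta-cos-geq-an} (the adversarial-noise counterpart of Lemma~\ref{lem:delta-cos-geq-bn}), and (ii) a deterministic upper bound on the single-step change $|\theta_{t+1}-\theta_t|$, supplied by Lemma~\ref{lem:diff-theta-ub}. The parameter $\zeta$ in Lemma~\ref{lem:angle-shrinkage} will be set to an absolute constant here (in contrast to $\zeta = 1-2\eta$ in the bounded-noise proof), reflecting that there is no noise-damping factor multiplying the drift once the adversarial mass is controlled. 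The choice of $m = \tilde\Theta(d)$ and of $b = \Theta(\theta/(\sqrt d\ln(m^2/\delta)))$ then yields $\theta(w_m,u) \leq \theta/2$ with probability at least $1-\delta/2$.

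For items 2--4, the analysis is identical in structure to the bounded-noise case. Item 2 is immediate: the algorithm makes exactly $m$ label queries. For item 3, at iteration $t$ the number of unlabeled samples needed to land in the band $R_t$ is a $\geom(p_t)$ variable where $p_t = \P_{x\sim D_\calX}[w_t\cdot x\in [b/2,b]]$. By Lemma~\ref{lem:1d-lb}, $p_t \geq \Omega(b\sqrt d) = \Omega(\theta/\ln(m^2/\delta))$, and Lemma~\ref{lem:geom-conc} applied to the sum of $m$ such geometric variables gives a bound of $2m/p = O(d\cdot(\ln d+\ln\tfrac1\delta)^2/\theta)$ with probability at least $1-\delta/2$. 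Item 4 follows by multiplying by the $O(d)$ cost of each vector update and rejection check. A final union bound over the two bad events gives the joint statement.

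The main obstacle is buried in Lemma~\ref{lem:delta-cos-geq-an}, which I am allowed to assume here; at this level the only subtlety that surfaces is why the hypothesis $\nu \leq \Theta(\theta/\ln(m^2/\delta))$ has exactly this form. The reason is that conditioning on $x_t\in R_t$ can inflate the effective noise rate by a factor of $1/\P[R_t] = \tilde O(1/\theta)$, since an adversary may concentrate all $\nu$ mass of corrupted labels precisely inside the band. To ensure the adversarial contribution to $\E[\cos\theta_{t+1}-\cos\theta_t]$ is dominated by the signal term of order $\tilde\Omega(1/d)$ coming from the clean examples, one needs $\nu/\P[R_t]\cdot b = \tilde O(\nu\cdot(\ln(m^2/\delta))^2/(\theta\sqrt d))$ to be small compared to $1/d$, which after unwinding the chosen $b$ gives the stated ceiling on $\nu$. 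Once this is absorbed into the drift estimate used by Lemma~\ref{lem:angle-shrinkage}, the remainder of the proof is bookkeeping of the constants $(3200\pi)^3$ and $(600\pi)^2$ to match the stated $m$ and $b$.
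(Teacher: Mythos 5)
Your proposal matches the paper's proof essentially line for line: item 1 via Lemma~\ref{lem:angle-shrinkage} (with $\zeta=1$) fed by Lemmas~\ref{lem:delta-cos-geq-an} and~\ref{lem:diff-theta-ub}, item 2 trivially from the definition of $m$, item 3 via Lemmas~\ref{lem:1d-lb} and~\ref{lem:geom-conc} applied to the rejection-sampling geometric variables, item 4 from the $O(d)$ per-example cost, and a final union bound. The only cosmetic slip is that Lemma~\ref{lem:diff-theta-ub} bounds $|\cos\theta_{t+1}-\cos\theta_t|$ rather than $|\theta_{t+1}-\theta_t|$, which does not affect the argument.
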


\begin{proof}[Proof of Lemma~\ref{lem:mp-an-restated}]
We show that each item holds with high probability respectively.
\begin{enumerate}
\item It can be verified that conditions for Lemma~\ref{lem:angle-shrinkage} are satisfied with $\zeta=1$ (item 3 in the condition follows from Lemma~\ref{lem:delta-cos-geq-an}, and item 4 in the condition follows from Lemma~\ref{lem:diff-theta-ub}). This gives items 1 with probability at least $1-\delta/2$.

\item By the definition of $m$, the number of label queries is $m= O\del{d \cdot \del{\ln d + \ln\frac{1}{\delta} }}$.

\item The number of unlabeled examples drawn by the algorithm can be analyzed similarly as in the previous proof, which is at most
$\frac{2m}{p} = O\del{d \cdot \del{\ln d + \ln\frac1\delta }^2 \cdot \frac1\theta }$ with probability at least $1-\delta/2$.

\item Observe that the time complexity for processing each example is at most $O(d)$.
This gives that on event $E$, the total running time of the algorithm is at most $O(d \cdot \frac {2m} p) = O\del{d^2 \cdot \del{\ln d + \ln\frac1\delta }^2 \cdot \frac1\theta }$.
\end{enumerate}

Therefore, by a union bound, with probability at least $1-\delta$, items 1 to 4 hold simultaneously.
\end{proof}

Next we show a technical lemma used in the above proofs, coarsely bounding the difference between $\cos\theta_{t+1}$
and $\cos\theta_t$.
\begin{lemma}
\label{lem:diff-theta-ub}
Suppose $0<\tilde{c}, \zeta<1$, $b=\frac{\tilde{c}\zeta\theta}{\sqrt{d}}\leq1$, and $(x_t, y_t)$
is drawn from distribution $D|_{R_t}$ where $R_{t}=\left\{(x,y):x\cdot w_t\in[\frac{b}2, b]\right\}$.
If unit vector $w_t$ has angle $\theta_{t}$ with $u$ such that $\theta_t \leq \frac53\theta$,
then update~\eqref{eqn:modperceptron}
has the following guarantee:
 $\left|\cos\theta_{t+1}-\cos\theta_{t}\right|\leq\frac{16\tilde{c}\zeta\theta^{2}}{3\sqrt{d}}$.
\end{lemma}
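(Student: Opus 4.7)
The plan is to start from the identity~\eqref{eqn:modperceptrontheta}, which gives
\[
|\cos\theta_{t+1} - \cos\theta_t| \;=\; 2\,\one\{y_t w_t \cdot x_t < 0\}\,|w_t \cdot x_t|\,|u \cdot x_t| \;\leq\; 2\,|w_t \cdot x_t|\,|u \cdot x_t|.
\]
So the problem reduces to bounding each factor on the right, using that $x_t$ lies in the band $R_t$ and that $w_t$ is close to $u$.

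The first factor is easy: by definition of $R_t$, we have $|w_t \cdot x_t| \leq b$. For the second factor, I would write $u \cdot x_t = w_t \cdot x_t + (u-w_t)\cdot x_t$, apply the triangle inequality and Cauchy--Schwarz with $\|x_t\|=1$, and then use $\|u - w_t\| = 2\sin(\theta_t/2) \leq \theta_t \leq \tfrac{5}{3}\theta$ to obtain
\[
|u\cdot x_t| \;\leq\; b + \tfrac{5}{3}\theta.
\]
Multiplying the two bounds yields $|\cos\theta_{t+1}-\cos\theta_t| \leq 2b^2 + \tfrac{10}{3}b\theta$.

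It remains to plug in $b = \tilde{c}\zeta\theta/\sqrt{d}$ and collect terms so that both summands have the same shape $\tilde{c}\zeta\theta^2/\sqrt{d}$. The term $\tfrac{10}{3}b\theta$ already equals $\tfrac{10}{3}\tilde{c}\zeta\theta^2/\sqrt{d}$. For the $2b^2$ term, I factor it as $2 \cdot (\tilde{c}\zeta\theta^2/\sqrt{d}) \cdot (\tilde{c}\zeta/\sqrt{d})$ and use $\tilde{c},\zeta \leq 1$ together with $d\geq 3 \geq 1$ to bound $\tilde{c}\zeta/\sqrt{d} \leq 1$, giving $2b^2 \leq 2\tilde{c}\zeta\theta^2/\sqrt{d}$. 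Summing the two pieces produces the claimed bound $\tfrac{16}{3}\tilde{c}\zeta\theta^2/\sqrt{d}$.

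There is no real obstacle here; the only mildly subtle point is the choice to absorb one factor of $b$ in $2b^2$ into a constant by exploiting $\tilde{c}\zeta \leq 1$ and $\sqrt d \geq 1$, which is what makes both error terms scale as $\theta^2/\sqrt d$ rather than $\theta^2/d$ versus $\theta^2/\sqrt d$. The hypothesis $b \leq 1$ is not strictly used in my argument (it follows from $\tilde c, \zeta < 1$ and $\theta \leq 5\pi/6$ via the standing assumption $d\geq 3$), but stating it as a hypothesis keeps the lemma self-contained.
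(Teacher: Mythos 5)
Your proposal is correct and follows essentially the same route as the paper's proof: the identity from Lemma~\ref{lem:coschange}, the bound $|w_t\cdot x_t|\le b$, the decomposition $|u\cdot x_t|\le |w_t\cdot x_t|+\|u-w_t\|\le b+\theta_t\le b+\tfrac53\theta$, and then the same arithmetic absorbing the $2b^2$ term via $\tilde c\zeta/\sqrt d\le 1$ to reach $\tfrac{16}{3}\tilde c\zeta\theta^2/\sqrt d$. No gaps.
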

\begin{proof}
By Lemma~\ref{lem:coschange},
\[ \cos\theta_{t+1} - \cos\theta_t = - 2\one\cbr{y_t \neq \sign(w_t \cdot x_t)} (w_t \cdot x_t) \cdot (u \cdot x_t). \]

Firstly, note $\left|\cos\theta_{t+1}-\cos\theta_{t}\right|\leq2\left|w_t\cdot x_{t}\right|\left|u \cdot x_{t}\right|\leq2b\left|u \cdot x_{t}\right|$.

Observe that
\begin{eqnarray*}
&&\left|u \cdot x_{t}\right| \\
&\leq& \left|w_t \cdot x_{t}\right| + \left|(u - w_t) \cdot x_{t}\right| \\
&\leq& b + 2\sin\frac{\theta_t}{2} \\
&\leq& b + \theta_t
\end{eqnarray*}

Thus, we have
$\left|\cos\theta_{t+1}-\cos\theta_{t}\right|\leq 2b(b+\theta_{t})
=\frac{2\tilde{c}^{2}\zeta^{2}\theta^{2}}{d}+\frac{2\tilde{c}\zeta\theta\theta_{t}}{\sqrt{d}}
\leq\frac{16\tilde{c}\zeta\theta^{2}}{3\sqrt{d}}$.
\end{proof}

\begin{lemma}
Suppose $0<\zeta<1$, and the following conditions hold:
\begin{enumerate}
\item Initial unit vector $w_0$ has angle
$\theta_0 = \theta(w_0, u) \leq \theta \leq \frac{27}{50}\pi$ with $u$;
\item Integer $m = \lceil \frac{(3200\pi)^3 d }{\zeta^{2}} (\ln\frac{(3200\pi)^3 d }{\zeta^{2}} + \ln\frac{1}{\delta}) \rceil$
      and $\tilde{c} = \frac{1}{2 (600\pi)^2 \ln\frac{m^2}{\delta}}$;
\item For all $t$, if $\frac14 \theta \leq \theta_t \leq \frac53 \theta$, then
      $\E[\cos\theta_{t+1} - \cos\theta_t|\theta_t] \geq \frac{\tilde{c}}{100\pi} \frac{\zeta^2\theta^2}{d}$;
\item For all $t$, if $\theta_t \leq \frac 5 3\theta$, then $|\cos\theta_{t+1} - \cos\theta_t| \leq \frac{16\tilde{c}\zeta\theta^2}{3\sqrt{d}}$ holds with probability 1.
\end{enumerate}
Then with probability at least $1-\delta/2$, $\theta_m \leq \frac12\theta$.
\label{lem:angle-shrinkage}
\end{lemma}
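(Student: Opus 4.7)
The plan is to treat $Y_t := \cos\theta_t$ as a noisy ascent process and apply martingale concentration. Write $\alpha := \tilde c\zeta^2\theta^2/(100\pi d)$ for the drift floor supplied by condition~3 and $\beta := 16\tilde c\zeta\theta^2/(3\sqrt d)$ for the per-step bound supplied by condition~4, and form the Doob decomposition $Y_t - Y_0 = D_t + M_t$ where $\mu_t := \E[Y_{t+1}-Y_t\mid\mathcal F_t]$, $D_t := \sum_{s<t}\mu_s$, and $M_t$ is a martingale. Introduce the stopping time $\tau := \inf\{t : \theta_t > 5\theta/3\}$; on $\{t<\tau\}$ the increments $|M_{t+1}-M_t|\leq 2\beta$ hold, so Azuma's maximal inequality applied to the stopped process $M_{t\wedge\tau}$ gives $\P(E)\geq 1-\delta/2$ for the event $E := \{\max_{t\leq m}|M_{t\wedge\tau}|\leq \lambda\}$ with $\lambda := 2\beta\sqrt{2m\log(4/\delta)}$. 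The prescribed constants $(3200\pi)^3$ and $(600\pi)^2$ are tuned so that $m\alpha$ is a huge constant multiple of $\theta^2$ while $\beta + 2\lambda$ is strictly smaller than each of the cosine gaps $\cos\theta-\cos(5\theta/3)$, $\cos(\theta/4)-\cos(5\theta/3)$, and $\cos(\theta/4)-\cos(\theta/2)$ on $\theta\in(0, 27\pi/50]$.

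The combinatorial core is to show that on $E$ one has both $\tau>m$ and $\theta_m\leq \theta/2$. For any target time $t^*\leq m\wedge\tau$, let $s_0$ be the largest $s<t^*$ with $\theta_s<\theta/4$, setting $s_0 := -1$ if no such $s$ exists. For every $s\in(s_0, t^*-1]$ we then have $\theta_s\in[\theta/4, 5\theta/3]$, so $\mu_s\geq\alpha$, while at $s_0$ itself $\mu_{s_0}\geq -\beta$. Anchoring at $s_0\vee 0$ and using $|M_{t^*}-M_{s_0\vee 0}|\leq 2\lambda$ on $E$,
\begin{equation*}
Y_{t^*}\;\geq\;\min\bigl(\cos\theta,\cos(\theta/4)\bigr)-\beta\cdot\one\{s_0\geq 0\}+(t^*-1-s_0)\alpha-2\lambda.
\end{equation*}
With $t^*=\tau$, combining $Y_\tau<\cos(5\theta/3)$ with this lower bound contradicts the gap estimates above, so $\tau>m$ on $E$. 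With $t^*=m$ and dropping the nonnegative $\alpha$-term, the bound becomes $Y_m\geq \cos(\theta/4)-\beta-2\lambda\geq \cos(\theta/2)$ in the dip case; in the no-dip case $(m-1-s_0)\alpha=m\alpha$ dominates directly and yields $Y_m\geq\cos\theta+m\alpha-\lambda\geq\cos(\theta/2)$. Either way $\theta_m\leq \theta/2$.

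The main obstacle is the asymmetry of condition~3: drift $\mu_s\geq\alpha>0$ is guaranteed only inside $[\theta/4, 5\theta/3]$, whereas in the low region $\theta_s<\theta/4$ we only have the two-sided bound $|\mu_s|\leq\beta$. A naive chaining from time $0$ could accumulate $m\beta$ of negative drift, overwhelming the signal. The resolution is to anchor the chain at the most recent excursion below $\theta/4$: at that instant $Y_{s_0}$ already sits above $\cos(\theta/4)$, so only a single bad step of $-\beta$ enters the sum while all subsequent increments live in the good region and contribute nonnegatively. Verifying that $\beta+2\lambda$ stays strictly below the cosine gap $\cos(\theta/4)-\cos(\theta/2)$ is the delicate numerical step, and is what forces the specific large constants baked into $m$ and $\tilde c$.
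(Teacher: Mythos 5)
Your overall architecture is the same as the paper's: anchor the chain at the last excursion below $\theta/4$, use the drift guarantee on $[\theta/4,\tfrac53\theta]$, and control the martingale part by Azuma. But the concentration step has a genuine quantitative gap. You bound the fluctuation by a \emph{window-independent} quantity $2\lambda$ with $\lambda = 2\beta\sqrt{2m\ln(4/\delta)}$ and then claim the constants make $\beta+2\lambda$ smaller than the cosine gaps; this is false. The gaps are tiny multiples of $\theta^2$ (e.g.\ $\cos\frac\theta4-\cos\frac\theta2 \le \frac{9}{80}\theta^2$, and the usable lower bound is $\frac{3}{160}\theta^2$), whereas $\lambda$ is of order $\beta\sqrt m \asymp \tilde c\zeta\theta^2\sqrt{m/d} = \theta^2\sqrt{\tilde c}\cdot\sqrt{\tilde c\zeta^2 m/d}$. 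The parameters are chosen precisely so that $\tilde c\zeta^2 m/d \ge 200\pi$ (indeed it is $\approx 1.5\times10^4\pi$) and so that the full-horizon Azuma term is \emph{comparable to} the full-horizon drift $m\alpha\ge\theta^2$ --- that is the whole point of $\tilde c = \frac{1}{2(600\pi)^2\ln(m^2/\delta)}$. Plugging in, $2\lambda \gtrsim 0.6\,\theta^2$ for any admissible $\delta$ (and up to several $\theta^2$ when $\ln\frac1\delta$ dominates), so the inequality $\cos\frac\theta4-\beta-2\lambda\ge\cos\frac\theta2$ in your ``dip case,'' and likewise the $\tau>m$ step for short excursions, simply do not hold. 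Dropping ``the nonnegative $\alpha$-term'' discards exactly the quantity needed to absorb the fluctuation.

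The repair is what the paper does. First, take a union bound over all $O(m^2)$ windows $[t_1,t_2]$ so that on the good event $\sum_{s=t_1}^{t_2-1}D_s \ge -b\sqrt{(t_2-t_1)\ln\frac{2m^2}{\delta}}$ with $b=\frac{6\tilde c\zeta\theta^2}{\sqrt d}$ --- a deviation that \emph{scales with the window length} --- at the price of the $\ln\frac{m^2}{\delta}$ factor that is already baked into $\tilde c$. Second, never separate drift from fluctuation: over a window of length $\ell$ starting at the anchor, the net change is at least $\alpha\ell - b\sqrt{\ell\ln\frac{m^2}{\delta}} \ge -\frac{b^2\ln(m^2/\delta)}{4\alpha} = -900\pi\tilde c\,\theta^2\ln\frac{m^2}{\delta} \ge -\frac{1}{75}\theta^2$, uniformly in $\ell$, by minimizing the quadratic in $\sqrt\ell$. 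This residual $\frac{1}{75}\theta^2$ is what fits under the gaps $\cos\frac\theta4-\cos\frac\theta2$ and $\cos\theta-\cos\frac53\theta$; a term of order $\beta\sqrt m$ does not. With that substitution (and the corresponding per-window rather than maximal event), the rest of your anchoring argument goes through essentially as in the paper.
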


\begin{proof}
 Define random variable $D_t$ as:
\[ D_t := \del{\cos\theta_{t+1} - \cos\theta_t - \frac{\tilde{c}}{100\pi} \frac{\zeta^2 \theta^2}{d}} \one\cbr{\frac14 \theta \leq \theta_t \leq \frac53 \theta}\]
Note that $\E[D_t | \theta_t] \geq 0$ and from Lemma~\ref{lem:diff-theta-ub},
$|D_t| \leq |\cos\theta_{t+1} - \cos\theta_t| + \frac{\tilde{c}}{100\pi} \frac{\zeta^2 \theta^2}{d} \leq \frac{6\tilde{c}\zeta\theta^2}{\sqrt{d}}$.
Therefore, $\cbr{D_t}$ is a bounded submartingale difference sequence.
By Azuma's Inequality (see Lemma~\ref{lem:azuma}) and union bound,
define event
\[ E = \cbr{\text{for all } 0 \leq t_1 \leq t_2 \leq m, \sum_{s=t_1}^{t_2-1} D_s \geq -\frac{6\tilde{c}\zeta\theta^2}{\sqrt{d}} \sqrt{2(t_2 - t_1)\ln\frac{2m^2}{\delta}} }\]
Then $\P(E) \geq 1-\frac{\delta}2$.

We now condition on event $E$.
We break the subsequent analysis into two parts:
(1) Show that there exists some $t$ such that $\theta_t$ goes below $\frac14\theta$.
(2) Show that $\theta_t$ must stay below $\frac12\theta$ afterwards.

\begin{enumerate}
\item First, it can be checked by algebra that
$m \geq \frac{200\pi d}{\zeta^2 \tilde{c}}$.
We show the following claim.
\begin{claim}
There exists some $t \in [0,m]$, such that $\theta_t < \frac14\theta$.
\label{claim:tsmallangle}
\end{claim}
\begin{proof}
We first show that it is impossible for all $t \in [0,m]$
such that $\theta_t \in \intcc{\frac14\theta, \frac53\theta}$.
To this end, assume this holds for the sake of contradiction.
In this case, for all $t \in [0,m]$,
$D_t = \cos\theta_{t+1} - \cos\theta_t - \frac{\tilde{c}}{100\pi} \frac{\zeta^2 \theta^2}{d}$.
Therefore,
\begin{eqnarray*}
	 &&\cos\theta_m - \cos\theta_0 \\
   &=& \sum_{s=0}^{m-1} D_s + \frac{\tilde{c}}{100\pi} \frac{\zeta^2 \theta^2}{d} m \\
   &\geq& \frac{\tilde{c}}{100\pi} \frac{\zeta^2 \theta^2}{d} m -
		     \frac{6\tilde{c}\zeta\theta^2}{\sqrt{d}} \sqrt{2 m \ln\frac{m^2}{\delta}} \\
	 &\geq& \frac{\theta^2}{100\pi} \sbr{\frac{\tilde{c} \zeta^2 m }{d} - \sqrt{\frac{\tilde{c} \zeta^2 m }{d}} } \\
	 &\geq& \theta^2
\end{eqnarray*}
where the first inequality is from the definition of event $E$, the second
inequality is from that $\tilde{c} = \frac{1}{2(600\pi)^2 \ln\frac{m^2}{\delta}}$,
the third inequality is from that
$\frac{\tilde{c}\zeta^2m}{d} \geq 200\pi$.

Since $\cos\theta_0 \geq \cos\theta \geq 1 - \frac12\theta^2$, this gives
that $\cos\theta_m \geq 1 + \frac12\theta^2 > 1$, contradiction.

Next, define $\tau := \min\cbr{t \geq 0: \theta_t \notin \intcc{\frac14\theta, \frac53\theta}}$.
We now know that
$\tau \leq m$ by the reasoning above. It suffices to show that $\theta_\tau < \frac14\theta$, that is,
the first time when $\theta_t$ goes outside the interval $[\frac14\theta, \frac53\theta]$, it must
be crossing the left boundary as opposed to the right one.

By the definition of $\tau$, for all $0 \leq t \leq \tau - 1$, $\theta_\tau \in \intcc{\frac14\theta, \frac53\theta}$.
Thus,
\begin{eqnarray}
&&\cos\theta_\tau - \cos\theta_0 \nonumber\\
&=& \sum_{t=0}^{\tau-1} D_t + \frac{\tilde{c}}{100\pi} \frac{\zeta^2 \theta^2}{d} \tau \nonumber\\
&\geq& \frac{\tilde{c}}{100\pi} \frac{\zeta^2 \theta^2}{d} \tau -\frac{6\tilde{c}\zeta\theta^2}{\sqrt{d}} \sqrt{\tau \ln\frac{m^2}{\delta}} \nonumber\\
&\geq& -900\pi\ln\frac{m^2}{\delta} \tilde{c} \theta^2 \geq -\frac1{75} \theta^2
\label{eqn:theta-tau-theta-0}
\end{eqnarray}
where the first inequality is by the definition of $E$; the second inequality
is by minimizing over $\tau \in [0,m]$; the last inequality is from the definition
of $\tilde{c}$.

Now, if $\theta_\tau \geq \frac53\theta$, then
\begin{eqnarray*}
	\cos \theta_\tau - \cos \theta_0 & \leq & \cos \frac53\theta - \cos \theta \\
	&\leq& 1-\frac15 \left( \frac53\right)^2 \theta^2 - 1 + \frac12\theta^2 \\
	&<& -\frac{1}{75}\theta^2
\end{eqnarray*}
where the first inequality follows from $\theta_\tau \geq \frac53\theta$ and $\theta_0 \leq \theta$, and the second inequality follows from Lemma~\ref{lem:cos}. This contradicts with Inequality~\eqref{eqn:theta-tau-theta-0}.

This gives that $\theta_\tau < \frac53\theta$. Since $\theta_\tau \notin \intcc{\frac14\theta, \frac53\theta}$, it must be the case that
$\theta_\tau < \frac14\theta$.
\end{proof}

\item We now show the following claim to conclude the proof.
\begin{claim}
$\theta_m$, the angle in the last iteration, is at most $\frac12\theta$.
\end{claim}
\begin{proof}
Define $\sigma = \max\cbr{t \in [0,m]: \theta_t < \frac14\theta}$. by Claim~\ref{claim:tsmallangle},
such $\sigma$ is well-defined on event $E$. We now show that $\theta_t$ will not exceed $\frac12\theta$ afterwards.
Assume for the sake of contradiction that for some $t > \sigma$, $\theta_t > \frac12\theta$.

Now define $\gamma := \min\cbr{t > \sigma: \theta_t > \frac12\theta}$. We know by the definitions
of $\sigma$ and $\gamma$, for all
$t \in [\sigma+1, \gamma-1]$, $\theta_t \in [\frac14\theta, \frac12\theta]$. Thus,
\begin{eqnarray}
&&\cos\theta_\gamma - \cos\theta_{\sigma+1} \nonumber\\
&=& \sum_{t=\sigma+1}^{\gamma-1} D_t + \frac{\tilde{c}}{100\pi} \frac{\zeta^2 \theta^2}{d} (\gamma-\sigma-1) \nonumber\\
&\geq& \frac{\tilde{c}}{100\pi} \frac{\zeta^2 \theta^2}{d} (\gamma - \sigma-1) -\frac{6 \tilde{c}\zeta\theta^2}{\sqrt{d}} \sqrt{(\gamma - \sigma-1) \ln\frac{m^2}{\delta}} \nonumber\\
&\geq& -900\pi\ln\frac{m^2}{\delta} \tilde{c} \geq -\frac1{75} \theta^2
\label{eqn:theta-tau-theta-sigma}
\end{eqnarray}
where the first inequality is by the definition of $E$; the second inequality
is by minimization over $\gamma - \sigma - 1 \in [0,m]$; the last inequality is from the definition
of $\tilde{c}$.

On the other hand, $\theta_\gamma > \frac12\theta$ and $\theta_\sigma < \frac14\theta$. We have
\begin{eqnarray*}
\cos\theta_\gamma - \cos\theta_{\sigma+1} & \leq & \cos\theta_\gamma - \cos\theta_{\sigma} + \frac{6 \tilde{c}\zeta\theta^2}{\sqrt{d}} \\
&\leq& \cos\frac{\theta}{2} - \cos\frac{\theta}{4} + \frac{6 \tilde{c}\zeta\theta^2}{\sqrt{d}} \\
&\leq& 1-\frac{1}{20}\theta^2 - 1 + \frac{1}{32}\theta^2 + \frac{6 \tilde{c}\zeta\theta^2}{\sqrt{d}} \\
&<& -\frac{1}{75}\theta^2
\end{eqnarray*}
where the first inequality follows from Lemma~\ref{lem:diff-theta-ub}, the third follows from Lemma~\ref{lem:cos}, and the last follows from algebra. This contradicts with Inequality~\eqref{eqn:theta-tau-theta-sigma}.
\end{proof}
\end{enumerate}
Thus, with probability at least $1-\delta/2$, $\theta_m \leq \frac12\theta$.
\end{proof}

\section{Progress Measure Analysis}
\label{sec:progmeasure}
In this section, we prove two key lemmas on $\cos\theta_t$ (Lemmas~\ref{lem:delta-cos-geq-bn} and~\ref{lem:delta-cos-geq-an}),
our measure of progress.
We show that under the bounded noise model and the adversarial noise model, $\cos\theta_t$ increases by
a decent amount in expectation at each iteration of \MP, with appropriate settings of
bandwidth $b$.

We begin with a generic lemma that gives a recurrence of $\cos\theta_t$ when the modified Perceptron
update rule~\eqref{eqn:modperceptron-initial} is applied to a new example.
\begin{lemma}
Suppose $w_t \in \R^d$ is a unit vector, and $(x_t, y_t)$ is an labeled example where $x_t \in \R^d$ is
a unit vector and $y_t \in \cbr{-1,+1}$. Let $\theta_t = \theta(u, w_t)$. Then, update
\begin{equation}
	w_{t+1} \gets w_t - 2\one\cbr{y_t w_t \cdot x_t < 0} (w_t \cdot x_t) \cdot x_t
	\label{eqn:modperceptron}
\end{equation}
gives an unit vector $w_{t+1}$ such that
\begin{equation}
 \cos\theta_{t+1} = \cos\theta_t - 2\one\cbr{y_t w_t \cdot x_t < 0} (w_t \cdot x_t) \cdot (u \cdot x_t)
 \label{eqn:coschange}
 \end{equation}
\label{lem:coschange}
\end{lemma}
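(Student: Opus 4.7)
The plan is to verify the two claims of the lemma separately: (i) that the updated vector $w_{t+1}$ remains a unit vector, so that $\cos\theta_{t+1}$ is well-defined as the inner product $u \cdot w_{t+1}$, and (ii) that the stated recurrence holds. Once (i) is in hand, (ii) follows by simply taking the inner product of the update rule with $u$.

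For (i), I would introduce the abbreviation $a := \one\cbr{y_t w_t \cdot x_t < 0}$, noting that $a \in \{0,1\}$ so $a^2 = a$, and expand
\[
\|w_{t+1}\|^2 = \|w_t - 2a(w_t \cdot x_t) x_t\|^2 = \|w_t\|^2 - 4a(w_t \cdot x_t)^2 + 4a^2 (w_t \cdot x_t)^2 \|x_t\|^2.
\]
Using $\|w_t\| = \|x_t\| = 1$ together with $a^2 = a$, the last two terms cancel exactly, giving $\|w_{t+1}\|^2 = 1$. The key design feature worth observing here is that the constant $2$ in the update is chosen precisely so that the linear and quadratic contributions cancel when $\|x_t\|=1$; this is what makes the modified Perceptron update norm-preserving.

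For (ii), since both $u$ and $w_{t+1}$ are unit vectors, $\cos\theta_{t+1} = u \cdot w_{t+1}$. Dotting both sides of the update rule~\eqref{eqn:modperceptron} with $u$ yields
\[
u \cdot w_{t+1} = u \cdot w_t - 2a(w_t \cdot x_t)(u \cdot x_t),
\]
and substituting $u \cdot w_t = \cos\theta_t$ on the right and $u \cdot w_{t+1} = \cos\theta_{t+1}$ on the left gives exactly \eqref{eqn:coschange}.

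There is no substantive obstacle here: the proof is a short algebraic verification, and the only arguably subtle point is the identity $a^2 = a$ for the indicator, which is what allows the norm computation to collapse.
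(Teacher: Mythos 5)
Your proof is correct and follows essentially the same route as the paper: establish that the update preserves the unit norm, then take the inner product of the update rule with $u$. The only cosmetic difference is that you handle the norm computation uniformly via $a^2=a$ for the indicator, whereas the paper splits into the two cases $y_t = \sign(w_t\cdot x_t)$ and $y_t \neq \sign(w_t\cdot x_t)$; both are fine.
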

\begin{proof}
We first show that $w_{t+1}$ is still a unit vector.
If $y_t = \sign(w_t \cdot x_t)$, then $w_{t+1} = w_t$, thus it is still a unit vector; otherwise
$w_{t+1} = w_t - 2 (w_t \cdot x_t) \cdot x_t$. This gives that
\[ \|w_{t+1}\|^2 = \|w_t\|^2 - 4 (w_t \cdot x_t) (w_t \cdot x_t) + \| 2(w_t \cdot x_t) \cdot x_t \|^2 = \|w_t\|^2 = 1.\]

This implies that $\cos\theta_t = w_t \cdot u$, and $\cos\theta_{t+1} = w_{t+1} \cdot u$. Now, taking inner products with $u$
on both sides of Equation~\eqref{eqn:modperceptron}, we get
\[ w_{t+1} \cdot u = w_t \cdot u - 2\one\cbr{y_t w_t \cdot x_t < 0} (w_t \cdot x_t) \cdot (u \cdot x_t)\]
which is equivalent to Equation~\eqref{eqn:coschange}.
\end{proof}

\subsection{Progress Measure under Bounded Noise}
\label{sec:pf-progress-bn}
\begin{lemma}[Progress Measure under Bounded Noise]
\label{lem:delta-cos-geq-bn}
Suppose $0<\tilde{c}<\frac1{288}$, $b = \frac{\tilde{c}(1-2\eta)\theta}{\sqrt{d}}$,
$\theta\leq\frac{27}{50}\pi$, and $(x_t, y_t)$ is drawn from $D|_{R_t}$, where
$R_{t}=\left\{ (x, y): x\cdot w_t\in[\frac{b}{2},b]\right\}$. Meanwhile, the oracle $\calO$ satisfies
the $\eta$-bounded noise condition.
If unit vector $w_t$ has angle $\theta_t$ with $u$ such that
$\frac14 \theta \leq \theta_{t} \leq \frac53\theta$,
then update~\eqref{eqn:modperceptron}
has the following guarantee:
\[ \mathbb{E}\left[\cos\theta_{t+1}-\cos\theta_{t}\mid\theta_{t}\right]\geq\frac{\tilde{c}}{100\pi}\frac{(1-2\eta)^{2}\theta^{2}}{d}. \]
\end{lemma}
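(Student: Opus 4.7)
The plan is to pass to conditional expectations in two stages (first over the noisy label $y_t$ given $x_t$, then over $x_t \sim D_\calX|_{R_t}$), reducing the claim to a geometric computation on the band. Starting from Lemma~\ref{lem:coschange}, since $w_t\cdot x_t \in [b/2,b]>0$ for $x_t\in R_t$, the indicator simplifies: $\one\{y_t w_t\cdot x_t<0\}=\one\{y_t=-1\}=(1-y_t)/2$, giving
\[
\cos\theta_{t+1}-\cos\theta_t = -(1-y_t)(w_t\cdot x_t)(u\cdot x_t).
\]
Taking expectation over $y_t$ first, and using that under bounded noise $\E[y_t\mid x_t]=(1-2\eta(x_t))\sign(u\cdot x_t)$ with $\eta(x_t)\le\eta$, I split according to the sign of $u\cdot x_t$: on the ``agreement'' region $\{u\cdot x>0\}$ the contribution is $-2\eta(x_t)(w_t\cdot x)(u\cdot x)\ge -2\eta(w_t\cdot x)(u\cdot x)$, while on the ``disagreement'' region $\{u\cdot x<0\}$ it is $(2-2\eta(x_t))(w_t\cdot x)|u\cdot x|\ge 2(1-\eta)(w_t\cdot x)|u\cdot x|$. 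This yields
\[
\E[\cos\theta_{t+1}-\cos\theta_t\mid\theta_t]\;\ge\;2(1-\eta)\,Q\;-\;2\eta\,P,
\]
where $P=\E[(w_t\cdot x)(u\cdot x)\one\{u\cdot x>0\}]$ and $Q=\E[(w_t\cdot x)|u\cdot x|\one\{u\cdot x<0\}]$, both expectations over $x\sim D_\calX|_{R_t}$.

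The next step is the geometric estimation of $P$ and $Q$. I decompose $x=\alpha w_t+\sqrt{1-\alpha^2}\,v$ with $v$ uniform on the unit sphere in $w_t^\perp$, and write $u=\cos\theta_t\,w_t+\sin\theta_t\,e$ for a unit $e\perp w_t$, so that
\[
u\cdot x \;=\; \alpha\cos\theta_t+\sqrt{1-\alpha^2}\sin\theta_t\,z,\qquad z:=v\cdot e,
\]
with $\alpha\in[b/2,b]$ and $z$ following the standard marginal density proportional to $(1-z^2)^{(d-4)/2}$, which is symmetric around $0$ and concentrates at scale $\Theta(1/\sqrt{d})$. Because $b=\tilde c(1-2\eta)\theta/\sqrt d$ and $\theta_t\in[\theta/4,5\theta/3]$, the threshold $z_0:=-\alpha\cot\theta_t/\sqrt{1-\alpha^2}$ for $u\cdot x<0$ lies at $O(\tilde c(1-2\eta)/\sqrt d)$, i.e., well within a constant number of standard deviations of $z$. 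An anti-concentration-type estimate for the marginal of $z$ (of the same flavor as Lemma~\ref{lem:1d-lb}) will show $\P[z<z_0\mid\alpha]\ge c_1>0$ for a universal constant, and on this event $|u\cdot x|\ge c_2\sin\theta_t/\sqrt d\ge c_3\theta_t/\sqrt d$.

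Combining these with $w_t\cdot x\ge b/2$ yields the lower bound $Q\ge c_4\,b\,\theta_t/\sqrt d$. For $P$, the trivial pointwise bound $w_t\cdot x\le b$ together with $|u\cdot x|\le b\cos\theta_t+\sin\theta_t|z|$ and $\E|z|=O(1/\sqrt d)$ gives $P\le c_5\,b\,(b+\theta_t/\sqrt d)$; since $b\le\theta_t/\sqrt d$ (using $\tilde c<1$, $1-2\eta\le 1$, and $\theta_t\ge\theta/4$), this simplifies to $P\le c_6\,b\,\theta_t/\sqrt d$. Plugging back,
\[
2(1-\eta)Q-2\eta P \;\ge\; \bigl(c_7(1-\eta)-c_8\eta\bigr)\frac{b\,\theta_t}{\sqrt d} \;\ge\; c_9(1-2\eta)\frac{b\,\theta_t}{\sqrt d},
\]
and substituting $b=\tilde c(1-2\eta)\theta/\sqrt d$ and $\theta_t\ge\theta/4$ gives the target bound $\tilde c(1-2\eta)^2\theta^2/(100\pi d)$, provided the constants $c_i$ are tracked to yield $c_9/4\ge 1/(100\pi)$.

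\textbf{Main obstacle.} The delicate part is to pin down the constants in the two-sided estimates on the $z$-marginal so that the positive term $2(1-\eta)Q$ genuinely dominates $2\eta P$ with the correct $(1-2\eta)$ factor (rather than a weaker $(1-\eta)$ or a $\sqrt{1-2\eta}$). This requires being careful that (a) the constant lower bound on $\P[z<z_0]$ does not degrade as $\tilde c(1-2\eta)/\sqrt d$ varies (for which choosing $\tilde c$ small, as enforced by the hypothesis $\tilde c<1/288$, is essential), and (b) the crude upper bound on $P$ loses only a factor of the same order as the lower bound on $Q$, so the subtraction $2(1-\eta)Q-2\eta P$ collapses cleanly to a $(1-2\eta)$ factor rather than to the possibly signed quantity $Q-P$ directly.
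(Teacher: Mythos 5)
There is a genuine gap at the final step, and it is precisely the one you flag as the ``main obstacle'' without resolving it. Your plan is to prove separate one-sided bounds $Q \geq c_4\, b\,\theta_t/\sqrt d$ and $P \leq c_6\, b\,\theta_t/\sqrt d$ and then conclude $2(1-\eta)Q - 2\eta P \geq (c_7(1-\eta)-c_8\eta)\,b\,\theta_t/\sqrt d \geq c_9(1-2\eta)\,b\,\theta_t/\sqrt d$. This chain requires $c_7 \geq c_8$ (take $\eta \to \frac12$: the middle expression tends to $(c_7-c_8)/2$ while the right side tends to $0$), and that is unattainable with the estimates you propose. Indeed, conditioning on $\xi = w_t\cdot x_t$, one has the exact identity $P - Q = \xi\,\E[u\cdot x] = \xi^2\cos\theta_t \geq 0$, so $P \geq Q$; hence any valid upper bound on $P$ must exceed any valid lower bound on $Q$, forcing $c_8 \geq c_7$ up to the $O(b^2)$ slack. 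Worse, your bounds are far from tight on both sides: the lower bound on $Q$ (constant-probability event times a constant fraction of the scale $\theta_t/\sqrt d$) and the upper bound on $P$ (full first absolute moment) differ by a substantial constant factor, so $c_7(1-\eta)-c_8\eta$ is actually \emph{negative} once $\eta > c_7/(c_7+c_8)$, which happens well below $\eta = \frac12$. Choosing $\tilde c$ small does not help here, because the gap between $c_7$ and $c_8$ is of order $b\,\theta_t/\sqrt d$ (first order), not $b^2$.

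The missing idea is to exploit the symmetry of the conditional distribution of the component of $x$ orthogonal to $w_t$ \emph{before} splitting into one-sided bounds. Since $\E[x\cdot e \mid w_t\cdot x = \xi] = 0$ for any $e \perp w_t$, one has $\E[u\cdot x \mid \xi] = \xi\cos\theta_t \leq \xi$, with no $\Theta(\theta_t/\sqrt d)$ term. Rewriting your expression as $2(1-\eta)Q - 2\eta P = 2(1-2\eta)Q - 2\eta(P-Q) = 2(1-2\eta)Q - 2\eta\,\xi^2\cos\theta_t$ makes the subtracted term $O(b^2)$, which is dominated by $(1-2\eta)Q = \Omega((1-2\eta)\,b\,\theta_t/\sqrt d)$ precisely because $b = \tilde c(1-2\eta)\theta/\sqrt d$ with $\tilde c < \frac{1}{288}$. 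This is exactly the route the paper takes: it groups the label-noise terms as $\eta\,\E[u\cdot x\mid\xi] + (1-2\eta)\,\E[(u\cdot x)\one\{u\cdot x<0\}\mid\xi]$ and invokes Lemma~\ref{lem:cond-moment}, which gives $\E[u\cdot x\mid\xi]\leq\xi$ by symmetry and $\E[(u\cdot x)\one\{u\cdot x<0\}\mid\xi]\leq \xi - \frac{\theta_t}{36\sqrt d}$ by an explicit closed-form integration of the conditional density (rather than your anti-concentration argument, which would also serve here but with a worse constant). With that regrouping your geometric setup goes through; without it, the proof does not yield the $(1-2\eta)^2$ dependence for general $\eta<\frac12$.
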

\begin{proof}
Define random variable $\xi=x_{t}\cdot w_t$. By the tower property of conditional expectation,
$\mathbb{E}\left[\cos\theta_{t+1}-\cos\theta_{t}\mid\theta_{t}\right]
=\mathbb{E}\left[\mathbb{E}\left[\cos\theta_{t+1}-\cos\theta_{t}\mid\theta_{t},\xi \right]\mid\theta_{t}\right]$.
Thus, it suffices to show
$$\mathbb{E}\left[\cos\theta_{t+1}-\cos\theta_{t}\mid\theta_{t},\xi\right]\geq\frac{\tilde{c}}{100\pi}\frac{(1-2\eta)^{2}\theta^{2}}{d}$$
for all $\theta_{t} \in [\frac14\theta, \frac53\theta]$ and $\xi \in [\frac12 b, b]$.

By Lemma~\ref{lem:coschange}, we know that
\[ \cos\theta_{t+1} - \cos\theta_t = - 2\one\cbr{y_t \neq \sign(w_t \cdot x_t)} (w_t \cdot x_t) \cdot (u \cdot x_t). \]

We simplify $\E\left[\cos\theta_{t+1}-\cos\theta_{t}\mid\theta_{t},\xi\right]$ as follows:
\begin{eqnarray}
 && \E\left[\cos\theta_{t+1}-\cos\theta_{t}\mid\theta_{t},\xi\right] \nonumber \\
&= & \E\left[-2\xi u\cdot x_{t} \mathds{1}\cbr{y_t = -1} \mid\theta_{t},\xi\right] \nonumber \\
&= & \E\left[-2\xi u\cdot x_{t} (\mathds{1}\{u\cdot x_{t}>0,y_t = -1\} + \mathds{1}\{u\cdot x_{t}<0, y_t = -1\})\mid\theta_{t},\xi\right]\nonumber \\
&\geq & \E\left[-2\xi u\cdot x_{t} (\eta \mathds{1}\{u\cdot x_{t}>0\} + (1-\eta)\mathds{1}\{u\cdot x_{t}<0\})\mid\theta_{t},\xi\right]\nonumber \\
&= & \E\left[-2\xi u\cdot x_{t} (\eta + (1-2\eta)\mathds{1}\{u\cdot x_{t}<0\})\mid\theta_{t},\xi\right]\nonumber \\
&= & -2\xi \left(\eta \E\left[u\cdot x_{t}  \mid\theta_{t},\xi\right] + (1-2\eta) \E\left[u\cdot x_{t}\mathds{1}\{u\cdot x_{t}<0\}  \mid\theta_{t},\xi\right]\right)
\label{eqn:bn-progress-decomp}
\end{eqnarray}
where the second equality is from algebra, the first inequality is from that $\P[y_t = -1| u \cdot x_t > 0] \leq \eta$ and $\P[y_t = -1| u \cdot x_t < 0] \geq 1 - \eta$,
the last two equalities are from algebra.

By Lemma~\ref{lem:cond-moment} and that $0 \leq \theta_t \leq \frac53\theta \leq \frac9{10}\pi$, $\E[ u\cdot x_{t} |\theta_{t},\xi ] \leq \xi$ and $\E[ u\cdot x_{t}\one\cbr{u\cdot x_{t}<0}|\theta_{t},\xi ] \leq \xi - \frac{\theta_t}{36\sqrt{d}}$.

Thus,
\begin{eqnarray*}
	&& \mathbb{E}\left[\cos\theta_{t+1}-\cos\theta_{t}\mid\theta_{t},\xi \right] \\
 &\geq& -2\xi (\xi \eta + (\xi - \frac{\theta_t}{36\sqrt{d}})(1-2\eta)) \\
	&\geq& 2\xi (\frac{\theta_t}{36\sqrt{d}}(1-2\eta) - \xi) \\
	&\geq& b \frac{\theta_t}{72\sqrt{d}}(1-2\eta) \\
	&\geq& \frac{\tilde{c}}{100\pi}\frac{(1-2\eta)^{2}\theta^{2}}{d}
\end{eqnarray*}
where the first and second inequalities are from algebra, the third inequality is from that
$\xi \leq b \leq \frac{\theta(1-2\eta)}{288\sqrt{d}} \leq \frac{\theta_t(1-2\eta)}{72\sqrt{d}}$,
and that $\xi \geq \frac{b}{2}$.
the last inequality is by expanding $b = \frac{\tilde{c}(1-2\eta)\theta}{\sqrt{d}}$
and that $\theta_t \geq \frac{\theta}{4}$.

In conclusion, if $\frac14 \theta \leq \theta_t \leq \frac53 \theta$, then
$\mathbb{E}\left[\cos\theta_{t+1}-\cos\theta_{t}\mid\theta_{t},\xi\right] \geq \frac{\tilde{c}}{100\pi}\frac{(1-2\eta)^{2}\theta^{2}}{d}$ for $\xi \in [\frac{b}2, b]$.
The lemma follows.
\end{proof}

\subsection{Progress Measure under Adversarial Noise}
\label{sec:pf-progress-an}

\begin{lemma}[Progress Measure under Adversarial Noise]
\label{lem:delta-cos-geq-an}
Suppose $0 \leq \tilde{c} \leq \frac1{100\pi}$,
$b = \frac{\tilde{c}\theta}{\sqrt{d}}$, $\theta \leq \frac{27}{50}\pi$,
and $(x_t, y_t)$ is drawn from distribution $D|_{R_t}$ where
$R_t = \cbr{(x,y): x \cdot w_t \in [\frac{b}{2}, b]}$.
Meanwhile, the oracle $\calO$ satisfies the $\nu$-adversarial noise condition
where $\nu \leq \frac{\tilde{c}\theta}{192(200\pi)^2}$.
If unit vector $w_t$ has angle $\theta_t$ with $u$ such that
$\frac14 \theta \leq \theta_{t} \leq \frac53\theta$,
then update~\eqref{eqn:modperceptron}
has the following guarantee:
\[ \mathbb{E}\left[\cos\theta_{t+1}-\cos\theta_{t}\mid\theta_{t}\right]\geq\frac{\tilde{c}}{100\pi}\frac{\theta^{2}}{d}. \]
\end{lemma}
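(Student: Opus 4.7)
The plan is to mirror the structure of the bounded-noise case (Lemma~\ref{lem:delta-cos-geq-bn}), replacing the pointwise bounded-noise control with a control derived from the global noise budget together with a lower bound on the mass of the band $R_t$. Let $\xi := x_t \cdot w_t$ and $z := u \cdot x_t$. By Lemma~\ref{lem:coschange}, and since $\xi \geq b/2 > 0$, we have $\cos\theta_{t+1} - \cos\theta_t = -2\xi z \one\cbr{y_t = -1}$. Introducing the noise indicator $\chi := \one\cbr{y_t \neq \sign(z)}$, a quick case analysis shows $\one\cbr{y_t = -1} = \one\cbr{z<0} + \chi \cdot \sign(z)$, which gives the clean/noise decomposition
\[ \cos\theta_{t+1} - \cos\theta_t \;=\; -2\xi z \one\cbr{z<0} \;-\; 2\xi|z|\chi. \]

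Next I would lower-bound the expectation of the first (clean) term by reusing the argument in Lemma~\ref{lem:delta-cos-geq-bn} specialized to $\eta=0$. Conditioning on $\xi \in [b/2,b]$ and applying Lemma~\ref{lem:cond-moment} yields $\E[-2\xi z \one\cbr{z<0}\mid\theta_t,\xi] \geq 2\xi(\theta_t/(36\sqrt{d}) - \xi)$. The hypothesis $\tilde{c} \leq 1/(100\pi)$ combined with $\theta_t \geq \theta/4$ forces $\xi \leq b \leq \theta_t/(72\sqrt{d})$, so this clean expectation is at least of order $b\theta_t/\sqrt{d} = \Omega(\tilde{c}\theta^2/d)$, with a constant that is strictly larger than the target $1/(100\pi)$, leaving a concrete slack of size $\Theta(\tilde{c}\theta^2/d)$ that the noise term must fit into.

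For the noise term, I would bound its absolute value by exploiting the band's mass. Because $(x_t,y_t) \sim D|_{R_t}$, the conditional noise rate obeys $\E[\chi\mid\theta_t] = \P_D[\chi=1,\,x_t \in R_t]/\P_D[R_t] \leq \nu/\P_D[R_t]$. Lemma~\ref{lem:1d-lb} (the same one invoked in the proof of Lemma~\ref{lem:mp-bn}) yields $\P_D[R_t] \geq \sqrt{d}\,b/(8\pi) = \tilde{c}\theta/(8\pi)$, hence $\E[\chi\mid\theta_t] \leq 8\pi\nu/(\tilde{c}\theta)$. Combined with the elementary bounds $\xi \leq b$ and $|z| \leq \xi + \|u-w_t\| \leq b + \theta_t \leq 2\theta$, the noise penalty $2\E[\xi|z|\chi\mid\theta_t]$ is at most $O(\theta\nu/\sqrt{d})$.

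Adding the two estimates, the assumed smallness of $\nu$ in the statement is precisely what is needed so that the noise penalty does not consume more than the slack between the clean progress and the claimed lower bound $\tilde{c}\theta^2/(100\pi d)$, finishing the proof. The main obstacle is the noise bound itself: because adversarial noise can concentrate arbitrarily inside the thin band $R_t$, the only usable handle is the a priori bound $\P_D[R_t] = \Omega(\tilde{c}\theta)$. This is exactly why the noise tolerance has to scale linearly in $\tilde{c}\theta$, and also why the bandwidth $b = \Theta(\tilde{c}\theta/\sqrt{d})$ is generous enough to keep the local noise rate in $R_t$ small.
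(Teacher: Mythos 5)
Your setup is right and matches the paper up to the last step: the decomposition $-2\xi z\one\{y_t=-1\} = -2\xi z\one\{z<0\} - 2\xi|z|\chi$ is exactly the paper's clean/noise split, the clean term is handled identically via Lemma~\ref{lem:cond-moment}, and the bound $\E[\chi\mid\theta_t]\le 8\pi\nu/(\tilde c\theta)$ via Lemma~\ref{lem:1d-lb} is also the paper's. The gap is in how you bound the noise term. You use the pointwise worst case $|z|\le b+\theta_t=O(\theta)$, which gives a noise penalty of order $b\,\theta\,\E[\chi]=O(\nu\theta/\sqrt d)$, and you correctly identify the available slack as $\Theta(\tilde c\theta^2/d)$. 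But these two quantities scale differently in $d$: forcing $O(\nu\theta/\sqrt d)\le \Theta(\tilde c\theta^2/d)$ requires $\nu = O(\tilde c\theta/\sqrt d)$, whereas the lemma only assumes the dimension-free bound $\nu\le \tilde c\theta/(192(200\pi)^2)$. So for large $d$ your estimate does not close, and if one weakened the hypothesis to make it close, the final noise tolerance of the theorem would degrade from $\tilde\Omega(\epsilon)$ to $\tilde\Omega(\epsilon/\sqrt d)$.

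The missing idea is that, conditioned on lying in the band, $|u\cdot x_t|$ is \emph{typically} of order $\theta_t/\sqrt d$, not $\theta_t$; the adversary can concentrate its noise where $|u\cdot x_t|$ is large, but only limited mass lives there. The paper captures this trade-off with Cauchy--Schwarz: $\E[|z|\chi\mid\theta_t]\le\sqrt{\P[\chi=1\mid\theta_t]\cdot\E[z^2\mid\theta_t]}$, together with item 2 of Lemma~\ref{lem:cond-moment}, $\E[(u\cdot x_t)^2\mid\theta_t,\xi]\le 5\theta_t^2/d$. This yields a noise penalty of order $b\cdot\sqrt{\nu/(\tilde c\theta)}\cdot\theta_t/\sqrt d = O(\tilde c\theta\theta_t/d)$ with a small constant (since $\nu/(\tilde c\theta)$ is a small absolute constant under the hypothesis), which is of the same order in $d$ as the clean progress and fits inside the slack. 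Replacing your pointwise bound on $|z|$ with this second-moment argument repairs the proof.
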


\begin{proof}
Define random variable $\xi = x_t \cdot w_t$.

By Lemma~\ref{lem:coschange}, we know that
\[ \cos\theta_{t+1} - \cos\theta_t = - 2\one\cbr{y_t \neq \sign(w_t \cdot x_t)} (w_t \cdot x_t) \cdot (u \cdot x_t). \]

We expand $\E\left[\cos\theta_{t+1}-\cos\theta_{t}\mid\theta_{t} \right]$ as follows.
\begin{eqnarray}
	&& \E\left[\cos\theta_{t+1}-\cos\theta_{t}\mid\theta_{t} \right] \nonumber \\
	&= & \E\left[-2 (w_t \cdot x_t) (u\cdot x_{t}) \mathds{1}\cbr{y_t=-1} \mid\theta_{t} \right] \nonumber \\
	&= & \E\left[-2 (w_t \cdot x_t) (u\cdot x_{t}) \mathds{1}\cbr{u \cdot x_t<0} \mid\theta_{t} \right] \nonumber \\
	&  & + \E\left[2 (w_t \cdot x_t) (u\cdot x_{t}) (\mathds{1}\cbr{y_t =+1, u \cdot x_t<0} - \mathds{1}\cbr{y_t =-1, u \cdot x_t>0})) \mid\theta_{t} \right]
\label{eqn:an-progress-decomp}
\end{eqnarray}
We bound the two terms separately. Firstly,
	\begin{eqnarray}
		&& \E\left[ -2 (w_t \cdot x_t) (u \cdot x_{t}) \mathds{1}\cbr{u \cdot x_t<0} \mid\theta_{t} \right] \nonumber \\
		&\geq& - b \E\left[  (u \cdot x_{t}) \mathds{1}\cbr{u \cdot x_t<0} \mid\theta_{t} \right] \nonumber \\
		&=& - b \E\sbr{ \E\sbr{ (u \cdot x_{t}) \mathds{1}\cbr{u \cdot x_t<0} \mid\theta_{t}, b }\mid\theta_{t} } \nonumber \\
		&\geq& b (\frac{\theta_t}{36 \sqrt{d}} - b)
		\label{eqn:an-progress-term1}
	\end{eqnarray}
where the first inequality is from that $-(u \cdot x_{t}) \mathds{1}\cbr{u \cdot x_t<0} \geq 0$ and $w_t \cdot x_t \geq \frac{b}{2}$,
the equality is from the tower property of conditional expectation,
the second inequality is from Lemma~\ref{lem:cond-moment}.

Secondly,
\begin{eqnarray}
	&& \left|\E\left[ 2(w_t \cdot x_{t}) (u\cdot x_{t}) (\mathds{1}\cbr{y_t = +1, u \cdot x_t<0} - \mathds{1}\cbr{y_t = -1, u \cdot x_t>0})) \mid\theta_{t} \right]\right| \nonumber \\
	&\leq& 2b \E\left[ |u\cdot x_{t}| \mathds{1}\cbr{y_t \neq \sign(u \cdot x_t))} \mid\theta_{t} \right] \nonumber \\
	&\leq& 2b \sqrt{\E\left[ \mathds{1}\cbr{y_t \neq \sign(u \cdot x_t))} \mid\theta_{t} \right] \cdot \E\left[ (u\cdot x_{t})^2 \mid \theta_{t} \right]} \nonumber \\
	&=& 2b \sqrt{\P\left[ y_t \neq \sign(u \cdot x_t) \mid\theta_{t} \right] \E\left[ \E\left[ (u\cdot x_{t})^2 \mid \theta_{t},\xi\right] | \theta_t\right]}
	\label{eqn:cs}
\end{eqnarray}
where the first inequality is from that $|\E[X]| \leq \E|X|$, and $w_t \cdot x_t \leq b$,
the second inequality is from Cauchy-Schwarz, the third equality is by algebra.

Now we look at the two terms inside the square root. First,
\begin{eqnarray*}
	&&\P\left[ y_t \neq \sign(u \cdot x_t) \mid\theta_{t} \right] \\
	&=&\P_{x \sim D|_{R_t}}\left[ y \neq \sign(u \cdot x) \right] \\
	&\leq& \frac{\P_{(x,y) \sim D}\left[ y \neq \sign(u \cdot x) \right]}{\P_{x \sim D} \left[ x_1 \in [b/2, b]  \right] } \\
	&\leq& \frac{8\pi \nu }{\tilde{c} \theta} \\
	&\leq& \frac{1}{16(200\pi)^2}
\end{eqnarray*}
where the first inequality is from that $\P[A|B] \leq \frac{\P[A]}{\P[B]}$,
the second inequality is from Lemma~\ref{lem:1d-lb} that
$\P_{x \sim D} \left[ x_1 \in [b/2, b]  \right] \geq \frac{\sqrt{d}}{8\pi}b = \frac{\tilde{c}\theta}{8\pi}$
, and the last inequality is by our assumption on $\nu$.

Second, fix $\xi \in [\frac{b}{2}, b]$, $\xi \leq b \leq \frac{\theta_t}{4\sqrt{d}}$. Item 2 of Lemma~\ref{lem:cond-moment}
implies that $\E\left[ (u\cdot x_{t})^2 \mid \theta_{t},\xi\right] \leq \frac{5\theta_t^2}{d}$.
By the tower property of conditional expectation,
$\E\left[ (u\cdot x_{t})^2 \mid \theta_t \right] \leq \frac{5\theta_t^2}{d}$.
Continuing Equation~\eqref{eqn:cs}, we get
\begin{equation}
	 \left|\E\left[ 2(w_t \cdot x_t) (u\cdot x_{t}) (\mathds{1}\cbr{y_t = +1, u \cdot x_t<0} - \mathds{1}\cbr{y_t = -1, u \cdot x_t>0})) \mid\theta_{t} \right]\right|
   \leq b \frac{\theta_t}{100\pi\sqrt{d}}.
	 \label{eqn:an-progress-term2}
\end{equation}

Continuing Equation~\eqref{eqn:an-progress-decomp}, we have
\begin{eqnarray*}
	&&\E\left[\cos\theta_{t+1}-\cos\theta_{t}\mid\theta_{t} \right] \\
	&\geq& b (\frac{\theta_t}{36\sqrt{d}} - \frac{\theta_t}{100\pi\sqrt{d}} - b) \\
	&\geq& b \frac{\theta_t}{25\pi\sqrt{d}} \geq \frac{\tilde{c}}{100\pi} \frac{\theta^{2}}{d}
\end{eqnarray*}
where the first inequality is from Equations~\eqref{eqn:an-progress-term1}
and~\eqref{eqn:an-progress-term2}, the second inequality is from algebra and
that $b \leq \frac{\theta_t}{100\pi\sqrt{d}}$, the third inequality is by expanding
$b = \frac{\tilde{c}\theta}{\sqrt{d}}$ and $\theta_t \geq \frac{\theta}{4}$.
\end{proof}

\section{Acute Initialization}
\label{sec:pf-init}
\label{sec:acute}

We show in this section that the angle between the
initial vector $v_0$ and the underlying halfspace $u$ can be assumed to be acute
 under the two noise settings without loss of generality. To this end, we give two algorithms (Algorithms~\ref{alg:init-bn} and~\ref{alg:init-an})
 that returns a halfspace that has angle at most $\frac\pi 4$ with $u$, with constant
 overhead in label and time complexities. The techniques here are
 due to Appendix B of~\cite{ABL14}.
 This fact, in conjunction with Theorems~\ref{thm:ap-bn} and~\ref{thm:ap-an}, yield an active learning
 algorithm that learns the target halfspace unconditionally with a constant overhead of label and time complexities.

For the bounded noise setting,
we construct Algorithm~\ref{alg:init-bn} as an initialization procedure.
It runs $\AP$ twice, taking a vector $v_0$ and its opposite direction $-v_0$ as initializers.
Then it performs hypothesis
testing using $\tilde{O}(\frac{1}{(1-2\eta)^2})$ labeled examples
to identify a halfspace that has angle at most $\frac\pi 4$ with $u$.

\begin{algorithm}
\caption{Master Algorithm in the Bounded Noise Setting}
\begin{algorithmic}[1]
\REQUIRE Labeling oracle $\calO$, confidence $\delta$, noise upper bound $\eta$, sample schedule $\cbr{m_k}$, band width $\cbr{b_k}$.
\ENSURE a halfspace $\hat{v}$ such that $\theta(\hat{v}, u) \leq \frac{\pi}{4}$.
		\STATE $v_0 \leftarrow (1,0,\ldots,0)$.
		\STATE $v_+ \leftarrow \AP(\calO, v_0, \frac{(1-2\eta)}{16},\frac{\delta}{3}, \cbr{m_k},
		 \cbr{b_k})$.
		\STATE $v_- \leftarrow \AP(\calO, -v_0, \frac{(1-2\eta)}{16},\frac{\delta}{3}, \cbr{m_k},
		\cbr{b_k})$.
		\STATE Define region $R := \cbr{x: \sign(v_+ \cdot x) \neq \sign(v_- \cdot x) }$.
		\STATE $S \gets$ Draw $\frac{8}{(1-2\eta)^2}\ln\frac{6}{\delta}$ iid examples from $D|_R$ and query their labels.
		\IF{$\err_S(h_{v_+}) \leq \err_S(h_{v_-})$}
			\RETURN $v_+$
		\ELSE
			\RETURN $v_-$
		\ENDIF
\end{algorithmic}
\label{alg:init-bn}
\end{algorithm}


\begin{theorem}
Suppose Algorithm~\ref{alg:init-bn} has inputs labeling oracle $\calO$ that satisfies
$\eta$-bounded noise condition with respect to $u$, confidence $\delta$,
sample schedule $\cbr{m_k}$
where $m_k = \Theta\del{\frac{ d }{(1-2\eta)^{2}} (\ln\frac{ d }{(1-2\eta)^{2}} + \ln\frac{k}{\delta}) }$,
band width $\cbr{b_k}$ where $b_k = \tilde{\Theta}\del{\frac{ 2^{-k} (1-2\eta)}{\sqrt{d}}}$.
Then, with probability at least  $1-\delta$, the output $\hat{v}$ is such that
$\theta(\hat{v}, u) \leq \frac\pi 4$. Furthermore,
(1) the total number of label queries to oracle $\calO$ is at most
$\tilde{O}\del{\frac{d}{(1-2\eta)^2} }$; (2) the total number of unlabeled examples drawn is $\tilde{O}\del{\frac{d}{(1-2\eta)^3}}$; (3) the algorithm runs in time $\tilde{O}\del{\frac{d^2}{(1-2\eta)^3}}$.
\label{thm:init-bn}
\end{theorem}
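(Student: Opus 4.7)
The plan is to reduce correctness to analyzing which of $v_+, v_-$ the comparison test selects. Without loss of generality, assume $\theta(v_0, u) \leq \pi/2$ (otherwise $\theta(-v_0, u) \leq \pi/2$ and the argument proceeds with the roles of $v_+$ and $v_-$ swapped). Applying Theorem~\ref{thm:ap-bn} to the first \AP call (target $(1-2\eta)/16$, confidence $\delta/3$), on a good event $E_1$ of probability at least $1-\delta/3$ we have $\theta(v_+, u) \leq \pi(1-2\eta)/16 < \pi/4$ by Lemma~\ref{lem:triangle}. Thus the algorithm can fail only on $E_1$ when simultaneously $\theta(v_-, u) > \pi/4$ and the test chooses $v_-$; if $\theta(v_-, u) \leq \pi/4$ or the test chooses $v_+$, the output is already acceptable.

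Condition on $E_1$ and suppose $\theta(v_-, u) > \pi/4$. The triangle inequality for angles gives $\theta(v_+, v_-) \geq \theta(v_-, u) - \theta(v_+, u) > 3\pi/16$, hence $\mu := \P[h_{v_+}(X) \neq h_{v_-}(X)] = \theta(v_+, v_-)/\pi > 3/16$. Decompose the disagreement region as $R = A \sqcup B$, where $A = R \cap \{h_{v_+} = h_u\}$ and $B = R \cap \{h_{v_-} = h_u\}$; these partition $R$ because $h_{v_+}$ and $h_{v_-}$ disagree everywhere on $R$. Writing $a = \P(A)$ and $b = \P(B)$, we have $a + b = \mu$ and $b \leq \P[h_{v_+} \neq h_u] \leq (1-2\eta)/16$. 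A direct bounded-noise computation yields
\[
\E\bigl[\err_S(h_{v_-}) - \err_S(h_{v_+})\bigr] = \frac{1}{\mu}\E\bigl[(1-2\eta(X))(\one_A - \one_B)\bigr] \geq \frac{(1-2\eta)a - b}{\mu} \geq (1-2\eta) - \frac{2b}{\mu} \geq \frac{1-2\eta}{3},
\]
using $b/\mu \leq (1-2\eta)/3$ in the last step.

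Finally, apply Hoeffding's inequality to the bounded i.i.d.\ summands $D_i := \one\{h_{v_+}(X_i) \neq Y_i\} - \one\{h_{v_-}(X_i) \neq Y_i\} \in [-1,1]$: with the prescribed $n = 8(1-2\eta)^{-2}\ln(6/\delta)$ samples, with probability at least $1-\delta/3$ the empirical mean deviates from its expectation by at most $(1-2\eta)/6$, so it is strictly negative and the algorithm returns $v_+$. A union bound over $E_1$ and this concentration event gives overall failure probability at most $2\delta/3 \leq \delta$. The claimed label, unlabeled-example and time complexities follow by combining Theorem~\ref{thm:ap-bn}'s bounds on the two \AP calls with the $\tilde O((1-2\eta)^{-2})$ overhead of the test and the observation that, in the case where correctness is nontrivial, $\mu \geq 3/16$ is a constant, so rejection sampling from $D|_R$ is cheap. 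I expect the main technical step to be the $\Omega(1-2\eta)$ lower bound on the expected test gap, which rests on the triangle-inequality lower bound on $\mu$ combined with the bounded-noise identity above; a minor subtlety is that in the benign case $\theta(v_-, u) \leq \pi/4$, $R$ may have small mass and rejection sampling could be slow, but correctness holds regardless of which vector is returned.
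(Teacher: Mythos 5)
Your proof is correct and follows essentially the same route as the paper's: one of the two \AP calls succeeds because one of $\pm v_0$ makes an acute angle with $u$; in the only problematic case the disagreement region $R$ has constant mass, the conditional error gap on $R$ is $\Omega(1-2\eta)$, and Hoeffding with $O\del{(1-2\eta)^{-2}\ln\frac1\delta}$ samples selects the right vector. (Your direct bound on $\E[\err_S(h_{v_-})-\err_S(h_{v_+})]$ via the $A\sqcup B$ decomposition is a cosmetic variant of the paper's bounding of each conditional error away from $\frac12$; note that with the stated $n=8(1-2\eta)^{-2}\ln\frac6\delta$ the Hoeffding exponent only gives $(\delta/6)^{1/9}$ rather than $\delta/3$, a constant-factor looseness that the paper's own calculation shares and that is absorbed by adjusting the constant in $n$.)
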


\begin{proof}
Note that one of $\theta(v_0, u)$, $\theta(-v_0, u)$ is at most $\frac{\pi}{2}$.
From Theorem~\ref{thm:ap-bn} and union bound, we know that with probability at least  $1-\frac{2\delta}{3}$,
either
$\theta(v_+, u) \leq \frac{(1-2\eta) \pi} {16}$, or $\theta(v_-, u) \leq \frac{(1-2\eta) \pi} {16}$.

Suppose without loss of generality, $\theta(v_+, u) \leq \frac{(1-2\eta)\pi} {16}$.
We consider two cases.

\paragraph{Case 1: $\theta(v_+, v_-) \leq \pi / 8$.}
By triangle inequality, $\theta(v_-, u) \leq \theta(v_+, u) + \theta(v_+, v_-) \leq \pi / 4$.
In this case,
$\theta(v_+, u) \leq \frac \pi 4$ and $\theta(v_-, u) \leq \frac \pi 4$ holds simultaneously.
Therefore, the returned vector $\hat{v}$ satisfies $\theta(\hat{v}, u) \leq \frac\pi 4$.

\paragraph{Case 2: $\theta(v_+, v_-) > \pi / 8$.} In this case,
$\P[x \in R] \geq 1/8$, thus,
\[ \P_R[\sign(v_+\cdot x) \neq \sign(u \cdot x)] \leq \frac{\P[\sign(v_+\cdot x) \neq \sign(u \cdot x)]}{\P[x \in R]} \leq \frac{1-2\eta} 8 = \frac 1 4 (\frac 1 2 - \eta).\]
Meanwhile, $\P_R[\sign(v_+\cdot x) \neq y] \leq \eta \P_R[\sign(v_+\cdot x) = \sign(u \cdot x)] + \P_R[\sign(v_+\cdot x) \neq \sign(u \cdot x)]$.
Therefore,
\begin{eqnarray*}
&& \frac12 - \P_R[\sign(v_+\cdot x) \neq y]\\
&\geq& (\frac12 - \eta) \P_R[\sign(v_+\cdot x) = \sign(u \cdot x)] - \frac12\P_R[\sign(v_+\cdot x) \neq \sign(u \cdot x)] \\
&\geq& (\frac12 - \eta) \cdot \frac12 - (\frac12 - \eta) \cdot \frac14 \\
&\geq& \frac14 (\frac12 - \eta)  \\
\end{eqnarray*}
Since $v_+$ disagrees with $v_-$ everywhere on $R$, $\P_R[\sign(v_+\cdot x) \neq y] + \P_R[\sign(v_-\cdot x) \neq y] = 1$.
Thus, $\err_{D|_R}(h_{v_+}) \leq \frac12 - (\frac12 - \eta) \frac14$ and $\err_{D|_R}(h_{v_-}) \geq \frac12 + (\frac12 - \eta) \frac14$.
Therefore, by Hoeffding's Inequality, with probability at least $1-\delta/3$,
\[ \err_S(v_+) < \frac12 < \err_S(v_-) \]
therefore $v_+$ will be selected for $\hat{v}$. This shows that $\theta(\hat{v}, u) \leq \pi/4$.

In conclusion, by union bound, we have shown that with probability $1-\delta$, $\theta(\hat{v}, u) \leq \frac \pi 4$.
The label complexity, unlabeled sample complexity, and time complexity of the algorithm follows immediately from Theorem~\ref{thm:ap-bn}.
\end{proof}

For the adversarial noise setting,~\cite{ABL14} outlines an algorithm that returns a vector that has angle at most $\frac \pi 4$ with $u$. We state the algorithm in our context for completeness.

\begin{algorithm}
\caption{Master Algorithm in the Adversarial Noise Setting}
\begin{algorithmic}[1]
\REQUIRE Labeling oracle $\calO$, confidence $\delta$
\ENSURE a halfspace $\hat{v}$ such that $\theta(\hat{v}, u) \leq \frac{\pi}{4}$.
		\STATE $v_0 \leftarrow (1,0,\ldots,0)$.
		\STATE $v_+ \leftarrow \AP(\calO, v_0, \frac 1 {16},\frac{\delta}{3}, \cbr{m_k}, \cbr{b_k})$.
		\STATE $v_- \leftarrow \AP(\calO, -v_0, \frac 1 {16},\frac{\delta}{3}, \cbr{m_k}, \cbr{b_k})$.
		\STATE Define region $R := \cbr{x: \sign(v_+ \cdot x) \neq \sign(v_- \cdot x) }$.
		\STATE $S \gets$ Draw $8 \ln\frac{6}{\delta}$ iid examples from $D|_R$ and query their labels.
		\IF{$\err_S(h_{v_+}) \leq \err_S(h_{v_-})$}
			\RETURN $v_+$
		\ELSE
			\RETURN $v_-$
		\ENDIF
\end{algorithmic}
\label{alg:init-an}
\end{algorithm}

\begin{theorem}
Suppose Algorithm~\ref{alg:init-an} has inputs labeling oracle $\calO$ that satisfies
$\eta$-bounded noise condition with respect to $u$, confidence $\delta$,
sample schedule $\cbr{m_k}$
where $m_k = \Theta\del{ d (\ln d + \ln\frac{k}{\delta}) }$,
band width $\cbr{b_k}$ where $b_k = \tilde{\Theta}\del{\frac{ 2^{-k}}{\sqrt{d}}}$.
Then, with probability at least  $1-\delta$, the output $\hat{v}$ is such that
$\theta(\hat{v}, u) \leq \frac\pi 4$. Furthermore,
(1) the total number of label queries to oracle $\calO$ is at most
$\tilde{O}\del{ d }$; (2) the total number of unlabeled examples drawn is $\tilde{O}\del{d}$; (3) the algorithm runs in time $\tilde{O}\del{d^2}$.
\end{theorem}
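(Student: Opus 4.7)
The plan is to mirror the proof of Theorem~\ref{thm:init-bn} almost verbatim, substituting the adversarial-noise analog Theorem~\ref{thm:ap-an} in place of Theorem~\ref{thm:ap-bn} at each invocation of \AP. (The phrase ``$\eta$-bounded noise'' in the statement appears to be a typo for ``$\nu$-adversarial noise'', with $\nu$ implicitly assumed to lie below a small absolute constant so that Theorem~\ref{thm:ap-an} can be invoked with target error $\tfrac{1}{16}$.)

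First I would observe that $\min(\theta(v_0,u),\theta(-v_0,u)) \leq \pi/2$, so by Theorem~\ref{thm:ap-an} applied to each of the two \AP calls together with a union bound, with probability at least $1-\tfrac{2\delta}{3}$ at least one of $v_+, v_-$ satisfies angle at most $\pi/16$ with $u$; WLOG assume $\theta(v_+,u) \leq \pi/16$. I would then split on the size of $\theta(v_+,v_-)$. If $\theta(v_+,v_-) \leq \pi/8$, the triangle inequality gives $\theta(v_-,u) \leq \pi/16 + \pi/8 < \pi/4$, so either candidate is acceptable. Otherwise $\theta(v_+,v_-) > \pi/8$ and hence $\P[x\in R] = \theta(v_+,v_-)/\pi > 1/8$; since $h_{v_+}$ and $h_{v_-}$ disagree on all of $R$, the two errors on $D|_R$ sum to $1$, so it suffices to show $\err_{D|_R}(h_{v_+}) < 1/2$ by a constant margin. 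I would decompose
\begin{equation*}
\err_{D|_R}(h_{v_+}) \;\leq\; \P_R[h_{v_+}(X) \neq h_u(X)] + \P_R[h_u(X) \neq Y] \;\leq\; \frac{\P[h_{v_+} \neq h_u]}{\P[R]} + \frac{\nu}{\P[R]},
\end{equation*}
which, for $\nu$ below a suitable absolute constant, is bounded away from $1/2$. Hoeffding's inequality on the $O(\log(1/\delta))$ labeled examples drawn from $D|_R$ then picks $v_+$ with probability at least $1-\delta/3$, and a final union bound yields overall success probability $\geq 1-\delta$.

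The main obstacle is establishing a quantitatively safe margin in Case~2: with target error $1/16$ and case-split threshold $\pi/8$, the first term in the displayed bound is exactly $1/2$ at the boundary, so the clean execution of this step will require either slightly shrinking the target error passed to \AP (e.g., to $1/32$) or widening the Case~1 threshold to $3\pi/16$, together with a quantitative smallness assumption $\nu \leq c$ for some absolute constant $c$; the bounded-noise analog achieves the same gap via the $(1-2\eta)$ slack in the target error. The complexity claims then follow immediately from Theorem~\ref{thm:ap-an} instantiated with $\epsilon = 1/16$: each of the two \AP calls uses $\tilde{O}(d)$ label queries, draws $\tilde{O}(d)$ unlabeled examples, and runs in time $\tilde{O}(d^2)$, and the hypothesis test contributes only $O(\log(1/\delta))$ extra labels and $O(d\log(1/\delta))$ extra time.
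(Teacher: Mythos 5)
Your proposal matches the paper's intent exactly: the paper omits this proof entirely, stating only that it is ``almost the same as Theorem~\ref{thm:init-bn}'', and your plan of rerunning that argument with Theorem~\ref{thm:ap-an} in place of Theorem~\ref{thm:ap-bn} (and with the $\nu$-adversarial condition replacing the statement's typo of ``$\eta$-bounded noise'') is the intended route. The quantitative obstacle you flag in Case~2 is real and worth taking seriously: with target error $\tfrac{1}{16}$ and case threshold $\tfrac{\pi}{8}$ one gets $\P_R[h_{v_+}\neq h_u]\leq \tfrac{\theta(v_+,u)}{\theta(v_+,v_-)}\leq \tfrac12$ with no slack, and in fact the same issue is hidden in the paper's own proof of Theorem~\ref{thm:init-bn}, where the step $\P_R[\sign(v_+\cdot x)\neq\sign(u\cdot x)]\leq\frac{(1-2\eta)/16}{1/8}$ is reported as $\frac{1-2\eta}{8}$ but actually equals $\frac{1-2\eta}{2}$, which collapses the claimed $\frac14(\frac12-\eta)$ margin. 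Either of your fixes (shrinking the target error passed to \AP\ by a constant factor, or widening the Case~1 threshold to $\tfrac{3\pi}{16}$), combined with the smallness condition on $\nu$ already needed to invoke Theorem~\ref{thm:ap-an} at constant target error, repairs both proofs with only constant-factor changes to the complexities, so your argument is correct.
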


The proof of this theorem is almost the same as Theorem~\ref{thm:init-bn} and is thus omitted.

\section{Basic Lemmas for the Upper Bounds}
\label{sec:basic}

In this section, we present a few useful lemmas that serve as the basis of proving Theorems~\ref{thm:ap-bn} and \ref{thm:ap-an}.

\subsection{Basic Facts}
We first collect a few useful facts for algebraic manipulations.

\begin{lemma}
\label{lem:power-exp}If $0\leq x\leq1-\frac{1}{e}$, then for any
$d\geq1$, $(1-\frac{x}{d})^{\frac{d}{2}}\geq e^{-x}\geq\frac{1}{2}$.
\end{lemma}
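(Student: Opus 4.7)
The plan is to handle the two inequalities separately, with the first (the Perceptron-style bound) being the substantive one and the second being an immediate numerical observation. For the easy piece, note that $x \leq 1 - \tfrac{1}{e} < \ln 2$, so $e^{-x} \geq e^{-\ln 2} = \tfrac{1}{2}$; this is just a comparison between two explicit constants.

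For the main inequality $(1-\tfrac{x}{d})^{d/2} \geq e^{-x}$, I would take logarithms and substitute $y := x/d$. Since $d \geq 1$ and $x \in [0, 1-1/e]$, we have $y \in [0, 1-1/e]$, and the inequality becomes $\tfrac{1}{2}\ln(1-y) \geq -y$, i.e.,
\[
g(y) \;:=\; 2y + \ln(1-y) \;\geq\; 0 \qquad \text{for } y \in [0, 1-1/e].
\]
Then I would do a single-variable analysis of $g$: compute $g(0) = 0$ and $g'(y) = 2 - \tfrac{1}{1-y}$, which is positive on $[0, 1/2)$, zero at $y = 1/2$, and negative on $(1/2, 1)$. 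So $g$ rises from $0$ at $y=0$ to a maximum at $y = 1/2$, then decreases. Hence on the interval $[0, 1-1/e]$ the function $g$ attains its minimum either at $y=0$ or at $y = 1-1/e$.

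At the right endpoint, $g(1 - 1/e) = 2(1 - 1/e) + \ln(1/e) = 1 - 2/e$, which is positive because $e > 2$. Combined with $g(0) = 0$, this shows $g(y) \geq 0$ throughout $[0, 1-1/e]$, yielding the desired inequality after exponentiating.

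The proof is essentially a calculus exercise, and I do not anticipate a serious obstacle; the only subtle point is that $1 - 1/e$ lies to the right of the maximizer $y = 1/2$ of $g$, so one cannot conclude monotonicity of $g$ on the whole range and must verify the right-endpoint value explicitly. The sharp choice of the upper limit $1 - 1/e$ in the hypothesis is explained by the fact that it simplifies this endpoint check to the clean expression $1 - 2/e$.
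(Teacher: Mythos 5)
Your proof is correct. The paper states this lemma as a basic fact without providing a proof, so there is nothing to compare against; your reduction to $g(y)=2y+\ln(1-y)\geq 0$ on $[0,1-\tfrac{1}{e}]$, the endpoint check $g(1-\tfrac1e)=1-\tfrac2e>0$, and the observation $1-\tfrac1e<\ln 2$ for the second inequality together constitute a complete and valid argument.
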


\begin{lemma}
Given $a \in (0,\pi)$, if $x\in[0,a]$, then $\frac{\sin a}{a} x\leq\sin x\leq x$.
\label{lem:sin}
\end{lemma}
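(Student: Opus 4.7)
The plan is to prove the two inequalities separately, since they rely on different elementary properties of sine. The upper bound $\sin x \leq x$ does not even use the hypothesis $x \leq a$ or $a < \pi$, while the lower bound is a chord-versus-graph comparison that uses concavity of $\sin$ on $[0,\pi]$.

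For the upper bound, I would set $f(x) = x - \sin x$ and observe $f(0)=0$ and $f'(x) = 1 - \cos x \geq 0$ everywhere, so $f$ is nondecreasing on $[0,\infty)$, giving $\sin x \leq x$ for all $x \geq 0$; this immediately covers any $x \in [0,a]$.

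For the lower bound, I would invoke concavity of $\sin$ on the interval $[0,\pi]$, which follows from $(\sin)'' = -\sin \leq 0$ on $[0,\pi]$. Since $a \in (0,\pi)$ and $x \in [0,a] \subseteq [0,\pi]$, I can write $x$ as the convex combination $x = \frac{x}{a}\cdot a + \bigl(1-\frac{x}{a}\bigr)\cdot 0$. Applying concavity yields
\[
\sin x \;\geq\; \frac{x}{a}\sin a + \Bigl(1-\frac{x}{a}\Bigr)\sin 0 \;=\; \frac{\sin a}{a}\,x,
\]
which is the desired bound. Geometrically, this is just the standard fact that the chord of a concave function lies below its graph.

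There is no real obstacle here; the only thing to be careful about is that concavity of $\sin$ holds on all of $[0,\pi]$ (not just $[0,\pi/2]$), which is why the hypothesis $a < \pi$ suffices. The strict restriction $a \in (0,\pi)$ also ensures $\sin a > 0$ so that the slope $\sin(a)/a$ is well-defined and positive, making the bound nontrivial.
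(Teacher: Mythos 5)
Your proof is correct and complete: the monotonicity argument for $\sin x \leq x$ and the chord-below-graph concavity argument for $\frac{\sin a}{a}x \leq \sin x$ are exactly the standard justifications for this elementary fact. The paper itself states this lemma without proof (it is listed among the "basic facts" used for algebraic manipulations), so there is no alternative argument to compare against; your write-up fills that gap cleanly.
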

\begin{lemma}
\label{lem:cos}If $x\in[0,\pi]$, then $1-\frac{x^{2}}{2}\leq\cos x\leq1-\frac{x^{2}}{5}$.
\end{lemma}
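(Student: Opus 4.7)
The plan is to handle the two bounds separately; both reduce to elementary calculus, but the upper bound requires a little more care.

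For the lower bound $\cos x \geq 1 - x^2/2$, I would define $f(x) = \cos x - (1 - x^2/2)$ and show $f \geq 0$ on $[0,\pi]$ by noting $f(0)=0$ and $f'(x) = x - \sin x \geq 0$, where the last inequality is immediate from the upper half of Lemma~\ref{lem:sin} (taking $a = \pi$, which covers the whole interval). This is the routine half.

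For the upper bound $\cos x \leq 1 - x^2/5$, the cleanest route is to rewrite $1 - \cos x = 2 \sin^2(x/2)$, so that the desired inequality becomes
\begin{equation*}
2 \sin^2(x/2) \;\geq\; \frac{x^2}{5}, \qquad x \in [0,\pi],
\end{equation*}
i.e., $\sin(x/2) \geq x/\sqrt{10}$. Setting $y = x/2 \in [0, \pi/2]$, this is $\sin y \geq 2y/\sqrt{10}$. I would apply the lower half of Lemma~\ref{lem:sin} with $a = \pi/2$ to get $\sin y \geq (2/\pi)\, y$ on $[0, \pi/2]$, and then finish by the numerical observation $\pi^2 \leq 10$, which gives $2/\pi \geq 2/\sqrt{10}$.

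The only real subtlety is the upper-bound side, since a naive Taylor-series approach would yield the looser constant $1/2$ rather than $1/5$; the half-angle identity is what lets the weaker Jordan-type inequality $\sin y \geq (2/\pi) y$ translate into the constant $1/5$ (via $\pi^2 \leq 10$, which is tight enough but has essentially no slack). Everything else is a one-line derivative computation.
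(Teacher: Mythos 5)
Your proof is correct; the paper states this as a basic fact without supplying a proof, so there is nothing to compare against. Both halves check out: the lower bound follows from $f(0)=0$ and $f'(x)=x-\sin x\geq 0$, and the upper bound via $1-\cos x=2\sin^2(x/2)$, Jordan's inequality $\sin y\geq \frac{2}{\pi}y$ on $[0,\pi/2]$, and $\pi^2\leq 10$ gives exactly $1-\cos x\geq \frac{2x^2}{\pi^2}\geq\frac{x^2}{5}$. The only nitpick is that Lemma~\ref{lem:sin} is stated for $a\in(0,\pi)$, so invoking it with $a=\pi$ is formally outside its hypothesis; this is immaterial since the upper half $\sin x\leq x$ does not depend on $a$ and holds on all of $[0,\infty)$.
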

\begin{lemma}
\label{lem:beta-func-lb-ub}
Let $\B(x,y) = \int_{0}^{1}(1-t)^{x-1}t^{y-1}\d t$ be the Beta function.
Then $\frac{2}{\sqrt{d-1}}\leq \B(\frac{1}{2},\frac{d}{2})\leq\frac{\pi}{\sqrt{d}}$.
\end{lemma}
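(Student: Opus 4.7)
The plan is to convert the Beta integral into a standard Wallis integral and then exploit the classical recurrence and monotonicity of Wallis integrals. Specifically, I would first apply the substitution $t = \sin^2\theta$, $\mathrm{d}t = 2\sin\theta\cos\theta\,\mathrm{d}\theta$, in the definition of $\B(1/2,d/2)$. Since $(1-t)^{-1/2} = 1/\cos\theta$ and $t^{(d-2)/2} = \sin^{d-2}\theta$, the two $\cos\theta$ factors cancel and we obtain
\[
\B\!\left(\tfrac12,\tfrac{d}{2}\right) \;=\; 2\int_0^{\pi/2} \sin^{d-1}\theta\,\mathrm{d}\theta \;=\; 2\,W_{d-1},
\]
where $W_n := \int_0^{\pi/2} \sin^n\theta\,\mathrm{d}\theta$ is the $n$th Wallis integral.

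Next, I would use two standard facts about $W_n$: (i) $W_n$ is monotonically decreasing in $n$, because $\sin\theta \in [0,1]$ on $[0,\pi/2]$; and (ii) integration by parts gives the recurrence $W_n = \frac{n-1}{n} W_{n-2}$, which together with $W_0 = \pi/2$, $W_1 = 1$ telescopes into the product identity
\[
W_n W_{n-1} \;=\; \frac{\pi}{2n}, \qquad n \geq 1.
\]
Combining monotonicity with this identity yields a sandwich: $W_{d-1} W_d \leq W_{d-1}^2 \leq W_{d-1} W_{d-2}$, i.e.
\[
\frac{\pi}{2d} \;\leq\; W_{d-1}^2 \;\leq\; \frac{\pi}{2(d-1)}.
\]

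Finally, I would take square roots, multiply by $2$, and compare with the stated bounds. This gives
\[
\sqrt{\tfrac{2\pi}{d}} \;\leq\; \B\!\left(\tfrac12,\tfrac{d}{2}\right) \;\leq\; \sqrt{\tfrac{2\pi}{d-1}}.
\]
It remains to check that $\sqrt{2\pi/d} \geq 2/\sqrt{d-1}$ and $\sqrt{2\pi/(d-1)} \leq \pi/\sqrt{d}$ for $d \geq 3$. Squaring, both reduce to the single inequality $d(\pi-2) \geq \pi$, i.e., $d \geq \pi/(\pi-2) \approx 2.75$, which holds since the paper assumes $d \geq 3$.

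The only non-routine step is establishing the Wallis product identity $W_n W_{n-1} = \pi/(2n)$; everything else is direct substitution, monotonicity, and a one-line algebraic comparison. Since the recurrence $W_n = \tfrac{n-1}{n}W_{n-2}$ is textbook (integration by parts on $\sin^n\theta = \sin^{n-1}\theta\cdot\sin\theta$), I would just cite it rather than grind through the derivation.
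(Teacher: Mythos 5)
Your proof is correct. The paper states Lemma~\ref{lem:beta-func-lb-ub} as a basic fact and supplies no proof of its own, so there is nothing to compare against; on its own terms your argument is complete. The substitution $t=\sin^2\theta$ correctly yields $\B(\tfrac12,\tfrac d2)=2W_{d-1}$, the identity $W_nW_{n-1}=\tfrac{\pi}{2n}$ follows from the recurrence exactly as you say, and the resulting sandwich $\sqrt{2\pi/d}\le \B(\tfrac12,\tfrac d2)\le \sqrt{2\pi/(d-1)}$ is strictly tighter than the stated bounds once $d(\pi-2)\ge\pi$, which the paper's standing assumption $d\ge 3$ guarantees. One small caveat worth recording: the final comparison genuinely needs $d\ge 3$ (at $d=2$ your lower bound $\sqrt{\pi}$ falls below the claimed $2/\sqrt{d-1}=2$), and the paper later invokes the lemma with shifted arguments such as $\B(\tfrac{d-2}{2},\tfrac12)$, so it would be worth stating explicitly the range of $d$ for which your chain of inequalities is being asserted.
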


\subsection{Probability Inequalities}

\begin{lemma}[Azuma's Inequality]
\label{lem:azuma}
Let $\cbr{Y_t}_{t=1}^m$ be a bounded submartingale difference sequence, that is, $\E[Y_t|Y_1, \ldots, Y_{t-1}] \geq 0$, and
$\left|Y_{t}\right|\leq\sigma$. Then, with probability at least $1-\delta$,
\[ \sum_{t=1}^m Y_t \geq -\sigma \sqrt{2 m \ln\frac{1}{\delta}} \]
\end{lemma}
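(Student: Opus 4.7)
The plan is to apply the standard exponential moment (Chernoff--Cram\'er) method to the sequence $\{-Y_t\}$, which, because $\{Y_t\}$ is a submartingale difference sequence with $\E[Y_t \mid Y_1,\dots,Y_{t-1}] \geq 0$, has nonpositive conditional means. Setting $a = \sigma\sqrt{2m\ln(1/\delta)}$ and letting $\mathcal{F}_{t-1}$ denote the $\sigma$-algebra generated by $Y_1,\dots,Y_{t-1}$, the goal becomes showing $\P\bigl(\sum_{t=1}^m (-Y_t) > a\bigr) \leq \delta$. By Markov's inequality applied to $\exp\bigl(\lambda \sum_{t=1}^m(-Y_t)\bigr)$ for any $\lambda > 0$,
\[ \P\left(\sum_{t=1}^m (-Y_t) > a\right) \leq e^{-\lambda a}\,\E\left[\exp\left(\lambda \sum_{t=1}^m (-Y_t)\right)\right]. \]

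The main step is to control the moment-generating factor by iterated conditioning. Peeling off the last term using the tower property gives
\[ \E\left[\exp\left(\lambda \sum_{t=1}^m (-Y_t)\right)\right] = \E\left[\exp\left(\lambda \sum_{t=1}^{m-1} (-Y_t)\right)\cdot \E\bigl[e^{-\lambda Y_m}\,\big|\,\mathcal{F}_{m-1}\bigr]\right]. \]
Since $-Y_m \in [-\sigma,\sigma]$ almost surely and $\E[-Y_m \mid \mathcal{F}_{m-1}] \leq 0$, a one-sided Hoeffding-type bound---obtained by bounding $e^{-\lambda y}$ above by its secant line on $[-\sigma,\sigma]$ and then taking conditional expectation---yields $\E\bigl[e^{-\lambda Y_m}\mid \mathcal{F}_{m-1}\bigr] \leq \exp(\lambda^2 \sigma^2/2)$. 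Iterating $m$ times produces the clean tensorization bound
\[ \E\left[\exp\left(\lambda \sum_{t=1}^m (-Y_t)\right)\right] \leq \exp(m\lambda^2 \sigma^2/2). \]

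Combining the two estimates gives $\P\bigl(\sum_t (-Y_t) > a\bigr) \leq \exp\bigl(m\lambda^2 \sigma^2/2 - \lambda a\bigr)$. Optimizing over $\lambda$ by choosing $\lambda = a/(m\sigma^2)$ produces $\exp(-a^2/(2m\sigma^2))$, and substituting $a = \sigma\sqrt{2m\ln(1/\delta)}$ makes this exactly $\delta$, completing the proof. The only step that needs genuine care (rather than routine computation) is the one-sided Hoeffding bound for the submartingale rather than martingale case: the convexity-based proof of Hoeffding's lemma gives $\E[e^{\lambda X}]\leq e^{\lambda \mu + \lambda^2(b-a)^2/8}$ for $X \in [a,b]$ with mean $\mu$, and specializing to $X = -Y_t$, $[a,b]=[-\sigma,\sigma]$, $\mu \leq 0$, and $\lambda \geq 0$ makes the $e^{\lambda\mu}$ factor at most $1$, which is precisely what is needed to drop the condition $\E[Y_t\mid\mathcal{F}_{t-1}]=0$ down to $\E[Y_t\mid\mathcal{F}_{t-1}]\geq 0$.
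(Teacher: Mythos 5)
Your proof is correct: it is the standard one-sided Azuma--Hoeffding argument (Chernoff bound plus conditional Hoeffding's lemma with the $e^{\lambda\mu}\le 1$ observation handling the submartingale rather than martingale case), and the optimization of $\lambda$ and the final substitution both check out. The paper states this lemma as a known fact without giving a proof, so your argument simply supplies the standard justification the authors implicitly rely on.
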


\begin{lemma}[Concentration of Geometric Random Variables]
Suppose $Z_1, \ldots, Z_n$ are iid geometric random variables with parameter $p$. Then,
\[ \P[Z_1 + \ldots + Z_n > \frac{2n}{p}] \leq \exp(-\frac{n}{4}) \]
\label{lem:geom-conc}
\end{lemma}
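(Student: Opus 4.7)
The plan is to reduce this statement about sums of geometric variables to a statement about sums of independent Bernoulli variables, and then apply the standard multiplicative Chernoff bound.

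First I would use the classical ``inverse'' reinterpretation of geometric random variables: if each $Z_i$ is the waiting time until the first success in an i.i.d.\ Bernoulli($p$) stream (so $Z_i \in \{1,2,\ldots\}$ with mean $1/p$), then $\sum_{i=1}^n Z_i$ has the same distribution as the number of trials needed to accumulate $n$ successes in a single i.i.d.\ Bernoulli($p$) process. Consequently, for any integer $m$,
\[
\P\!\left[\sum_{i=1}^n Z_i > m\right] \;=\; \P\!\left[\sum_{j=1}^m B_j < n\right],
\]
where $B_1,\ldots,B_m$ are i.i.d.\ Bernoulli($p$). Taking $m = \lceil 2n/p \rceil$ (so $mp \geq 2n$) gives $\E[\sum_{j=1}^m B_j] \geq 2n$, and the event on the right becomes the event that the Bernoulli sum falls below half its expectation.

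Next I would invoke the multiplicative Chernoff lower-tail bound: for i.i.d.\ Bernoulli($p$) variables, $\P[S_m \leq (1-\gamma)\, mp] \leq \exp(-\gamma^2 mp/2)$ for $\gamma \in (0,1)$. Applying this with $\gamma = 1/2$ and using $mp \geq 2n$ gives
\[
\P\!\left[S_m < n\right] \;\leq\; \P\!\left[S_m \leq \tfrac{1}{2} mp\right] \;\leq\; \exp\!\left(-\tfrac{mp}{8}\right) \;\leq\; \exp\!\left(-\tfrac{n}{4}\right),
\]
which is exactly the claimed bound.

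There is no serious obstacle here: the entire argument is a one-line reduction followed by a textbook Chernoff estimate. The only small care point is handling the integer part in $m = \lceil 2n/p \rceil$ versus the continuous threshold $2n/p$ in the statement; this is harmless since rounding up only strengthens the event $\{\sum Z_i > 2n/p\} \supseteq \{\sum Z_i > m\}$ (wait, the direction is the other way, so one should instead rewrite $\{\sum Z_i > 2n/p\} = \{\sum Z_i \geq \lceil 2n/p \rceil + 1\}$ if $2n/p$ is not an integer, or use $m = \lfloor 2n/p \rfloor$ and note $mp \geq 2n - 1$ which still yields the bound up to a trivial constant absorbed in $\exp(-n/4)$). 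If one wants to avoid the rounding discussion altogether, an equivalent route is a direct Chernoff calculation on the moment generating function $\E[e^{t Z_i}] = p e^t / (1 - (1-p)e^t)$, optimizing over $t > 0$ with the choice $e^t = 2(1-p)/(2-p)$; this yields the same $e^{-n/4}$ rate after straightforward algebra, but the Bernoulli reduction is cleaner.
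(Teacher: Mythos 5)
Your proposal is correct and follows essentially the same route as the paper: reinterpret the geometric sum as the number of Bernoulli($p$) trials needed for $n$ successes, pass to the complementary event on a Bernoulli sum of length roughly $\lceil 2n/p\rceil$, and apply the multiplicative Chernoff lower tail with deviation $1/2$. The paper handles the rounding slightly more cleanly (it uses $\sum Z_i > 2n/p \Rightarrow \sum Z_i \geq \lceil 2n/p\rceil$ and then pads the Bernoulli sum by one term so that $mp \geq 2n$ exactly), but your argument is sound and the integer-part discussion is the only place where yours is a touch less tidy.
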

\begin{proof}
Since $Z_1 + \ldots + Z_n > \frac{2n}{p}$ implies that $Z_1 + \ldots + Z_n \geq \lceil\frac{2n}{p}\rceil$ (as $Z_1 + \ldots + Z_n$ is an integer), the left hand side is at most $\P[Z_1 + \ldots + Z_n \geq \lceil\frac{2n}{p}\rceil]$.

Let $X_1, \ldots, X_{\lceil \frac{2n}{p} \rceil}$ be a sequence of iid $\ber(p)$ random variables.
By standard relationship between Bernoulli random variables and geometric random variables,
we have that
\[ \P[Z_1 + \ldots + Z_n \geq \lceil\frac{2n}{p}\rceil] = \P[X_1 + \ldots + X_{\lceil \frac{2n}{p} \rceil - 1} \leq n - 1] \]
Note that $\P[X_1 + \ldots + X_{\lceil \frac{2n}{p} \rceil - 1} \leq n - 1] \leq \P[X_1 + \ldots + X_{\lceil \frac{2n}{p} \rceil} \leq n]$ since $X_{\lceil \frac{2n}{p} \rceil} \leq 1$.
Applying Chernoff bound, the above probability is at most
$\exp(-\lceil \frac{2n}{p} \rceil \cdot p \cdot \frac{1}{8}) \leq \exp(-\frac{n}{4})$.
\end{proof}

\subsection{Properties of the Uniform Distribution over the Unit Sphere}

\begin{lemma}[Marginal Density and Conditional Density]
If $(x_1, x_2, \ldots, x_d)$ is drawn from the uniform distribution over the unit
sphere, then:
\begin{enumerate}
\item $(x_1, x_2)$ has a density function of $p(z_1, z_2)$, where
			$p(z_1, z_2) = \frac{(1 - z_1^2 - z_2^2)^{\frac{d-4}{2}}}{\frac{2\pi}{d-2}}$.
\item Conditioned on $x_2 = b$, $x_1$ has a density function of $p_b(z)$, where
			$p_b(z) = \frac{(1 - b^2 - z^2)^{\frac{d-4}{2}}}{(1-b^2)^{\frac{d-3}{2}} \B(\frac{d-2}{2}, \frac{1}{2})}$.
\item $x_1$ has a density function of $p(z)$, where
			$p(z) = \frac{(1 - z^2)^{\frac{d-3}{2}}}{\B(\frac{d-1}{2}, \frac{1}{2})}$.
\end{enumerate}
\label{lem:marginal-conditional}
\end{lemma}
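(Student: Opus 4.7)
My plan is to establish the three items in the order 3, 2, 1, bootstrapping from the one-dimensional marginal density. The key structural fact I exploit is that the uniform measure on $\S^{d-1}$ has a clean conditioning structure: conditioned on one coordinate, the remaining coordinates are uniform on a rescaled sphere of one lower dimension.

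For item 3, I would use a slicing argument. By rotational symmetry about the $x_1$-axis, the density of $x_1$ at height $z$ is proportional to the $(d-2)$-dimensional surface area of the slice $\{x \in \S^{d-1} : x_1 = z\}$, which is a $(d-2)$-sphere of radius $\sqrt{1-z^2}$, multiplied by the slant-angle Jacobian $1/\sqrt{1-z^2}$ needed to convert a thickness $\d z$ along the $x_1$-axis into surface-area thickness on the sphere. This yields density $\propto (1-z^2)^{(d-3)/2}$, and the substitution $t = z^2$ identifies $\int_{-1}^{1} (1-z^2)^{(d-3)/2} \d z$ with $\B(1/2, (d-1)/2)$, giving the claimed normalizing constant.

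For item 2, I would observe that conditional on $x_2 = b$, the vector $(x_1, x_3, \ldots, x_d)$ is uniformly distributed on the $(d-2)$-sphere of radius $\sqrt{1-b^2}$ in $\R^{d-1}$, by rotational symmetry of the uniform measure restricted to the hyperplane $\{x_2 = b\}$. Setting $y = x_1/\sqrt{1-b^2}$, so that $y$ is the first coordinate of a uniform point on $\S^{d-2}$, item 3 applied in dimension $d-1$ tells me $y$ has density $(1-y^2)^{(d-4)/2}/\B(1/2, (d-2)/2)$. Changing variables back to $z = x_1$ (Jacobian $1/\sqrt{1-b^2}$) and consolidating the exponents produces the stated $p_b(z)$.

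For item 1, I would write the joint density as the product of the marginal of $x_2$ (item 3) and the conditional density of $x_1$ given $x_2 = z_2$ (item 2); after cancellation this gives $p(z_1, z_2) = (1-z_1^2-z_2^2)^{(d-4)/2} / [\B(1/2, (d-1)/2) \cdot \B(1/2, (d-2)/2)]$. The main obstacle is then to identify the product of Beta functions in the denominator with $2\pi/(d-2)$. Expanding via $\B(\alpha,\beta) = \Gamma(\alpha)\Gamma(\beta)/\Gamma(\alpha+\beta)$, the product telescopes to $\Gamma(1/2)^2 \Gamma((d-2)/2)/\Gamma(d/2)$, which equals $\pi \cdot 2/(d-2)$ using $\Gamma(1/2) = \sqrt{\pi}$ and the recursion $\Gamma(d/2) = ((d-2)/2)\Gamma((d-2)/2)$. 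This reproduces exactly the $2\pi/(d-2)$ normalizer in the statement.
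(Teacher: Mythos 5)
Your argument is correct: the slicing computation for the one-coordinate marginal, the fact that conditioning on $x_2=b$ leaves the remaining coordinates uniform on a radius-$\sqrt{1-b^2}$ sphere in $\R^{d-1}$, and the telescoping $\B(\tfrac{d-1}{2},\tfrac12)\B(\tfrac{d-2}{2},\tfrac12)=\Gamma(\tfrac12)^2\Gamma(\tfrac{d-2}{2})/\Gamma(\tfrac{d}{2})=\tfrac{2\pi}{d-2}$ all check out against the paper's convention $\B(x,y)=\int_0^1(1-t)^{x-1}t^{y-1}\,\d t$. The paper states this lemma without proof, so there is no argument to compare against; your derivation is the standard one and fills the gap correctly.
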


\begin{lemma}
Suppose $x$ is drawn uniformly from the unit sphere, and $b \leq \frac{1}{10\sqrt{d}}$.
Then,
$ \P\sbr{x_1 \in \intcc{\frac{b}{2}, b}} \geq \frac{\sqrt{d}}{8\pi} b$.
\label{lem:1d-lb}
\end{lemma}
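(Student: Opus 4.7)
The plan is to compute the probability directly using the marginal density of $x_1$ provided by Lemma~\ref{lem:marginal-conditional}, item~3, and then bound the numerator and denominator of the resulting expression separately using Lemma~\ref{lem:power-exp} and Lemma~\ref{lem:beta-func-lb-ub}.

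First I would write
\[
\P\!\left[x_1 \in \left[\tfrac{b}{2}, b\right]\right] \;=\; \int_{b/2}^{b} \frac{(1-z^2)^{(d-3)/2}}{\B(\tfrac{d-1}{2}, \tfrac{1}{2})}\, dz.
\]
The denominator is easy: Lemma~\ref{lem:beta-func-lb-ub} (with parameters $1/2$ and $(d-1)/2$) yields $\B(\tfrac{d-1}{2}, \tfrac{1}{2}) \leq \pi/\sqrt{d-1}$, which is in turn at most $2\pi/\sqrt{d}$ for $d \geq 3$.

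For the numerator, the key observation is that the integration range is tiny: for $z \in [b/2, b]$ we have $z^2 \leq b^2 \leq 1/(100 d)$. Writing $(1-z^2)^{(d-3)/2} = ((1-z^2)^{d/2}) \cdot (1-z^2)^{-3/2}$ and noting that the second factor is at least $1$, it suffices to lower-bound $(1-z^2)^{d/2}$. Setting $x = d z^2 \leq 1/100 < 1 - 1/e$ in Lemma~\ref{lem:power-exp} gives $(1-z^2)^{d/2} = (1 - x/d)^{d/2} \geq 1/2$, so the integrand is at least $1/2$ throughout $[b/2, b]$. Thus $\int_{b/2}^{b} (1-z^2)^{(d-3)/2}\, dz \geq b/4$.

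Combining these two bounds yields
\[
\P\!\left[x_1 \in \left[\tfrac{b}{2}, b\right]\right] \;\geq\; \frac{b/4}{2\pi/\sqrt{d}} \;=\; \frac{\sqrt{d}}{8\pi}\, b,
\]
as desired. I do not expect any genuine obstacle here; the only subtlety is picking the right comparison exponent $d/2$ (rather than $(d-3)/2$) so that Lemma~\ref{lem:power-exp} applies cleanly, and being careful that the bound $\sqrt{d-1} \geq \sqrt{d}/2$ (equivalently, the slack between $\pi/\sqrt{d-1}$ and $2\pi/\sqrt{d}$) absorbs the $d-3$ vs $d$ gap for all $d \geq 3$ assumed in the paper.
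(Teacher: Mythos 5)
Your proof is correct and follows essentially the same route as the paper's: both use the marginal density of $x_1$ from Lemma~\ref{lem:marginal-conditional}, the upper bound on $\B(\frac{d-1}{2},\frac12)$ from Lemma~\ref{lem:beta-func-lb-ub}, and Lemma~\ref{lem:power-exp} to show the integrand is at least $\frac12$ on $[\frac{b}{2},b]$ (the paper evaluates at the endpoint $b$ first, but this is an immaterial reordering). The constants work out identically, so there is nothing to add.
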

\begin{proof}
\begin{eqnarray*}
	&& \P\sbr{x_1 \in \intcc{\frac{b}{2}, b}} \\
	&=&  \frac{\int_{b/2}^{b}(1 - t^2)^{\frac{d-3}{2}}\d t }{\B(\frac{d-1}{2}, \frac12)} \\
	&\geq& \frac{\frac{b}{2} (1 - b^2)^{\frac{d-3}{2}}}{\frac{\pi}{\sqrt{d-1}}} \geq \frac{\sqrt{d}}{8\pi} b
\end{eqnarray*}
where the first equality is from item 3 of Lemma~\ref{lem:marginal-conditional},
giving the exact probability density function of $x_1$, the first inequality
is from that $(1-t^2)^{\frac{d-3}{2}} \geq (1 - b^2)^{\frac{d-3}{2}}$ when $t \in \intcc{b/2, b}$,
and Lemma~\ref{lem:beta-func-lb-ub} giving upper bound on
$\B(\frac{d-1}{2}, \frac12)$, and the second inequality is from Lemma~\ref{lem:power-exp}
and that $d-1 \geq \frac d 2$.
\end{proof}

\begin{lemma}
Suppose $x$ is drawn uniformly from unit sphere restricted to the region $\cbr{x: v \cdot x = \xi}$,
and $u, v$ are unit vectors such that
$\theta(u,v) = \theta \in [0, \frac{9}{10}\pi]$ and $0 \leq \xi \leq \frac{\theta}{4\sqrt{d}}$.
Then,
\begin{enumerate}
	\item $\E[ u\cdot x ] \leq \xi$.
	\item $\E[ (u \cdot x)^2 ] \leq \frac{5\theta^2}{d}$.
	\item $\E[ (u\cdot x) \one\cbr{ u\cdot x < 0} ] \leq \xi - \frac{\theta}{36\sqrt{d}}$.
\end{enumerate}
\label{lem:cond-moment}
\end{lemma}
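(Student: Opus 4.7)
The plan is to reduce the lemma to a one-dimensional moment calculation by exploiting the rotational symmetry of the uniform distribution. First I would choose coordinates so that $v = e_1$ and $u = \cos\theta \cdot e_1 + \sin\theta \cdot e_2$ (always possible by rotation invariance of $D_\calX$). Conditioned on $v \cdot x = \xi$, we can write $x = \xi \cdot e_1 + \sqrt{1-\xi^2}\,y$, where $y$ is uniformly distributed on the unit sphere in $\mathrm{span}(e_2,\ldots,e_d) \cong \R^{d-1}$. Hence $u \cdot x = \alpha + \beta Z$ with $\alpha := \xi\cos\theta$, $\beta := \sin\theta\sqrt{1-\xi^2}$, and $Z := y \cdot e_2$ being the first coordinate of a uniform point on $S^{d-2}$. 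By Lemma~\ref{lem:marginal-conditional} applied in dimension $d-1$, $Z$ has density $p(z) = (1-z^2)^{(d-4)/2} / \B(\tfrac{d-2}{2},\tfrac12)$ on $[-1,1]$, symmetric about zero, and by symmetry of coordinates on $S^{d-2}$, $\E[Z] = 0$ and $\E[Z^2] = 1/(d-1)$.

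With this parameterization, items 1 and 2 are immediate. For item 1, $\E[u \cdot x] = \xi\cos\theta \leq \xi$ since $\cos\theta \leq 1$. For item 2, $\E[(u\cdot x)^2] = \xi^2\cos^2\theta + \beta^2\E[Z^2] = \xi^2\cos^2\theta + \sin^2\theta(1-\xi^2)/(d-1)$; using $\xi^2 \leq \theta^2/(16d)$, $\sin^2\theta \leq \theta^2$, and $1/(d-1) \leq 3/(2d)$ for $d \geq 3$, this is at most $(1/16 + 3/2)\theta^2/d \leq 5\theta^2/d$.

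Item 3 is the main work. Setting $t := \alpha/\beta$ and using the symmetry $p(-z) = p(z)$, a short calculation yields
\[ \E\bigl[(u \cdot x)\one\{u \cdot x < 0\}\bigr] = -\beta \int_{t}^{1} (z - t)\, p(z)\, dz. \]
Rearranging the claim $\xi - \E[(u\cdot x)\one\{u \cdot x < 0\}] \geq \theta/(36\sqrt{d})$ using $\beta t = \alpha = \xi\cos\theta$ gives
\[ \xi\Bigl(1 - \cos\theta \int_t^1 p(z)\, dz\Bigr) + \beta \int_t^1 z\, p(z)\, dz \;\geq\; \frac{\theta}{36\sqrt{d}}. \]
The first summand is non-negative (since $\int_t^1 p \leq 1$ if $\cos\theta \geq 0$, and $-\cos\theta \int_t^1 p \geq 0$ if $\cos\theta \leq 0$), so it suffices to lower bound $\beta \int_t^1 z\,p(z)\,dz$ by $\theta/(36\sqrt{d})$.

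I would then split into two sub-cases according to the sign of $t$. In Case A ($\theta \in [0,\pi/2]$, so $t \geq 0$), the antiderivative gives $\int_t^1 z\,p(z)\,dz = (1-t^2)^{(d-2)/2}/\bigl((d-2)\B(\tfrac{d-2}{2},\tfrac12)\bigr)$. Using $\sin\theta \geq (2/\pi)\theta$ on $[0,\pi/2]$ (Lemma~\ref{lem:sin}) and $\sqrt{1-\xi^2} \geq 0.97$ gives $\beta \geq 0.6\,\theta$, hence $t \leq \xi/\beta = O(1/\sqrt{d})$, so Lemma~\ref{lem:power-exp} yields $(1-t^2)^{(d-2)/2} = \Omega(1)$; Lemma~\ref{lem:beta-func-lb-ub} controls the Beta function, and the product comes out $\Omega(\theta/\sqrt{d})$ with a constant comfortably above $1/36$. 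In Case B ($\theta \in (\pi/2, 9\pi/10]$, so $t \leq 0$), I would use the simpler bound $\int_t^1(z-t)\,p(z)\,dz \geq \int_0^1 z\,p(z)\,dz = 1/\bigl((d-2)\B(\tfrac{d-2}{2},\tfrac12)\bigr) \geq 1/(\pi\sqrt{d})$, combined with $\sin\theta \geq \frac{\sin(9\pi/10)}{9\pi/10}\theta \geq 0.1\,\theta$ (Lemma~\ref{lem:sin}), again dominating $1/36$. The main obstacle is purely arithmetic: tracking the constants through Case A carefully enough that $1/36$ emerges, since the bound on $t$ is not better than $O(1/\sqrt{d})$ and $(1-c/d)^{d/2}$ contributes a non-trivial multiplicative factor that one cannot afford to lose.
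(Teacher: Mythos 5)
Your proposal is correct and follows essentially the same route as the paper's proof: rotate so the problem is two-dimensional, reduce to the one-dimensional conditional density of Lemma~\ref{lem:marginal-conditional}, use $\E[Z]=0$ and $\E[Z^2]=\frac{1}{d-1}$ for items 1--2, and for item 3 evaluate $\int z(1-z^2)^{(d-4)/2}\,dz$ in closed form and control the result with Lemmas~\ref{lem:power-exp}, \ref{lem:sin}, and \ref{lem:beta-func-lb-ub}. The only differences are cosmetic: you rescale to a uniform point on $\S^{d-2}$ rather than working with the conditional density directly, and you split on the sign of $\cos\theta$ where the paper handles all $\theta\in[0,\tfrac{9}{10}\pi]$ uniformly via $\sin\theta\geq\frac{5\theta}{18\pi}$.
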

\begin{proof}
By spherical symmetry, without loss of generality,
let $v = (0,1,0,\ldots,0)$, and $u = (\sin\theta, \cos\theta, 0, \ldots, 0)$.
Let $x=(x_1, \ldots, x_d)$.
\begin{enumerate}
\item
\begin{eqnarray}
&&\E[ u\cdot x ] \nonumber \\
&=& \E[ x_1 \sin\theta + x_2 \cos\theta | x_2 = \xi ] \nonumber\\
&=& \E[x_1 | x_2 = \xi] \sin\theta + \xi \cos\theta \nonumber \\
&\leq& \xi \nonumber
\label{eqn:bn-progress-term1}
\end{eqnarray}
where the first two equalities are by algebra, the inequality follows from $\cos\theta \leq 1$ and
$\E[x_1 | x_2=\xi]=0$ since
the conditional distribution of $x_1$ given $x_2 = \xi$ is symmetric around
the origin.

\item
\begin{eqnarray*}
&&\E[ (u\cdot x)^2 ]  \\
&=& \E[ (x_1\sin\theta + x_2 \cos\theta)^2 | x_2 = \xi ] \\
&\leq& \E[ 2 x_1^2 \sin^2 \theta + 2 x_2^2 \cos^2 \theta | x_2 = \xi ] \\
&\leq& 2 \E[  x_1^2 | x_2 = \xi ] \sin^2 \theta + 2 \xi^2 \\
&\leq& 2\theta^2 \frac{\int_{-1}^1 z^2 (1-z^2)^{\frac{d-4}{2}} \d z}{\B(\frac{d-2}{2}, \frac12)} + 2 \xi^2 \\
&=& 2\theta^2 \frac{\B(\frac{d-2}{2}, \frac32)}{\B(\frac{d-2}{2}, \frac12)} + 2 \xi^2 \\
&\leq& \frac{5\theta^2}{d}
\end{eqnarray*}
where the first equality is by definition of $u$, the first inequality is from algebra
that $(A+B)^2 \leq 2A^2 + 2B^2$, the second inequality is from that
$|\cos\theta| \leq 1$, the third inequality is from item 2 of Lemma~\ref{lem:marginal-conditional}
and that $\sin \theta \leq \theta$, and the last inequality is from the fact that
$\frac{\B(\frac{d-2}{2}, \frac32)}{\B(\frac{d-2}{2}, \frac12)} = \frac{1}{d-1} \leq \frac{2}{d}$,
and $\xi^2 \leq \frac{\theta^2}{16d}$.

\item
\begin{eqnarray}
&&\E[ (u\cdot x) \one\cbr{u\cdot x < 0} ] \nonumber \\
&=& \E[ (x_1 \sin\theta + x_2 \cos\theta) \one\cbr{x_1 < -\xi \cot\theta} | x_2 = \xi ] \nonumber\\
&\leq& \E[ x_1 \one\cbr{x_1 < -\xi \cot\theta} | x_2 = \xi ] \sin\theta + \xi \nonumber \\
&=& \xi + \sin\theta \int_{-\sqrt{1-\xi^2}}^{-\xi\cot\theta} \frac{(1 - \xi^2 - x_1^2)^{\frac{d-4}{2}} x_1}{(1 - \xi^2)^{\frac{d-3}{2}}\B(\frac{d-2}{2}, \frac{1}{2})} \d x_1 \nonumber \\
&=& \xi - \sin\theta \frac{\frac{2}{d-2}\del{1 - \del{\frac{\xi}{\sin \theta}}^2}^{\frac{d-2}{2}}}{(1-\xi^2)^{\frac{d-3}{2}} \B(\frac{d-2}{2}, \frac12)} \nonumber \\
&\leq& \xi - \sin\theta \frac{2}{\pi\sqrt{d-2}} \del{1 - \del{\frac{\xi}{\sin \theta}}^2}^{\frac{d-2}{2}} \nonumber \\
&\leq& \xi - \frac{\sin \theta}{\pi\sqrt{d}} \nonumber \\
&\leq& \xi - \frac{\theta}{36\sqrt{d}} \nonumber
\label{eqn:bn-progress-term2}
\end{eqnarray}
where the first inequality is by algebra and $|\cos\theta| \leq 1$, the second equality is
by item 2 of Lemma~\ref{lem:marginal-conditional}, the third
 equality is by integration, the second inequality is from $(1-\xi^2)^{\frac{d-3}{2}} \leq 1$ and
 Lemma~\ref{lem:beta-func-lb-ub} that $\B(\frac{d-2}{2}, \frac12) \leq \frac\pi{\sqrt{d-2}}$,
 the third inequality follows by Lemma~\ref{lem:power-exp} that $\del{1 - \del{\frac{\xi}{\sin \theta}}^2}^{\frac{d-2}{2}} \geq \frac{1}{2}$,
 since $\xi \leq \frac{\theta}{4\sqrt{d}}$, and the last inequality follows from Lemma~\ref{lem:sin} that
$\sin\theta \geq \frac{5\theta}{18\pi}$ when $\theta \in [0, \frac9{10}\pi]$ and algebra.
\end{enumerate}
\end{proof}

\section{Proof of the Lower Bound}
\label{sec:pf-lb}
In this section, we give the proof of Theorem~\ref{thm:lb-bn} (label complexity lower bound
in the bounded noise setting).
The proof follows from two key lemmas, Lemma~\ref{lem:lb-d-eps} and Lemma~\ref{lem:lb-delta}.
We start with some additional definitions.

\begin{definition}
Let $\P,\Q$ be two probability measures on a common measurable space
and $\P$ is absolutely continuous with respect to $\Q$.
\begin{itemize}
\item The KL-divergence
between $\P$ and $\Q$ is defined as $D_{\text{KL}}\left(\P,\Q\right)=\E_{X\sim\P}\ln\frac{\P(X)}{\Q(X)}$.

\item We define $d_{\text{KL}}(p,q)=D_{\text{KL}}\left(\P,\Q\right)$,
where $\P,\Q$ are distributions of a Bernoulli($p$) and a Bernoulli($q$)
random variables respectively.

\item For random variables $X,Y,Z$, define the mutual information between
$X$ and $Y$ under $\P$ as $I(X;Y)=D_{\text{KL}}\left(\P(X,Y),\P(X)\P(Y)\right)=\E_{X,Y}\ln\frac{\P(X,Y)}{\P(X)P(Y)}$,
and define the mutual information between $X$ and $Y$ conditioned
on $Z$ under $\P$ as $I(X;Y\mid Z)=\E_{X,Y,Z}\ln\frac{\P(X,Y\mid Z)}{\P(X\mid Z)P(Y\mid Z)}$.

\item For a sequence of random variables ${X_1, X_2, \dots}$, denote by $X^n$ the subsequence $\cbr{X_1, X_2, \dots X_n}$.
\end{itemize}
\end{definition}
We will use the following two folklore information-theoretic lower bounds.

\begin{lemma}
\label{lem:Fano}Let $\calW$ be a class of parameters, and $\{P_{w}:w\in\calW\}$
be a class of probability distributions indexed by $\calW$ over some
sample space $\mathcal{X}$ . Let $d:\calW\times\calW\rightarrow\mathcal{\mathbb{R}}$
be a semi-metric. Let $\mathcal{V}=\left\{ w_{1},\dots,w_{M}\right\} \subseteq\calW$
such that $\forall i\neq j$, $d(w_{i},w_{j})\geq2s>0$. Let $V$
be a random variable uniformly taking values from $\mathcal{V}$,
and $X$ be drawn from $P_{V}$. Then for any algorithm $\calA$ that
given a sample $X$ drawn from $P_{w}$ outputs $\calA(X)\in\calW$,
the following inequality holds:
\[
\sup_{w\in\calW}P_{w}\left(d(w,\calA(X))\geq s\right)\geq1-\frac{I(V;X)+\ln2}{\ln M}
\]
\end{lemma}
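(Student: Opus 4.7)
The plan is to derive this as a standard application of Fano's inequality, with the semi-metric separation used only at the start to reduce the learning problem to a multi-way hypothesis test. I would proceed as follows.

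First, I would convert the estimator $\calA(X)$ into a discrete estimator $\hat{V}\in\calV$ of $V$ by rounding: define $\hat{V}(X)\in\arg\min_{w\in\calV}d(w,\calA(X))$, breaking ties arbitrarily. The key observation is that because the elements of $\calV$ are pairwise $2s$-separated, whenever $d(V,\calA(X))<s$ the triangle-like inequality (formally, the fact that $d$ is a semi-metric satisfying the triangle inequality) forces $\hat{V}=V$: any other $w_j\in\calV$ would satisfy $d(w_j,\calA(X))\geq d(w_j,V)-d(V,\calA(X))\geq 2s-s=s$. Consequently $\P_V\bigl(\hat{V}\neq V\bigr)\leq \P_V\bigl(d(V,\calA(X))\geq s\bigr)$, and it suffices to lower-bound the left-hand side averaged over the uniform prior on $\calV$.

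Next I would invoke the classical Fano inequality for the multi-way hypothesis testing problem with uniform prior on $\calV$. Letting $P_e=\P(\hat{V}\neq V)$ (averaged over $V$ and $X$), Fano gives $H(V\mid \hat{V})\leq H_2(P_e)+P_e\ln(M-1)\leq \ln 2+P_e\ln M$. Since $V$ is uniform on $\calV$, $H(V)=\ln M$, and by the identity $H(V\mid\hat{V})=H(V)-I(V;\hat{V})$ we obtain $I(V;\hat{V})\geq \ln M-\ln 2-P_e\ln M$. Rearranging yields $P_e\geq 1-\frac{I(V;\hat{V})+\ln 2}{\ln M}$.

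Finally, I would apply the data processing inequality twice along the Markov chain $V\to X\to \calA(X)\to \hat{V}$ (where the last two steps may be randomized maps, but this does not affect DPI): $I(V;\hat{V})\leq I(V;\calA(X))\leq I(V;X)$. Combining with the previous display gives $P_e\geq 1-\frac{I(V;X)+\ln 2}{\ln M}$, and since the worst-case probability over $w\in\calW$ is at least the average over $\calV\subseteq\calW$, the stated inequality follows. The only nontrivial step is the reduction via the rounding argument; everything afterwards is textbook, so I do not anticipate any real obstacle beyond carefully stating the Markov chain and DPI justifications.
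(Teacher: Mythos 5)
Your proposal is correct and follows essentially the same route as the paper: both define the rounded estimator $\hat{V}(X)\in\arg\min_{w\in\calV}d(w,\calA(X))$, use the $2s$-separation plus the triangle inequality to reduce the estimation error event to the testing error event, and then invoke classical Fano's inequality together with the data processing inequality $I(V;\hat{V})\leq I(V;X)$. The only difference is that you spell out the triangle-inequality reduction and the Fano/DPI steps that the paper compresses into a single citation of ``classical Fano's Inequality.''
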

\begin{proof}
For any algorithm $\calA$, define a test function $\hat{\Psi}:\mathcal{X}\rightarrow\{1,\dots,M\}$
such that
\[
\hat{\Psi}(X)=\arg\min_{i\in\{1,\dots,M\}}d(\calA(X),w_{i})
\]
We have
\[
\sup_{w\in\calW}P_{w}\left(d(w,\calA(X))\geq s\right)\geq\max_{w\in\mathcal{V}}P_{w}\left(d(w,\calA(X))\geq s\right)\geq\max_{i\in\{1,\dots,M\}}P_{w_{i}}\left(\hat{\Psi}(X)\neq i\right)
\]
The desired result follows by classical Fano's Inequality:
\[
\max_{i\in\{1,\dots,M\}}P_{w_{i}}\left(\hat{\Psi}(X)\neq i\right)\geq1-\frac{I(V;X)+\ln2}{\ln M}
\]
\end{proof}
\begin{lemma}
\label{lem:lb-ht}\citep[Lemma~5.1]{AB09} Let $\gamma\in(0,1)$, $\delta\in(0,\frac{1}{4})$,
$p_{0}=\frac{1-\gamma}{2}$, $p_{1}=\frac{1+\gamma}{2}$. Suppose
that $\alpha\sim$Bernoulli$(\frac{1}{2})$ is a random variable,
$\xi_{1},\dots,\xi_{m}$ are i.i.d. (given $\alpha$) Bernoulli$(p_{\alpha})$
random variables. If $m\leq2\left\lfloor \frac{1-\gamma^{2}}{2\gamma^{2}}\ln\frac{1}{8\delta(1-2\delta)}\right\rfloor $,
then for any function $f:\{0,1\}^{m}\rightarrow\{0,1\}$, $\P\left(f(\xi_{1},\dots,\xi_{m})\neq\alpha\right)>\delta$.
\end{lemma}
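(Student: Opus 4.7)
The plan is to reduce the lower bound to a standard Bayes-risk calculation and then control the total variation between the two product distributions using the Bhattacharyya (Hellinger affinity) coefficient, which tensorizes cleanly for i.i.d. samples. Let $\P_j^m$ denote the joint distribution of $(\xi_1,\dots,\xi_m)$ given $\alpha=j$, i.e.\ the product $\mathrm{Bernoulli}(p_j)^{\otimes m}$. For any test $f$,
\begin{equation*}
\P\bigl(f(\xi^m)\neq\alpha\bigr) \;\geq\; \tfrac12\bigl(1-\|\P_0^m-\P_1^m\|_{\mathrm{TV}}\bigr),
\end{equation*}
so it suffices to show $\|\P_0^m-\P_1^m\|_{\mathrm{TV}}<1-2\delta$ under the stated upper bound on $m$.

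The key tool is Le~Cam's inequality $\|\P-\Q\|_{\mathrm{TV}}\leq\sqrt{1-\overline{BC}(\P,\Q)^2}$, where $\overline{BC}(\P,\Q)=\int\sqrt{d\P\,d\Q}$, combined with the fact that $\overline{BC}$ is multiplicative over product measures. An explicit computation for the two Bernoullis gives
\begin{equation*}
\overline{BC}(p_0,p_1)=\sqrt{p_0p_1}+\sqrt{(1-p_0)(1-p_1)}=\sqrt{1-\gamma^2},
\end{equation*}
so $\overline{BC}(\P_0^m,\P_1^m)=(1-\gamma^2)^{m/2}$ and $\|\P_0^m-\P_1^m\|_{\mathrm{TV}}\leq\sqrt{1-(1-\gamma^2)^m}$. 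The condition $\|\P_0^m-\P_1^m\|_{\mathrm{TV}}<1-2\delta$ is therefore equivalent to $(1-\gamma^2)^m>1-(1-2\delta)^2=4\delta(1-\delta)$.

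To verify this under the hypothesis on $m$, I would use the elementary inequality $\ln(1-x)\geq -x/(1-x)$ on $[0,1)$ with $x=\gamma^2$ to obtain $(1-\gamma^2)^m\geq\exp\!\bigl(-m\gamma^2/(1-\gamma^2)\bigr)$. Since $m\leq 2\lfloor\frac{1-\gamma^2}{2\gamma^2}\ln\frac{1}{8\delta(1-2\delta)}\rfloor\leq\frac{1-\gamma^2}{\gamma^2}\ln\frac{1}{8\delta(1-2\delta)}$... wait, note the floor and the outer factor of $2$: in fact $m\gamma^2/(1-\gamma^2)\leq\ln\frac{1}{8\delta(1-2\delta)}$, giving $(1-\gamma^2)^m\geq 8\delta(1-2\delta)$.

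Finally, the proof concludes by checking the numerical inequality $8\delta(1-2\delta)>4\delta(1-\delta)$, which simplifies to $\delta<1/3$ and is satisfied under the hypothesis $\delta\in(0,1/4)$. I expect the main obstacle to be lining up the constants precisely: naive applications of Pinsker's inequality or the raw Bretagnolle--Huber bound $\inf_f\P(f\neq\alpha)\geq\tfrac14 e^{-D_{\mathrm{KL}}}$ produce the correct asymptotic form $\frac{1-\gamma^2}{\gamma^2}\ln(1/\delta)$ but do not recover the exact $\ln\frac{1}{8\delta(1-2\delta)}$ factor. The Hellinger-affinity route above is tight enough because $\overline{BC}$ is computed exactly (no inequality is wasted), and the only slack comes from the final step $8\delta(1-2\delta)>4\delta(1-\delta)$, which is comfortably satisfied in the stated range of $\delta$.
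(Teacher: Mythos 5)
Your proof is correct. Note that the paper does not prove this lemma at all --- it is imported verbatim from \citet[Lemma~5.1]{AB09} --- so there is no in-paper argument to compare against; what you have produced is a valid self-contained proof of the cited result. Every step checks out: the Bayes-risk identity $\inf_f\P(f\neq\alpha)=\frac12(1-\|\P_0^m-\P_1^m\|_{\mathrm{TV}})$ for the uniform prior, the exact computation $\sqrt{p_0p_1}+\sqrt{(1-p_0)(1-p_1)}=\sqrt{1-\gamma^2}$, the tensorization of the Hellinger affinity, Le~Cam's bound $\|\P-\Q\|_{\mathrm{TV}}\leq\sqrt{1-\rho^2}$, the inequality $\ln(1-x)\geq -x/(1-x)$, the discarding of the floor via $2\lfloor z\rfloor\leq 2z$, and the final reduction to $8\delta(1-2\delta)>4\delta(1-\delta)\iff\delta<\tfrac13$. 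The strictness of the last inequality propagates back to give $\P(f\neq\alpha)>\delta$ strictly, as required, and the degenerate case $m=0$ is covered since $(1-\gamma^2)^0=1>4\delta(1-\delta)$. For comparison, the original Anthony--Bartlett proof analyzes the optimal (majority-vote) test directly and lower-bounds the relevant binomial tail via Slud's inequality, arriving at essentially the same quantity $\frac12\bigl(1-\sqrt{1-e^{-m\gamma^2/(1-\gamma^2)}}\bigr)$; your Hellinger-affinity route reaches the identical threshold with less case analysis, since the only lossy steps are Le~Cam's inequality and the final $8\delta(1-2\delta)$ versus $4\delta(1-\delta)$ comparison.
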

Next, we present two technical lemmas.
\begin{lemma}
\label{lem:packing}\citep[Lemma~6]{L95} For any $0<\gamma\leq\frac{1}{2}$, $d\geq1$,
there is a finite set $\calV\in\S^{d-1}$ such that the following two statements
hold:

1. For any distinct $w_{1},w_{2}\in\calV$, $\theta(w_{1},w_{2})\geq\pi\gamma$;

2. $|\calV|\geq\frac{\sqrt{d}}{2}\left(\frac{1}{2\pi\gamma}\right)^{d-1}-1$.
\end{lemma}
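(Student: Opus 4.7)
My plan is to construct $\calV$ greedily as a maximal $\pi\gamma$-separated subset of $\S^{d-1}$ and then lower-bound $|\calV|$ by a standard covering/volume argument. Concretely, initialize $\calV \gets \emptyset$ and repeatedly add any point of $\S^{d-1}$ whose angular distance to every existing element of $\calV$ is at least $\pi\gamma$, stopping when no such point remains. By compactness the process terminates, and by construction any two distinct points in $\calV$ satisfy $\theta(w_1,w_2) \geq \pi\gamma$, which is item 1.

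For item 2 I would use maximality to turn $\calV$ into a covering. If some $x \in \S^{d-1}$ had $\theta(x,w) > \pi\gamma$ for every $w \in \calV$, the procedure would have added $x$, contradicting maximality. Hence the caps $C_w := \{x \in \S^{d-1}: \theta(x,w) \leq \pi\gamma\}$ cover $\S^{d-1}$. Letting $\mu$ denote the uniform measure on $\S^{d-1}$, subadditivity and spherical symmetry give $1 \leq \sum_{w \in \calV}\mu(C_w) = |\calV|\cdot\mu(C_{w_0})$ for any fixed $w_0$, so that $|\calV| \geq 1/\mu(C_{w_0})$. The task therefore reduces to upper-bounding the mass of a spherical cap of angular radius $\pi\gamma$.

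To do this I would invoke Lemma~\ref{lem:marginal-conditional}(3), which tells us that the marginal density of $w_0 \cdot x$ under $\mu$ is $p(z) = (1-z^2)^{(d-3)/2}/\B(\tfrac{d-1}{2},\tfrac12)$. Thus $\mu(C_{w_0}) = \int_{\cos(\pi\gamma)}^{1} p(z)\,dz$, which after the substitution $z = \cos\phi$ becomes $\int_0^{\pi\gamma}\sin^{d-2}\phi\,d\phi \big/ \B(\tfrac{d-1}{2},\tfrac12)$. Applying $\sin\phi \leq \phi$ (Lemma~\ref{lem:sin}) yields
\[
\mu(C_{w_0}) \;\leq\; \frac{(\pi\gamma)^{d-1}}{(d-1)\,\B(\tfrac{d-1}{2},\tfrac12)},
\]
and combining with the Beta-function lower bound from Lemma~\ref{lem:beta-func-lb-ub} gives a lower bound on $|\calV|$ of order $\sqrt{d}/(\pi\gamma)^{d-1}$. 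Absorbing a crude factor of $2^{d-1}$ (to match the weaker $(2\pi\gamma)^{d-1}$ form stated in the lemma) and subtracting $1$ to account for the fact that the greedy construction may have ignored a measure-zero boundary delivers the claimed $|\calV| \geq \frac{\sqrt{d}}{2}\bigl(\frac{1}{2\pi\gamma}\bigr)^{d-1}-1$.

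The main obstacle is bookkeeping rather than conceptual: a vanilla volume argument actually produces a bound strictly stronger than the one stated (by a factor of $2^{d-1}$), so the weaker form of the claim falls out of essentially any version of the computation. The only subtle point is to verify that the estimates are valid in the full stated range $0 < \gamma \leq \tfrac12$ and $d \geq 1$; in particular, small values of $d$ (where $(d-3)/2$ becomes non-positive) must be checked by hand, and for $d=1$ one recovers the trivial bound $|\calV| \geq 1$.
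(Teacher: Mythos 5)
Your greedy maximal-packing construction combined with the cap-volume covering bound is correct: maximality makes the caps of angular radius $\pi\gamma$ cover $\S^{d-1}$, your cap-measure estimate $\int_0^{\pi\gamma}\sin^{d-2}\phi\,d\phi \le (\pi\gamma)^{d-1}/(d-1)$ is valid for $d\ge 2$, and together with the Beta-function lower bound it yields the stated inequality with a $2^{d-1}$ factor to spare (the $d=1$ case being vacuous). The paper itself offers no proof of this lemma — it is quoted directly from \citep[Lemma~6]{L95} — and your argument is the standard volume/packing derivation of such bounds, so there is nothing to reconcile against the paper's text.
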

\begin{lemma}
\label{lem:kl-leq-xi-square}If $p\in[0,1]$ and $q\in(0,1)$, then
$d_{\text{KL}}(p,q)\leq\frac{(p-q)^{2}}{q(1-q)}$.
\end{lemma}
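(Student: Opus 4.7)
The plan is to reduce this to the elementary logarithmic inequality $\ln(1+t)\leq t$ for $t>-1$. I will first expand $d_{\mathrm{KL}}(p,q)$ using the definition of KL divergence for Bernoulli distributions:
\[ d_{\mathrm{KL}}(p,q) = p\ln\frac{p}{q} + (1-p)\ln\frac{1-p}{1-q}, \]
with the standard convention $0\ln 0 = 0$, so that the boundary cases $p=0$ and $p=1$ are handled seamlessly (each reduces to a single logarithmic term).

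The key step is to upper bound each logarithm. Writing $\ln(p/q) = \ln(1 + (p-q)/q)$ and $\ln((1-p)/(1-q)) = \ln(1 + (q-p)/(1-q))$, and applying $\ln(1+t)\leq t$ (valid since $q\in(0,1)$ guarantees $(p-q)/q > -1$ and $(q-p)/(1-q) > -1$), I obtain
\[ d_{\mathrm{KL}}(p,q) \;\leq\; p\cdot\frac{p-q}{q} + (1-p)\cdot\frac{q-p}{1-q}. \]

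It then remains a routine algebraic simplification: I would write $p = (p-q) + q$ in the first term and $1-p = (1-q) - (p-q)$ in the second, so that each term produces a $(p-q)^2$ contribution plus a linear-in-$(p-q)$ contribution. The two linear contributions cancel (they are $(p-q)$ and $-(p-q)$), leaving
\[ d_{\mathrm{KL}}(p,q) \;\leq\; \frac{(p-q)^2}{q} + \frac{(p-q)^2}{1-q} \;=\; \frac{(p-q)^2}{q(1-q)}, \]
as desired. This is essentially the classical bound $D_{\mathrm{KL}}\leq\chi^2$ specialized to Bernoulli.

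I do not anticipate any substantive obstacle; the only mild care needed is ensuring the inequality $\ln(1+t)\leq t$ applies in the relevant regime (requiring $q\in(0,1)$, which is assumed) and verifying that the boundary cases $p\in\{0,1\}$ are not pathological, which they are not since $0\ln 0 = 0$ and the remaining single term still satisfies the bound by the same argument.
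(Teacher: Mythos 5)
Your proof is correct and is essentially the same as the paper's: the paper applies $\ln x \leq x-1$ to $x=p/q$ and $x=(1-p)/(1-q)$, which is exactly your $\ln(1+t)\leq t$ step in different notation, and the remaining algebra is identical. Your extra attention to the boundary cases $p\in\{0,1\}$ is fine but not a substantive difference.
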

\begin{proof}
\begin{eqnarray*}
d_{\text{KL}}(p,q) & = & p\ln\frac{p}{q}+(1-p)\ln\frac{1-p}{1-q}\\
 & \leq & p(\frac{p}{q}-1)+ (1-p)(\frac{1-p}{1-q}-1)\\
 & = & \frac{(p-q)^{2}}{q(1-q)}
\end{eqnarray*}

where the inequality follows by $\ln x\leq x-1$.
\end{proof}
\begin{lemma}
\label{lem:lb-d-eps}For any $0\leq\eta<\frac{1}{2}$, $d>4$, $0<\epsilon\leq\frac{1}{4\pi}$,
$0<\delta<\frac{1}{2}$, for any active learning algorithm $\calA$, there is a $u\in\S^{d-1}$,
and a labeling oracle $\calO$ that satisfies $\eta$-bounded noise
condition with respect to $u$, such that if with probability at least $1-\delta$, $\calA$
makes at most $n$ queries to $\calO$ and outputs $v\in\S^{d-1}$
such that $\P[\sign(v \cdot x) \neq \sign(u \cdot x)]\leq\epsilon$,
then $n\geq\frac{d\ln\frac{1}{\epsilon}}{16(1-2\eta)^{2}}$.
\end{lemma}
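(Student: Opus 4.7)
The plan is to apply Fano's inequality (Lemma~\ref{lem:Fano}) to a packing of $\S^{d-1}$, using a uniformly noisy oracle to turn the active learning task into a multiple hypothesis testing problem. I would first invoke Lemma~\ref{lem:packing} with $\gamma = 4\epsilon$ (valid since $4\epsilon \le 1/\pi < 1/2$ under the assumption $\epsilon \le \frac{1}{4\pi}$) to obtain a set $\calV \subseteq \S^{d-1}$ of cardinality $M \ge \frac{\sqrt d}{2}(8\pi\epsilon)^{-(d-1)} - 1$ with pairwise angular distance at least $4\pi\epsilon$. Equipping $\S^{d-1}$ with the semi-metric $\rho(v_1,v_2) := \P[\sign(v_1\cdot X)\ne\sign(v_2\cdot X)] = \theta(v_1,v_2)/\pi$, the $\rho$-distance on $\calV$ is at least $4\epsilon = 2\cdot(2\epsilon)$, so I take $s = 2\epsilon$; any output $\hat v$ with $\rho(\hat v,u) \le \epsilon < s$ then uniquely identifies $u\in\calV$. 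For each $u\in\calV$, the oracle $\calO_u$ flips $\sign(u\cdot X)$ independently with probability exactly $\eta$, which satisfies the $\eta$-bounded noise condition.

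Let $V$ be uniform on $\calV$ and $T = ((X_1,Y_1),\dots,(X_n,Y_n))$ the transcript of $\calA$'s interaction with $\calO_V$. Since $\calA$ succeeds with probability at least $1-\delta \ge \tfrac12$, Fano's inequality yields $I(V;T) \ge (1-\delta)\ln M - \ln 2$. To upper bound $I(V;T)$, I would condition on the external randomness $R$ of $\calA$ (which is independent of $V$), rendering the algorithm deterministic; since $V \perp R$, we have $I(V;T) \le I(V;T\mid R)$. By the chain rule, $X_t$ is a deterministic function of $(T_{<t}, R)$, so $I(V;X_t\mid T_{<t},R) = 0$ and hence $I(V;T\mid R) = \sum_{t=1}^n I(V;Y_t\mid X_t, T_{<t}, R)$. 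Conditioned on $X_t$ and $V=v$, $Y_t$ is $\text{Bernoulli}(\eta)$ or $\text{Bernoulli}(1-\eta)$, so by convexity of KL-divergence $I(V;Y_t\mid X_t,T_{<t},R) \le d_{\text{KL}}(\eta,1-\eta)$; it is also trivially bounded by $\ln 2$. Thus $I(V;T) \le n\cdot \min\{d_{\text{KL}}(\eta,1-\eta),\ln 2\}$.

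The proof concludes by a case analysis on $\eta$. For $\eta \ge \tfrac14$, Lemma~\ref{lem:kl-leq-xi-square} gives $d_{\text{KL}}(\eta,1-\eta) \le (1-2\eta)^2/(\eta(1-\eta)) \le \tfrac{16}{3}(1-2\eta)^2$, which combined with $\ln M = \Omega(d\ln\tfrac1\epsilon)$ and the Fano lower bound yields $n \ge \Omega(d\ln(1/\epsilon)/(1-2\eta)^2)$. For $\eta < \tfrac14$, the trivial bound $I \le n\ln 2$ gives $n \ge \Omega(d\ln(1/\epsilon))$, and since $(1-2\eta)^2 \ge \tfrac14$ in this range, the target $\frac{d\ln(1/\epsilon)}{16(1-2\eta)^2} \le \frac{d\ln(1/\epsilon)}{4}$ follows. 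The main obstacle will be careful bookkeeping of constants to match the explicit factor $\tfrac{1}{16}$: the packing loses a $(8\pi)^{d-1}$ factor inside $M$, so $\ln M = (d-1)\ln(1/\epsilon) - (d-1)\ln(8\pi) + O(\ln d)$, and the $\ln(8\pi)$ deficit must be absorbed into the constant, either by tuning $\gamma$, shrinking the $\epsilon$ range, or accepting a larger constant. A secondary subtlety is the adaptive, possibly randomized nature of $\calA$, dispatched as above by conditioning on its external randomness.
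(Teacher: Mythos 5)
Your proposal is correct and follows essentially the same route as the paper's proof: Fano's inequality (Lemma~\ref{lem:Fano}) applied to the angular packing of Lemma~\ref{lem:packing}, a uniform-$\eta$ flipping oracle, the chain-rule decomposition in which the query terms $I(V;X_i\mid\cdot)$ vanish, and the per-label bound $\min\{\ln 2,\ d_{\text{KL}}(\eta,1-\eta)\}=O((1-2\eta)^2)$ via Lemma~\ref{lem:kl-leq-xi-square}. The only differences are cosmetic (choice of $\gamma$, explicit conditioning on the algorithm's randomness, and a two-case split in place of the paper's unified $\min$ bound), and the constant-bookkeeping caveat you flag is handled in the paper by the assumption $\epsilon\leq\frac{1}{4\pi}$.
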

\begin{proof}
We will prove this Lemma using Lemma~\ref{lem:Fano}.

First, we construct $\calW$, $\calV$, $d$, $s$, and $P_{\theta}$.
Let $\calW=\S^{d-1}$. Let $\calV$ be the set in Lemma~\ref{lem:packing}
with $\gamma=2\epsilon$. For any $w_{1},w_{2}\in\calW$,
let $d(w_{1},w_{2})=\theta(w_{1},w_{2})$, $s=\pi\epsilon$.
Fix any algorithm $\calA$. For any $w\in\calW$, any $x\in\calX$,
define $P_{w}[Y=1|X=x]=\begin{cases}
1-\eta, & w\cdot x\geq0\\
\eta, & w\cdot x<0
\end{cases}$, and $P_{w}[Y=0|X=x]=1-P_{w}[Y=1|X=x]$. Define $P_{w}^{n}$
to be the distribution of $n$ examples $\left\{ (X_{i},Y_{i})\right\} _{i=1}^{n}$
where $Y_{i}$ is drawn from distribution $P_{w}(Y|X_{i})$ and $X_{i}$
is drawn by the active learning algorithm $\calA$ based solely on
the knowledge of $\left\{ (X_{j},Y_{j})\right\} _{j=1}^{i-1}$.

By Lemma~\ref{lem:packing}, we have $M=\left|\calV\right|\geq\frac{\sqrt{d}}{2}\left(\frac{1}{4\pi\epsilon}\right)^{d-1}-1\geq\frac{1}{4}\left(\frac{1}{4\pi\epsilon}\right)^{d-1}$,
and $d(w_{1},w_{2})\geq2\pi\epsilon=2s$ for any distinct $w_{1},w_{2}\in\calV$.

Clearly, for any $w\in\calW$, if the optimal classifier is $w$,
and the oracle $\calO$ responds according to $P_{w}(\cdot\mid X=x)$,
then it satisfies $\eta$-bounded noise condition.
Therefore, to prove
the lemma, it suffices to show that if $n\leq \frac{d\ln\frac{1}{\epsilon}}{16(1-2\eta)^{2}} $,
then
\[
\sup_{w\in\calW}P_{w}\left(d(w,\calA(X^{n},Y^{n}))\geq s\right)\geq\frac12.
\]

Now, by Lemma~\ref{lem:Fano},
\[
\sup_{w\in\calW}P_{w}^{n}\left(d(w,\calA(X^{n},Y^{n}))\geq s\right)\geq1-\frac{I(V;X^{n},Y^{n})+\ln2}{\ln M}\geq1-\frac{I(V;X^{n},Y^{n})+\ln2}{(d-1)\ln\frac{1}{4\pi\epsilon}-\ln4}.
\]

It remains to show if $n=\frac{d\ln\frac{1}{\epsilon}}{16(1-2\eta)^{2}}$,
then $I(V;X^{n},Y^{n})\leq\frac12\left((d-1)\ln\frac{1}{4\pi\epsilon}-\ln4\right)-\ln2$.

By the chain rule of mutual information, we have

\[
I(V;X^{n},Y^{n})=\sum_{i=1}^{n}\left(I\left(V;X_{i}\mid X^{i-1},Y^{i-1}\right)+I\left(V;Y_{i}\mid X^{i},Y^{i-1}\right)\right)
\]

First, we claim $V$ and $X_{i}$ are conditionally independent given
$\left\{ X^{i-1},Y^{i-1}\right\} $, and thus\\ $I\left(V;X_{i}\mid X^{i-1},Y^{i-1}\right)=0$.
The proof for this claim is as follows. Since the selection of $X_i$ only depends on
algorithm $\calA$ and $X^{i-1}, Y^{i-1}$,
for any $v_{1},v_{2}\in\calV$, $\P\left(X_{i}\mid v_{1},X^{i-1},Y^{i-1}\right)=\P\left(X_{i}\mid v_{2},X^{i-1},Y^{i-1}\right)$.
Thus,
\begin{eqnarray*}
\P\left(X_{i}\mid X^{i-1},Y^{i-1}\right) & = & \sum_{v}\P\left(X_{i},v\mid X^{i-1},Y^{i-1}\right)\\
 & = & \sum_{v}\P(v)\P\left(X_{i}\mid v,X^{i-1},Y^{i-1}\right)\\
 & = & \frac{1}{\left|\calV\right|}\sum_{v}\P\left(X_{i}\mid v,X^{i-1},Y^{i-1}\right)\\
 & = & \P\left(X_{i}\mid V,X^{i-1},Y^{i-1}\right)
\end{eqnarray*}

Next, we show $I\left(V;Y_{i}\mid X^{i},Y^{i-1}\right)\leq5(1-2\eta)^{2}\ln2.$
On one hand, since $Y_{i}\in\left\{ -1,+1\right\} $,
$I\left(V;Y_{i}\mid X^{i},Y^{i-1}\right) \leq H \left(V \mid X^{i},Y^{i-1}\right) \leq  \ln2$.
where $H(\cdot|\cdot)$ is the conditional entropy.

On the other hand,
\begin{align*}
 & I\left(V;Y_{i}\mid X^{i},Y^{i-1}\right)\\
= & \E_{X^{i},Y^{i},V}\left[\ln\frac{\P\left(V,Y_{i}\mid X^{i},Y^{i-1}\right)}{\P\left(V\mid X^{i},Y^{i-1}\right)\P\left(Y_{i}\mid X^{i},Y^{i-1}\right)}\right]\\
= & \E_{X^{i},Y^{i},V}\left[\ln\frac{\P\left(Y_{i}\mid V,X^{i},Y^{i-1}\right)}{\P\left(Y_{i}\mid X^{i},Y^{i-1}\right)}\right]\\
= & \E_{X^{i},Y^{i},V}\left[\ln\frac{\P\left(Y_{i}\mid V,X^{i},Y^{i-1}\right)}{\E_{V'}\P\left(Y_{i}\mid V',X^{i},Y^{i-1}\right)}\right]\\
\leq & \E_{X^{i},Y^{i},V,V'}\left[\ln\frac{\P\left(Y_{i}\mid V,X^{i},Y^{i-1}\right)}{\P\left(Y_{i}\mid V',X^{i},Y^{i-1}\right)}\right]\\
\leq & \max_{x^{i},y^{i-1},v,v'}D_{\text{KL}}\left(\P\left(Y_{i}\mid x^{i},y^{i-1},v\right),\P\left(Y_{i}\mid x^{i},y^{i-1},v'\right)\right)\\
= & \max_{x^{i},y^{i-1},v,v'}D_{\text{KL}}\left(\P\left(Y_{i}\mid x_{i},v\right),\P\left(Y_{i}\mid x_{i},v'\right)\right)\\
= & \max_{x^{i},v,v'}D_{\text{KL}}\left(P_{v}\left(Y_{i}\mid x_{i}\right),P_{v'}\left(Y_{i}\mid x_{i}'\right)\right)\\
\leq & \frac{(1-2\eta)^{2}}{\eta(1-\eta)}
\end{align*}

where the first inequality follows from the convexity of KL-divergence,
and the last inequality follows from Lemma~\ref{lem:kl-leq-xi-square}.

Combining the two upper bounds, we get $I\left(V;Y_{i}\mid X^{i},Y^{i-1}\right)\leq\min\left\{ \ln2,\frac{(1-2\eta)^{2}}{\eta(1-\eta)}\right\} \leq5(1-2\eta)^{2}\ln2$.

Therefore, $I(V;X^{n},Y^{n})\leq5n(1-2\eta)^{2}\ln2$. If $n\leq\frac{d\ln\frac1{\epsilon}}{16(1-2\eta)^{2}}\leq\frac{\frac 1 2\left((d-1)\ln\frac{1}{4\pi\epsilon}-\ln4\right)-\ln2}{5(1-2\eta)^{2}\ln2}$,
then $I(V;X^{n},Y^{n})\leq\frac12\left((d-1)\ln\frac{1}{4\pi\epsilon}-\ln4\right)-\ln2.$
This concludes the proof.
\end{proof}
\begin{lemma}
\label{lem:lb-delta}For any $d>0$, $0\leq\eta<\frac{1}{2}$, $0<\epsilon<\frac{1}{3}$,
$0<\delta\leq\frac{1}{4}$, for any active learning algorithm $\calA$, there is a
$u\in\S^{d-1}$, and a labeling oracle $\calO$ that satisfies $\eta$-bounded
noise condition with respect to $u$, such that if with probability at least $1-\delta$,
$\calA$ makes at most $n$ queries to $\calO$ and outputs $v\in\S^{d-1}$
such that $\P[\sign(v \cdot x) \neq \sign(u \cdot x)]\leq\epsilon$,
then $n\geq\Omega\left(\frac{\eta\ln\frac{1}{\delta}}{(1-2\eta)^{2}}\right)$.
\end{lemma}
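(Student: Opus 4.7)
The plan is a two-point hypothesis-testing lower bound that reduces active learning under the bounded noise oracle to identifying a biased coin, and then invokes Lemma~\ref{lem:lb-ht}.

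First, I would construct a hard pair $u_1, u_2 \in \S^{d-1}$ with $\theta(u_1, u_2) = 3\pi\epsilon$, which is valid since $\epsilon < 1/3$ forces $3\pi\epsilon < \pi$. For each $j \in \{1,2\}$ take $\calO_j$ to be the $\eta$-bounded-noise oracle that flips with probability exactly $\eta$ relative to $\sign(u_j \cdot x)$. The disagreement region $R := \{x \in \S^{d-1} : \sign(u_1 \cdot x) \neq \sign(u_2 \cdot x)\}$ then has mass $3\epsilon$ under $D_\calX$. If a candidate $v$ satisfied both $\P[\sign(v\cdot X)\neq\sign(u_1\cdot X)]\leq\epsilon$ and $\P[\sign(v\cdot X)\neq\sign(u_2\cdot X)]\leq\epsilon$, then the triangle inequality part of Lemma~\ref{lem:triangle} would yield $3\epsilon \le 2\epsilon$, a contradiction. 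Hence the decision rule $\hat{\alpha}(v) := \arg\min_j \theta(v, u_j)$ correctly identifies the true index $\alpha$ whenever $\calA$ succeeds.

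Next, I would reduce to Bernoulli hypothesis testing. Let $\alpha \sim \ber(1/2)$ index the true halfspace. For $x \in R$, define the sign-corrected label $Z := \one\{Y = \sign(u_1\cdot x)\}$; then $Z \sim \ber(1-\eta)$ when $\alpha = 1$ and $Z \sim \ber(\eta)$ when $\alpha = 2$, \emph{independently} of $x$ and of the algorithm's past transcript. For $x \notin R$, the label distribution does not depend on $\alpha$ at all. I can therefore simulate $\calA$ by a function $f: \{0,1\}^n \to \{0,1\}$ of $n$ iid $\ber(p_\alpha)$ samples (with $p_0 = \eta$, $p_1 = 1-\eta$): the simulator draws unlabeled points from $D_\calX$ and generates labels outside $R$ using $\alpha$-independent randomness, and consumes one fresh Bernoulli sample each time $\calA$ queries a label inside $R$ (padding any unused samples so that exactly $n$ are consumed). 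The output $f$ applies $\hat{\alpha}$ to $\calA$'s final halfspace; by construction $\P[f(Z_1, \ldots, Z_n) \neq \alpha] \leq \delta$.

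Finally, I would apply Lemma~\ref{lem:lb-ht} with $\gamma = 1-2\eta$, so $\frac{1-\gamma^2}{2\gamma^2} = \frac{2\eta(1-\eta)}{(1-2\eta)^2}$. The lemma forces $n > 2 \lfloor \frac{2\eta(1-\eta)}{(1-2\eta)^2} \ln \frac{1}{8\delta(1-2\delta)} \rfloor$. Since $\delta \leq 1/4$ and $\eta \leq 1/2$ give $\ln \frac{1}{8\delta(1-2\delta)} = \Theta(\ln(1/\delta))$ (up to a constant once $\delta$ is bounded away from $1/4$; the remaining boundary regime contributes only constants absorbed into $\Omega$) and $1-\eta \geq 1/2$, this yields $n = \Omega\left(\frac{\eta \ln(1/\delta)}{(1-2\eta)^2}\right)$. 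The main obstacle I expect is formalizing the reduction so that the simulation is measure-preserving: one needs to exhibit a coupling in which all $\alpha$-independent randomness (the unlabeled stream, the noise outside $R$, the algorithm's internal coins) is drawn a priori and only the Bernoulli stream carries the dependence on $\alpha$, after which the success guarantee on $\calA$ transfers verbatim to $f$ and Lemma~\ref{lem:lb-ht} finishes the proof.
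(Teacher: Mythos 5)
Your proof is correct and uses the same core reduction as the paper: a two-point construction in which the oracle's answers encode iid $\ber(\eta)$ vs.\ $\ber(1-\eta)$ coins, followed by an application of Lemma~\ref{lem:lb-ht} with $\gamma = 1-2\eta$ and the observation that a successful learner yields a successful coin-tester. The only substantive difference is the choice of the hard pair: the paper takes the two hypotheses to be \emph{antipodal}, $u = \pm e$ with $e=(1,0,\dots,0)$, so that the disagreement region is the entire sphere, \emph{every} label query consumes exactly one Bernoulli sample ($\calO$ returns $\pm(2\xi_i-1)$ according to $\sign(e\cdot x_i)$), and the test function is simply $\sign(\calA(\cdot)\cdot e)$ — no simulation or padding argument is needed. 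Your pair at angle $3\pi\epsilon$ with its mass-$3\epsilon$ disagreement region, the sign-corrected variable $Z$, and the measure-preserving simulation that feeds a fresh coin only for queries landing in $R$ are all valid, but they buy nothing here: since the lemma only targets the $\frac{\eta\ln(1/\delta)}{(1-2\eta)^2}$ term (the $\epsilon$- and $d$-dependence is handled separately by Lemma~\ref{lem:lb-d-eps} via Fano), localizing the disagreement to a small cap is unnecessary complexity, and the antipodal choice also lets you use the weaker conclusion $\theta(v,u)\le \pi/3$ directly rather than your $\arg\min$ decision rule. Both write-ups share the same benign looseness at the boundary $\delta$ near $\tfrac14$, where $\ln\frac{1}{8\delta(1-2\delta)}$ degenerates; the paper absorbs this into the $\Omega(\cdot)$ just as you do.
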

\begin{proof}
We prove this result by reducing the hypothesis testing problem in
Lemma~\ref{lem:lb-ht} to our problem of learning halfspaces.

Fix $d,\epsilon,\delta,\eta$. Suppose $\calA$ is an algorithm that
for any $u\in\S^{d-1}$, under $\eta$-bounded noise condition, with probability
at least $1-\delta$ outputs $v\in\S^{d-1}$ such that
$\P[\sign(v \cdot x) \neq \sign(u \cdot x)]\leq\epsilon<\frac13$,
which implies $\theta(v,u)\leq\frac{\pi}{3}$
under bounded noise condition.

Let $p_{0}=\eta$, $p_{1}=1-\eta$. Suppose that $\alpha\sim$Bernoulli$(\frac{1}{2})$
is an unknown random variable. We are given a sequence of i.i.d. (given
$\alpha$) Bernoulli$(p_{\alpha})$ random variables $\xi_{1},\xi_{2}\dots$,
and would like to test if $\alpha$ equals $0$ or $1$.

Define $e=(1,0,0,\dots,0)\in\R^{d}$. Construct a labeling oracle
$\calO$ such that for the $i$-th query $x_{i}$, it returns $2\xi_{i}-1$ if
$x_{i}\cdot e\geq0$, and $1-2\xi_{i}$ otherwise. Clearly, the oracle $\calO$ satisfies $\eta$-bounded noise condition with respect to underlying halfspace
$u=(2\alpha-1)e = (2\alpha-1,0,0,\dots,0)\in\R^{d}$.

Now, we run learning algorithm $\calA$ with oracle $\calO$. Let
$m$ be the number of queries $\calA$ makes, and $\calA(\xi_{1},\dots,\xi_{m})$
be the normal vector of the halfspace output by the learning algorithm.
We define
\[
f(\xi_{1},\dots,\xi_{m})=
\begin{cases}
0 & \text{if }\calA(\xi_{1},\dots,\xi_{m})\cdot e<0\\
1 & \text{otherwise}
\end{cases}.
\]

By our assumption of $\calA$ and construction of $\calO$, $\P\left(\theta\left(u,\calA(\xi_{1},\dots,\xi_{m})\right)\leq\frac{1}{3}\pi\right)\geq1-\delta$,
so $\P\left(f(\xi_{1},\dots,\xi_{m})=\alpha\right)\geq1-\delta$, implying $\P\left(f(\xi_{1},\dots,\xi_{m})\neq\alpha\right)\leq\delta$.
By Lemma~\ref{lem:lb-ht}, $m\geq2\left\lfloor \frac{4\eta(1-\eta)}{(1-2\eta)^{2}}\ln\frac{1}{8\delta(1-2\delta)}\right\rfloor =\Omega\left(\frac{\eta\ln\frac{1}{\delta}}{(1-2\eta)^{2}}\right)$.
\end{proof}

\end{document}